\definecolor{papercolor}{HTML}{0668E1}
\definecolor{darkred}{rgb}{0.68,0.05,0.0}
\colorlet{alternateRowColor}{papercolor!5}
\newcommand{\coloredBelowRuleSep}[1]{
    \arrayrulecolor{#1}
    \specialrule{\belowrulesep}{0pt}{0pt}
    \arrayrulecolor{black}
}
\newcommand{\coloredMidrule}[2]{
    \arrayrulecolor{#1}
    \specialrule{\aboverulesep}{0pt}{0pt}
    \arrayrulecolor{black}
    \specialrule{\lightrulewidth}{0pt}{0pt}
    \coloredBelowRuleSep{#2}
}
\def\eqref#1{equation~\ref{#1}}
\def\1{\bm{1}}
\DeclareMathAlphabet{\mathsfit}{\encodingdefault}{\sfdefault}{m}{sl}
\SetMathAlphabet{\mathsfit}{bold}{\encodingdefault}{\sfdefault}{bx}{n}
\DeclareMathOperator*{\argmin}{arg\,min}
\newcommand{\method}{\textsc{ICRM}}
\newcommand{\methodfull}{In-Context Risk Minimization}
\newcommand{\Eall}{\mathcal{E}}
\newcommand{\Etr}{\mathcal{E}_\text{tr}}
\newcommand{\Ete}{\mathcal{E}_\text{te}}
\newtheorem{theorem}{Theorem}
\newtheorem{proposition}{Proposition}
\newtheorem{assumption}{Assumption}
\newtheorem{definition}{Definition}
\newtheorem{lemma}{Lemma}
\begin{document}
\begin{center}
\begin{LARGE}
  \textbf{Context is Environment}
\end{LARGE}
\vskip 1cm
\begin{minipage}[b]{3.5cm}
  \centering
  \textbf{Sharut Gupta\thanks{}}\\
  Meta AI, MIT CSAIL\\
  \texttt{sharut@mit.edu}
\end{minipage}
\hskip .75cm
\begin{minipage}[b]{3cm}
  \centering
  \textbf{Stefanie Jegelka}\\
  MIT CSAIL\\
  \texttt{stefje@mit.edu}
\end{minipage}
\hskip .75cm
\begin{minipage}[b]{6cm}
  \centering
  \textbf{David Lopez-Paz, Kartik Ahuja}\\
  Meta AI\\
  \texttt{\{dlp,kartikahuja\}@meta.com}
\end{minipage}

\vskip 1cm
\end{center}

\begin{abstract}
  Two lines of work are taking the central stage in AI research.
  On the one hand, the community is making increasing efforts to build models that discard spurious correlations and generalize better in novel test environments.
  Unfortunately, the bitter lesson so far is that no proposal convincingly outperforms a simple empirical risk minimization baseline.
  On the other hand, large language models (LLMs) have erupted as algorithms able to learn \emph{in-context}, generalizing on-the-fly to eclectic contextual circumstances that users enforce by means of prompting.
  In this paper, we argue that \emph{context is environment}, and posit that in-context learning holds the key to better domain generalization.
  Via extensive theory and experiments, we show that paying attention to context---unlabeled examples as they arrive---allows 
  our proposed In-Context Risk Minimization (ICRM) algorithm
  to \emph{zoom-in} on the test environment risk minimizer, leading to significant out-of-distribution performance improvements.
  From all of this, two messages are worth taking home.
  Researchers in domain generalization should consider \emph{environment as context}, and harness the adaptive power of in-context learning.
  Researchers in LLMs should consider \emph{context as environment}, to better structure data towards generalization.
\end{abstract}

\footnotetext{Most of the work done during an internship at Meta AI (FAIR), Paris.}
\section{Introduction}

One key problem in AI research is to build systems that generalize across a wide range of test environments.
In principle, these algorithms should discard spurious correlations present only in certain training environments, and capture invariant patterns appearing across conditions. 
For example, we would like to build self-driving systems that, while trained on data from environments with varying weather conditions, traffic conditions, and driving rules, can perform satisfactorily in completely new environments.
Unfortunately, this has so far been a far cry: models trained catastrophically fail to generalize to unseen weather conditions~\citep{lechner2022all}.
Despite its importance, how to perform well beyond the distribution of the training data remains a burning question.
In fact, entire research groups are devoted to study generalization, major international conferences offer well-attended workshops dedicated to the issue~\citep{scis}, and news articles remind us of the profound societal impact from failures of ML systems~\citep{compas}.

Research efforts have so far produced domain generalization algorithms that fall into one out of two broad categories.
On the one hand, invariance proposals~\citep{dann,  peters2016causal, irm}, illustrated in~\Cref{figure:main:a}, discard all environment-specific information, thus removing excessive signal about the problem.
On the other hand, marginal transfer proposals~\citep{marginal, adabn, arm, contextvit}, also illustrated in~\Cref{figure:main:b}, summarize observed inputs in each environment as a coarse embedding, diluting important signal at the example level.
So far, the bitter lesson is that no algorithm geared towards out-of-distribution generalization outperforms a simple empirical risk minimization (ERM) baseline when evaluated across standard real-world benchmarks~\citep{domainbed, gagnon-audet2023woods, yao2022wild}.
Has the generalization project hit a dead end?

\begin{figure}
  \centering
    \begin{subfigure}[b]{0.3\textwidth}
    \centering
    \resizebox{\textwidth}{!}{%
    \begin{tikzpicture}
      \tikzset{font=\footnotesize}
      \node[draw=black, inner sep=0, minimum height=2em, minimum width=2em] (C1) at (0, 0) {$x^e_{1}$};
      \node[inner sep=0, minimum height=2em, minimum width=2em] (C2) at (1, 0) {$\cdots$};
      \node[draw=black, inner sep=0, minimum height=2em, minimum width=2em, fill=papercolor!50] (C3) at (2, 0) {$x^e_{i-1}$};
      \node[draw=black, inner sep=0, minimum height=2em, minimum width=2em] (X)  at (3, 0) {$x^e_i$};
      \node[draw=black, inner sep=0, minimum height=2em, minimum width=2em] (Y)  at (3, 1.8) {$\hat{y}^e_i$};
      \draw [draw=black] (-0.4, 0.55) rectangle (3.4, 1.25) node[pos=.5] {classifier};
      
      \draw [<-] (Y.south)  -- ++(0, -.2);
      \draw [->] (X.north)  -- ++(0, 0.2);
    \end{tikzpicture}}
    \caption{Invariance DG.}
    \label{figure:main:a}
  \end{subfigure}
  \hfill
  \begin{subfigure}[b]{0.3\textwidth}
    \centering
    \resizebox{\textwidth}{!}{%
    \begin{tikzpicture}
      \tikzset{font=\footnotesize}
      \node[draw=black, inner sep=0, minimum height=2em, minimum width=2em] (C1) at (0, 0) {$x^e_{i,1}$};
      \node[inner sep=0, minimum height=2em, minimum width=2em] (C2) at (1, 0) {$\cdots$};
      \node[draw=black, inner sep=0, minimum height=2em, minimum width=2em, fill=papercolor!50] (C3) at (2, 0) {$x^e_{i-1}$};
      \node[draw=black, inner sep=0, minimum height=2em, minimum width=2em] (X)  at (3, 0) {$x^e_i$};
      \node[draw=black, inner sep=0, minimum height=2em, minimum width=2em, fill=papercolor!15] (C)  at (1, 2) {$\phi^e_i$};
      \node[draw=black, inner sep=0, minimum height=2em, minimum width=2em] (Y)  at (3, 3.8) {$\hat{y}^e_i$};
      \draw [draw=black] (-0.4, 2.55) rectangle (3.4, 3.25) node[pos=.5] {classifier};
      
      \draw [->] (C.north) -- ++(0, 0.2);
      \draw [->] (X.north) -- ++(0, 2.2);
      \draw [<-] (Y.south) -- ++(0, -.2);
      
      \draw [->] (C1.north) -- ++(0, 0.2);
      \draw [->] (C2.north) -- ++(0, 0.2);
      \draw [->] (C3.north) -- ++(0, 0.2);
      \draw [<-] (C.south) -- ++(0, -0.2);

      \draw ($(C1.north west) + (0,.2)$) -- %
            ($(C3.north east) + (0,.2)$) -- %
            ($(C.south east)  - (0,.2)$) -- %
            ($(C.south west)  - (0,.2)$) -- %
            ($(C1.north west) + (0,.2)$);
      
      \node[] (sum)  at (1, 1) {\scalebox{0.7}{$\frac{1}{i-1}\sum\limits_{j=1}^{i-1} \phi(x^e_j)$}};
    \end{tikzpicture}}
    \caption{Marginal transfer DG.}
    \label{figure:main:b}
  \end{subfigure}
  \hfill
  \begin{subfigure}[b]{0.3\textwidth}
    \centering
    \resizebox{\textwidth}{!}{%
    \begin{tikzpicture}
      \tikzset{font=\footnotesize}
      \node[draw=black, inner sep=0, minimum height=2em, minimum width=2em] (C1) at (0, 0) {$x^e_{1}$};
      \node[inner sep=0, minimum height=2em, minimum width=2em] (C2) at (1, 0) {$\cdots$};
      \node[draw=black, inner sep=0, minimum height=2em, minimum width=2em, fill=papercolor!50] (C3) at (2, 0) {$x^e_{i-1}$};
      \node[draw=black, inner sep=0, minimum height=2em, minimum width=2em] (X)  at (3, 0) {$x^e_i$};
      \node[draw=black, inner sep=0, minimum height=2em, minimum width=2em] (Y)  at (3, 1.8) {$\hat{y}^e_i$};
      \draw [draw=black] (-0.4, 0.55) rectangle (3.4, 1.25) node[pos=.5] {transformer};
      
      \draw [<-] (Y.south)  -- ++(0, -.2);
      \draw [->] (C1.north) -- ++(0, 0.2);
      \draw [->] (C2.north) -- ++(0, 0.2);
      \draw [->] (C3.north) -- ++(0, 0.2);
      \draw [->] (X.north)  -- ++(0, 0.2);
    \end{tikzpicture}}
    \caption{In-context DG (ours).}
    \label{figure:main:c}
  \end{subfigure}
  \caption{
    Three frameworks for domain generalization (DG), predicting the target $y^e_i$ from the input $x^e_i$ at test environment $e$.
    Depicted in blue, the last example $x^e_{i-1}$ contains relevant features for the current prediction.
    (a) Invariance DG discards all of the previously observed information from the test environment, removing too much predictive signal.
    (b) Marginal transfer DG summarizes all of the previously observed test inputs as a coarse embedding, diluting predictive signal found at the example level.
    (b) Our in-context DG directly observes all of the previous test inputs, allowing the search of ``needle-in-the-haystack'' signals, such as the relevant one in $x^e_{i-1}$.}
  \label{figure:main}
\end{figure}
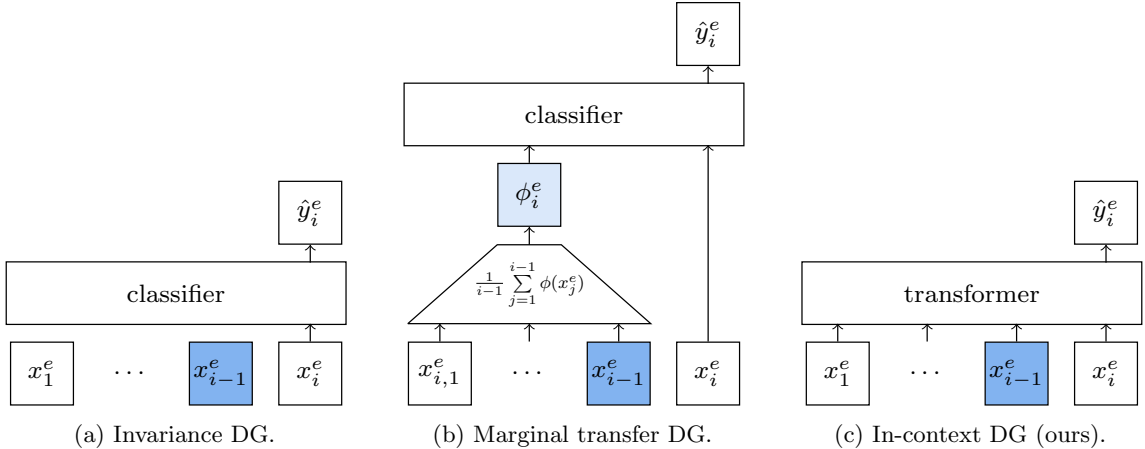

In a parallel strand of research, large language models~\citep[LLMs]{gpt} are taking the world by storm.
LLMs are next-token predictors built with transformers~\citep{attention} and trained on enormous amounts of natural language.
The resulting systems are able to interact with users in the capacity of conversational agents, addressing questions, retrieving facts, summarizing content, drafting emails, and finding bugs in snippets of code.
One impressive feature of LLM systems is their ability to learn \emph{in-context}, that is, to generalize on-the-fly to the eclectic contextual circumstances that users enforce by means of prompting~\citep{brown2020language}.
For example, a good LLM would complete the sequence ``France-Paris Italy-Rome Spain-'' with the sequence ``Madrid,'' effectively learning, from the input itself, that the user is demanding a capital prediction task.
When interacting with large language models, it feels as though we are getting closer to solving the puzzle of out-of-distribution generalization.
Could LLM researchers have found a key piece to this puzzle?

This paper suggests a positive answer, establishing a strong parallel between the concept of \emph{environment} in domain generalization, and the concept of \emph{context} in next-token prediction.
In fact, different environments describe varying contextual circumstances such as time, location, experimental intervention, and other background conditions.
On the one hand, describing \emph{environments as context} opens the door to using powerful next-token predictors off-the-shelf, together with their adaptability to learn in-context, to address domain generalization problems. This allows us to move from coarse domain indices to fine and compositional contextual descriptions, and amortize learning across similar environments.  On the other hand, using \emph{context as environment} can help LLM researchers to use successful domain generalization methods such as distributionally robust optimization ~\citep[DRO]{sagawa2019distributionally,xie2023doremi} across varying contexts.

Based on these insights, we propose a rather natural algorithm, In-Context Risk Minimization (ICRM) as illustrated in \Cref{figure:main:c}.
Given examples $(x^e_i, y^e_i)$ from environment $e$, we propose to address \emph{out}-of-distribution prediction as \emph{in}-distribution next-token prediction, training a machine:
\begin{equation}
  y^e_i \approx h(x^e_i; \underbrace{x^e_1, \ldots, x^e_{i-1}}_{\text{environment} \,\approx\, \text{context}}).
\end{equation}
While the requested prediction $y^e_i$ concerns only the input $x^e_i$, the machine can now pay attention to its test experience so far, as to extract relevant environment information from instance and distributional features.
Our theoretical results show that such in-context learners can amortize context to \emph{zoom-in} on the empirical risk minimizer of the test environment, achieving competitive out-of-distribution performance.
Further, we show that in several settings, the extended input-context feature space in ICRM reveals invariances that ERM-based algorithms ignore. 
Through extensive experiments, we demonstrate the efficacy of \method{} and provide extensive ablations that dissect and deepen our understanding of it.

The rest of the exposition is organized as follows.
\Cref{sec:dg} reviews the fundamentals of domain generalization, centered around the concept of \emph{environment}.
\Cref{sec:icl} explains the basics of next-token prediction, with an emphasis on learning from \emph{context}.
\Cref{sec:main} sews these two threads to propose a framework called \method{} to learn from multiple environments from context, and provides a host of supporting theory.
\Cref{sec:experiments} showcases the efficacy of our ideas in a variety of domain generalization benchmarks, and \Cref{sec:discussion} closes the exposition with some topics for future discussion.

\section{The problem of domain generalization}
\label{sec:dg}

The goal of domain generalization (DG) is to learn a predictor that performs well across a set of domains or environments $\Eall$ \citep{muandet2013domain}.
Environment indices $e \in \Eall$ list different versions of the data collection process---variations that may occur due to time, location, experimental interventions, changes in background conditions, and other contextual circumstances leading to distribution shifts~\citep{irm}.

During training time we have access to a collection of triplets $\mathcal{D} = \{(x_i, y_i, e_i)\}_{i=1}^n$.
Each triplet contains a vector of features $x_i $, a target label $y_i$, and the index of the corresponding training environment $e_i \in \Etr \subset \Eall$.
Formally, each example $(x_i, y_i)$ is sampled independently from a joint probability distribution $P^e(X, Y)$.
Using the dataset $\mathcal{D}$, we set out to learn a predictor $h$ that maps features to labels, while minimizing the worst risk across a set of related but unknown test environments $\Ete = \Eall \setminus \Etr$.
Formally, the standard domain generalization optimization is stated as 
\begin{equation}
  h^{*} = \argmin_h \max_{e \in \Ete} R^e(h),\label{eqn:ood}
\end{equation}
where $R^e(h) = \mathbb{E}_{(X, Y) \sim P^e}[\ell(h(X), Y)]$ is the risk of the predictor $h$ in environment indexed $e$, as measured by the expectation of the loss function $\ell$  with respect to the environment distribution $P^e$.

As one example, we could train a self-driving model $h$ to classify images $x_i$ into a binary label $y_i$ indicating the presence of a pedestrian.
Each training example $(x_i, y_i)$ is hereby collected from $e_i \in \Etr$, one out of the few cities with varying weather conditions from which images are collected.
The goal of~\cref{eqn:ood} is to obtain a predictor that correctly classifies $x$ in new cities $e \in \Ete$ observed during test time.
This has proved to be challenging \citep{lechner2022all}, as predictors trained in different weather conditions exhibited penurious performance in new weather conditions.

Domain generalization is challenging because we do not have access to test environments during training time, rendering \cref{eqn:ood} challenging to estimate.
Therefore, to address the DG problem in practice, researchers have proposed a myriad of algorithms that make different assumptions about the invariances shared between $\Etr$ and $\Ete$.
In broad strokes, domain generalization algorithms fall in one out of the two following categories.
On the one hand, domain generalization algorithms based on invariance~\citep{muandet2013domain, dann, peters2016causal, irm}, illustrated in~\Cref{figure:main:a}, regularize predictors $h(x^e_i)$ as to not contain any information about the environment $e$.
Unfortunately, this results in removing too much signal about the prediction task.
On the other hand, domain generalization algorithms based on marginal transfer~\citep{marginal, adabn, arm, contextvit} extract environment-specific information. These methods implement predictors $h(x^e_i, \phi^e_i)$, where $\phi^e_i = \frac{1}{i-1} \sum_{j=1}^{i-1} \phi(x^e_j)$ is a coarse summary of the environment $e$ in terms of previously observed instances.
Different choices for $\phi$ include kernel functions~\cite[MTL]{marginal}, convolutional neural networks~\citep[ARM]{arm}, and patch embeddings~\citep[Context-ViT]{contextvit}.
Alas, all of these alternatives focus exclusively on distributional features of the environment, diluting relevant ``needle-in-the-haystack'' to be found in individual past examples.
More formally, the size of the representation $\phi$ would have to grow linearly with the size of the training data to describe aspects corresponding to a small group of examples, or non-parametric statistics about their distribution.

As a result, and despite all efforts, no proposal so far convincingly outperforms a simple empirical risk minimization baseline~\citep[ERM]{vapnik} across standard benchmarks~\citep{domainbed, gagnon-audet2023woods, yao2022wild}.
Effectively, ERM simply pools all training data together and seeks the \emph{global} empirical risk minimizer:
\begin{equation}
  h^\dagger = \argmin_h \sum_{e \in \Etr} P(E = e) \cdot R^e(h).
 \label{eqn: erm}
\end{equation}
Is the efficacy of ERM suggesting that environmental information is useless and that the generalization project has reached a stalemate?
We argue that this is not the case.
The key to our answer resides in a recently discovered emergent ability of next-token predictors, namely, in-context learning.

\section{Next-token predictors and in-context learning}
\label{sec:icl}

Next, let us take a few moments to review a seemingly disconnected learning paradigm, next-token prediction.
Here, we are concerned with modeling the conditional distribution
\begin{equation}
  P(Z_{t+1} = z_{t+1} \mid Z_t = z_t, \ldots Z_1 = z_1),\label{eq:next1}
\end{equation}
describing the probability of observing the token $z_{t+1}$ after having previously observed the sequence of tokens $(z_1, \ldots, z_t)$.
The quintessential next-token prediction task is language modeling~\citep{bengio2000neural}, where the sequence of tokens represents a snippet of natural language text.
Language modeling is the workhorse behind the most sophisticated large language models (LLMs) to date, such as GPT-4~\citep{gpt}.
Most LLM implementations estimate~\cref{eq:next1} using a transformer neural network $z_{t+1} \approx h(z_t; z_{t-1}, \ldots, z_1)$~\citep{attention}.

Trained LLMs exhibit a certain ability, termed in-context learning (ICL), quite relevant to our interests.
ICL is the ability to describe and learn about a learning problem from the sequence of a few tokens itself, sometimes called the context or prompt. Many meta-learning methods have been built over the years to impart such an ability to the models \citep{schmidhuber1987evolutionary, finn2017model}.
To illustrate, consider the two following sequences:
\begin{align*}
  (\underbrace{\text{You are talking to a teenager.}}_{c_1} \,\, \underbrace{\text{Write a poem on gravitational fields.}}_{x_1}),\\
  (\underbrace{\text{You are talking to a Physics graduate.}}_{c_2} \,\, \underbrace{\text{Write a poem on  gravitational fields.}}_{x_2}).
\end{align*}
As widely observed, LLMs answer differently to these two sequences, producing two poems, say $y_1$ and $y_2$, each adapted to the assumed audience.
While nothing unexpected is happening here at the sequence level---the model simply produces a high-likelihood continuation to each of the two prompts---we observe a degree of compositional learning, because the LLM can provide different but correct answers to the same question $x_1 = x_2$ when presented under two contexts $c_1$ and $c_2$.
By addressing the very general task of \emph{in}-distribution language modeling, we attain significant \emph{out}-of-distribution abilities in many specific tasks---such as the one of writing poems.

It is a fascinating fact that ICL emerges without supervision.
The training corpus does not contain any explicit division between questions and their context beyond the natural order of the words within each snippet of language in the training data.
However, since we train the machine to produce an enormous amount of completions, many of which start with partially overlapping contexts, the predictor has the opportunity to amortize learning to a significant degree.
While the machine may have never observed the context $\tilde{c}_1 = (\text{You are now \emph{speaking} to a teenager})$, its semantic similarity to $c_1$ above---plus other similar contexts where the word \emph{speaking} appears---endows generalization.
This is the desired ability to generalize over environments described in the previous section, which remained completely out of reach when using coarse domain indices.

\begin{table}
\centering
\begin{tabular}{llll}
  \toprule
  \textbf{paradigm} & \textbf{training data} & \textbf{testing data} & \textbf{estimates}\\
  \midrule
  ERM               & $x, y$    & $x^{e'}$                                             & $P(Y \mid X)$\\
  IRM               & $x, y, e$ & $x^{e'}$                                             & $P(Y \mid \phi^\text{inv}(X))$\\
  LLM               & $z$       & $z_{t}$ \, and context $z_{j < t}$           & $P(Z_{t+1} \mid Z_t, \ldots, Z_1)$\\
  ICRM              & $x, y, e$ & $x^{e'}_t$ and context $c^{e'}_t = (x^{e'}_{j})_{j< t}$ & $P(Y|X,C) \leadsto P^{e'}(Y \mid X)$\\
  \bottomrule
\end{tabular}
\caption{Different learning paradigms discussed in this work, together with their training data and testing data formats, as well as the estimated predictors.
In our \method{}, we amortize the current input $x^{e'}$ and its context $c^{e'}$, containing previously experienced unlabeled examples from the same environment $e'$, and  ``zoom-in'' ($\leadsto$) to the appropriate local risk minimizer.
}
\label{table:estimatetable}
\end{table}

\section{Adaptive domain generalization via in-context learning}
\label{sec:main}

The story has so far laid out two threads.
On the one hand, \Cref{sec:dg} motivated the need for domain generalization algorithms capable of extracting relevant environment-specific features, at both the example and distributional levels.
To this end, we have argued to move beyond coarse environment indices, towards rich and amortizable descriptions.
On the other hand, \Cref{sec:icl} suggests understanding \emph{context} as an opportunity to describe \emph{environments} in precisely this manner.
This section knits these two threads together, enabling us to attack the problem of domain generalization with in-context learners.
The plan is as follows:
\begin{itemize}
\item Collect a dataset of triplets $\mathcal{D} = \{(x_i, y_i, e_i)\}_{i=1}^n$ as described in \Cref{sec:dg}.
Initialize a next-token predictor $\hat{y} = h(x ; c)$, tasked with predicting a target label $y$ associated to the input $x$, as supported by the context $c$.
\item During each iteration of training, select $e \in \Etr$ at random. Draw $t$ examples from this environment at random, construct one input sequence $(x^e_1, \ldots, x^e_t)$ and its associated target sequence $(y^e_1, \ldots, y^e_t)$.
Update the next-token predictor to minimize the auto-regressive loss $\sum_{j=1}^t \ell(h(x^e_j; c^e_j), y^e_j)$, where the context is $c^e_j = (x^e_1, \ldots, x^e_{j-1})$, for all $j = 2, \ldots, t$, and $c^e_1 = \emptyset$.
\item During test time, a sequence of inputs $(x'_1, \ldots, x'_{t'})$ arrives for prediction, one by one, all from the test environment $e' \in \Ete$.
We predict $\hat{y}'_j = h(x'_j, c'_j)$ for $x'_j$, where the context $c'_j = (x'_1, \ldots, x'_{j-1})$, for all $j = 2, \ldots, t'$, and $c'_1 = \emptyset$.
\end{itemize}
We call the resulting method, illustrated in~\Cref{figure:main:c}, \methodfull{} (\method{}).

Next, we develop a sequence of theoretical guarantees to understand the behavior of \method{} in various scenarios.
To orient ourselves around these results, we recall three predictors featured in the exposition so far.
First, the global risk minimizer over the pooled training data, denoted by $h^{\dagger}$ in~\cref{eqn: erm}, estimates $P(Y \mid X)$.
Second, the environment risk minimizer, denoted by $h^e(x)$ for environment $e$, estimates $P(Y \mid X, E)$.
Third, our in-context risk minimizer estimates the conditional expectation $P(Y \mid X, C)$, denoted by
\begin{equation}
        \tilde{h} = \argmin_{h} \sum_{j=1}^{t}\mathbb{E}_{(X, C, Y)}[\ell(h(X_j; C_j), Y_j)].
        \label{eqn: icl}
\end{equation}
The sequel focuses on the binary cross-entropy loss $\ell$. Our first result shows that, in the absence of context, \method{} \emph{zooms-out} to behave conservatively. 

\begin{restatable}[Zoom-out]{proposition}{zoomout}
\label{prop:zoomout}
    In the absence of context, \method{} behaves as the global empirical risk minimizer across the support of the training environments, i.e., $\tilde{h}(\cdot \; ;\; \emptyset) = h^{\dagger}(\cdot)$. 
\end{restatable}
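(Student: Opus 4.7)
The plan is to show that both predictors coincide with the Bayes-optimal regressor for the pooled training distribution, and hence with one another. Throughout I work at the population level with the proper scoring rule $\ell$ (binary cross-entropy), whose unique minimizer of $\mathbb{E}[\ell(h(X),Y)]$ over measurable $h$ is $h^{\ast}(x) = P(Y=1 \mid X=x)$.

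First, I would expand the ICRM objective in~\cref{eqn: icl} as a sum over positions $j=1,\dots,t$ and isolate the $j=1$ summand. By construction $C_1 = \emptyset$, so this is the \emph{only} summand in which the argument of $h$ has an empty context; varying $\tilde h(\,\cdot\,;\emptyset)$ therefore changes only the $j=1$ term and leaves every $j\ge 2$ contribution untouched (the function values at distinct context arguments are independent decision variables). Consequently, the restriction $\tilde h(\,\cdot\,;\emptyset)$ is a minimizer of
\begin{equation*}
  \mathbb{E}_{(X_1,Y_1)}\bigl[\ell\bigl(h(X_1;\emptyset),Y_1\bigr)\bigr].
\end{equation*}

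Second, I would identify the marginal law of $(X_1,Y_1)$. Since each training sequence is generated by first drawing $e\sim P(E)$ on $\Etr$ and then sampling $(X_1,Y_1)\sim P^e$, the induced marginal is the pooled training distribution $P(X,Y)=\sum_{e\in\Etr} P(E=e)\,P^e(X,Y)$. Appealing to the properness of $\ell$, the minimizer above is $\tilde h(x;\emptyset)=P(Y=1\mid X=x)$ under this pooled law. On the other hand, the ERM objective in~\cref{eqn: erm} is exactly $\mathbb{E}_{(X,Y)}[\ell(h(X),Y)]$ against the same pooled distribution, so by the same Bayes-optimality argument $h^{\dagger}(x) = P(Y=1\mid X=x)$. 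Equating the two expressions yields $\tilde h(\,\cdot\,;\emptyset) = h^{\dagger}(\,\cdot\,)$.

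The only real subtlety---and the main thing to be careful about---is the claim that minimizing the sum over $j$ decouples into independent minimizations at each context value, so that we can read off $\tilde h(\,\cdot\,;\emptyset)$ from the $j=1$ term alone. I would justify this by noting that $\tilde h$ is the unconstrained minimizer over all measurable functions of $(x,c)$; empty and non-empty contexts live on disjoint strata of the input space, and the objective is a sum of expectations each supported on its own stratum, so the pointwise minimizer on one stratum does not interact with the others. Under this standard Bayes-optimality regime, the rest of the argument is just a restatement of the properness of cross-entropy.
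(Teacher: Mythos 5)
Your proposal is correct and follows essentially the same route as the paper: the paper first proves a lemma that the ICRM minimizer is Bayes optimal at every context length (via the cross-entropy $=$ conditional entropy $+$ KL decomposition, i.e.\ properness) and then specializes to the empty context, where the conditional expectation is taken under the pooled training distribution and hence equals $h^{\dagger}$. Your explicit isolation of the $j=1$ term and the disjoint-strata decoupling is just a more verbose statement of the same simultaneous-minimization step the paper uses, so there is no substantive difference.
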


Having established the connection between ICRM and ERM in the absence of any context, we now study the benefits of ICRM in the presence of sufficiently long contexts. The following result shows that, when provided with context from a training environment $e \in \Etr$, our \method{} \emph{zooms-in} and behaves like the appropriate environment risk minimizer, as shown in~\Cref{table:estimatetable}. In the next result, we assume that $P(Y = 1 \mid X=x, E=e)$ is parametrized and described by a function $h^\star(x, \theta^e_x)$, where $\theta_x^e$ describes features of the environment that are relevant to the query $x$, for all $e \in \Eall$. We assume an ideal \emph{amortization function} $b$ that takes the query $x$ and context $C_t$ as input and approximates $\theta_x^e$ and the sequence of random variables $b(X,C_t)$ converges almost surely to $\theta_X^E$.

\begin{restatable}[Full iid zoom-in]{theorem}{zoomin}\label{thm: zoomin}
    Let $h^{\star}(x, \theta^e_x)$ describe $P(Y = 1 \mid X=x, E=e)$ for all $e \in \Eall$.
  Furthermore, we assume the existence of an \emph{amortization function} $b(X, C_t) \stackrel{a.s.}{\rightarrow} \theta^{E}_X$.
  Then, \method{} zooms-in on the environment risk minimizer by achieving a cross-entropy loss
  \begin{equation*}
   \lim_{t\rightarrow\infty} H(Y \mid X,C_t) = H(Y \mid X,E).
  \end{equation*}
  Further, if $I(Y;E \mid X)>0$, \method{} has better performance than the global risk minimizer.
\end{restatable}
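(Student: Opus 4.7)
The plan is to (i) identify the best achievable binary cross-entropy of \method{} at context length $t$ as the conditional entropy $H(Y\mid X, C_t)$, (ii) show $H(Y\mid X, C_t)\to H(Y\mid X, E)$ as $t\to\infty$ by a two-sided bound, and (iii) obtain the strict improvement over the global risk minimizer from the chain rule for entropy. Step~(i) is immediate: the inner objective in~\cref{eqn: icl} is an expected binary cross-entropy, whose pointwise Bayes-optimal predictor is the true conditional probability $\eta_t(x,c) = P(Y{=}1\mid X{=}x, C_t{=}c)$, with minimum value $H(Y\mid X, C_t)$. So the core of the theorem reduces to showing $H(Y\mid X, C_t)\to H(Y\mid X, E)$.

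For the lower bound $H(Y\mid X, E)\le H(Y\mid X, C_t)$, I will use that within an environment the examples are i.i.d., so conditional on $(X, E)$ the unlabeled context $C_t$ carries no extra information about $Y$; this gives $H(Y\mid X, E, C_t) = H(Y\mid X, E)$, and further conditioning can only reduce entropy. For the matching upper bound, let $\mathcal{F}_t = \sigma(X, C_t)$ be the increasing filtration generated by the query and the growing context. The amortization hypothesis $b(X, C_t)\stackrel{a.s.}{\to}\theta^E_X$ makes $\theta^E_X$ measurable with respect to $\mathcal{F}_\infty$ up to null sets, so L\'evy's upward theorem gives $\eta_t(X, C_t) = \mathbb{E}[Y\mid \mathcal{F}_t] \to \mathbb{E}[Y\mid \mathcal{F}_\infty] = h^\star(X, \theta^E_X) = P(Y{=}1\mid X, E)$ almost surely and in $L^1$, where the penultimate equality uses the i.i.d.\ structure together with the $\mathcal{F}_\infty$-measurability of $\theta^E_X$. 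Since the binary entropy function $H_b(p) = -p\log p - (1-p)\log(1-p)$ is continuous and bounded by $\log 2$, dominated convergence yields $H(Y\mid X, C_t) = \mathbb{E}[H_b(\eta_t(X, C_t))] \to \mathbb{E}[H_b(P(Y{=}1\mid X, E))] = H(Y\mid X, E)$, closing the sandwich.

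For the final clause, \Cref{prop:zoomout} identifies the achievable loss of the global risk minimizer with that of \method{} under empty context, namely $H(Y\mid X)$. The chain rule then delivers $H(Y\mid X) - H(Y\mid X, E) = I(Y; E\mid X) > 0$ by assumption, giving the claimed strict gap between \method{} and the global risk minimizer in the limit. The main obstacle I anticipate is the clean passage from the almost-sure convergence of $b(X, C_t)$ to the $L^1$ convergence of the conditional probability $\eta_t(X, C_t)$; this is precisely what L\'evy's theorem is designed to deliver, but it requires careful bookkeeping of the $\sigma$-algebras and a verification that $\mathbb{E}[Y\mid \mathcal{F}_\infty] = P(Y{=}1\mid X, E)$, which in turn rests on the modeling assumption that $\theta^E_X$ is a sufficient statistic for $Y$ given $(X, E)$. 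Everything downstream is a routine bounded-convergence argument on the binary entropy.
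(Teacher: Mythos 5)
Your proposal is correct, but it reaches the central limit $\lim_{t\to\infty} H(Y\mid X,C_t)=H(Y\mid X,E)$ by a genuinely different route than the paper. The paper's proof first notes, as you do, that $I(Y;C_t\mid X,E)=0$ gives the lower bound, but then closes the gap with a data-processing sandwich $H(Y\mid X,E)\le H(Y\mid X,C_t)\le H(Y\mid X,b(X,C_t))$ and an explicit computation of the limiting conditional law of $Y$ given $(X,b(X,C_t))$: it introduces the set $\mathcal{E}_{x,\theta}$ of environments sharing the same parameter value, uses the almost-sure convergence of $b$ to argue that the limiting mass concentrates on this set, and carries out the argument twice, once for discrete variables and once in terms of Radon--Nikodym derivatives with a monotone-convergence interchange of limit and integral. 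You instead run a martingale argument: L\'evy's upward theorem gives $P(Y{=}1\mid X,C_t)=\mathbb{E}[Y\mid\mathcal{F}_t]\to\mathbb{E}[Y\mid\mathcal{F}_\infty]$ a.s., and the identification $\mathbb{E}[Y\mid\mathcal{F}_\infty]=h^\star(X,\theta^E_X)=P(Y{=}1\mid X,E)$ follows from the tower property once you note that (i) $\theta^E_X$ is $\mathcal{F}_\infty$-measurable (up to null sets) because it is the a.s.\ limit of the $\mathcal{F}_t$-measurable variables $b(X,C_t)$, (ii) $Y\perp C_\infty\mid(X,E)$ by the i.i.d.\ structure, and (iii) the theorem's hypothesis says the conditional law of $Y$ given $(X,E)$ depends on $E$ only through $\theta^E_X$ --- this last point is exactly the ``sufficiency'' you flag, and it is what replaces the paper's bookkeeping over $\mathcal{E}_{x,\theta}$; with it, bounded convergence of the binary entropy finishes the argument (your lower bound is then not even needed). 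What your approach buys is a unified treatment of discrete and continuous cases and a cleaner justification of the limit--integral interchange, which in the paper's continuous-case computation relies on a somewhat heuristic pointwise statement about $\lim_t dP(b(X,C_t)=\theta\mid X=x,E=e)$ concentrating at $\theta^e_x$; what the paper's route buys is an elementary, information-theoretic argument with an explicit description of the limiting predictor and no appeal to martingale theory. Your handling of the final clause (identifying the global risk minimizer's loss with $H(Y\mid X)$ via \Cref{prop:zoomout} and invoking $H(Y\mid X)-H(Y\mid X,E)=I(Y;E\mid X)>0$) is the natural argument and matches what the paper intends. The only points to tighten are minor: work with the completed filtration so the a.s.\ limit $\theta^E_X$ is genuinely measurable, assume joint measurability of $h^\star$, and note $H(Y\mid X,C_t)=\mathbb{E}\bigl[H_b\bigl(P(Y{=}1\mid X,C_t)\bigr)\bigr]$ before applying dominated convergence.
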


In the previous result, we established that ICRM converges to empirical risk minimizer of the environment under infinitely long contexts. Next, we show that \method{} can partially zoom-in on the appropriate environment risk minimizer even with contexts of length of one.

\begin{restatable}[Partial iid zoom-in]{theorem}{partialzoomin}\label{thm: partialzoomin}
    Suppose the joint distribution $((X_1, \cdots X_t), (Y_1, \ldots, Y_t) ,E)$ is Markov with respect to a Bayesian network, each query $X$ and environment $E$ are statistically dependent and form the Markov blanket of $Y$.    Then, \method{} partially zooms-in on the environment risk minimizer, improving the performance of the global risk minimizer in terms of the cross-entropy loss. Further, the improvement is strictly monotonic in context length $t$.
\end{restatable}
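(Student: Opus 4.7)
The plan is to translate both claims of the theorem into strict positivity statements about conditional mutual informations, and then to read off these positivities from the Markov-blanket assumption combined with the stated statistical dependence of $X$ and $E$.

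First I would identify the Bayes-optimal losses. Since binary cross-entropy is minimized pointwise by the true conditional probability, the loss attained by \method{} with context $C_t = (X_1, \ldots, X_{t-1})$ is $H(Y \mid X, C_t)$, whereas the global risk minimizer $h^{\dagger}$ achieves $H(Y \mid X)$; by exchangeability of the $(X_j, Y_j)$ within an environment, neither quantity depends on which index is queried, only on the context length. The improvement over the global baseline is then
\begin{equation*}
\Delta_t \;=\; H(Y \mid X) - H(Y \mid X, C_t) \;=\; I(Y; C_t \mid X),
\end{equation*}
and the marginal gain from extending the context is $H(Y \mid X, C_t) - H(Y \mid X, C_{t+1}) = I(Y; X_{t+1} \mid X, C_t)$.

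Next, I would exploit the Markov-blanket structure. Because $\{X, E\}$ is the Markov blanket of $Y$, we have $Y \perp C_t \mid (X, E)$, hence $I(Y; C_t \mid X, E) = 0$. Expanding $I(Y; E, C_t \mid X)$ via the chain rule in two different orders gives the identity
\begin{equation*}
I(Y; C_t \mid X) \;=\; I(Y; E \mid X) - I(Y; E \mid X, C_t),
\end{equation*}
and the analogous calculation, done conditionally on $C_t$, yields $I(Y; X_{t+1} \mid X, C_t) = I(Y; E \mid X, C_t) - I(Y; E \mid X, C_{t+1})$. Both parts of the theorem therefore reduce to showing that adding context strictly decreases the residual uncertainty about $E$.

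Finally, I would establish the strict information gain. Since $E$ lies in the Markov blanket of $Y$ and is statistically dependent on $X$, it is genuinely relevant for $Y$ beyond $X$, giving $I(Y; E \mid X) > 0$; because the context entries are further non-trivial observations of $E$ (drawn from the same conditional law that couples $X$ and $E$), the posterior of $E$ given $(X, C_t)$ strictly sharpens with $t$, so $I(Y; E \mid X, C_{t+1}) < I(Y; E \mid X, C_t) < I(Y; E \mid X)$. Plugging back into the identities yields $\Delta_t > 0$ already at $t \geq 1$ and strict monotonicity of $\Delta_t$ in $t$, which is the desired partial zoom-in.

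The main obstacle is precisely the strictness. The weak versions of both inequalities are immediate from ``conditioning cannot increase entropy'', but upgrading them to strict inequalities requires arguing that each new context element carries fresh information about $E$ beyond what is already in $(X, C_t)$. The stated dependence of $X$ and $E$ rules out the degenerate case where $E$ plays no role in the law of $X$; one must then use the Bayesian-network structure to verify that the query $X$ does not already $d$-separate $C_t$ from $E$, so that additional context keeps concentrating the posterior over $E$ until the environment is fully identified.
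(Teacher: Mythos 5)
Your reduction is set up correctly: the Bayes-optimal cross-entropies are $H(Y\mid X)$ and $H(Y\mid X,C_t)$, the gain is $I(Y;C_t\mid X)$, and the chain-rule identities $I(Y;C_t\mid X)=I(Y;E\mid X)-I(Y;E\mid X,C_t)$ and $I(Y;X_{t+1}\mid X,C_t)=I(Y;E\mid X,C_t)-I(Y;E\mid X,C_{t+1})$ (both using $Y\perp C_t\mid(X,E)$ from the Markov blanket) are valid and a clean reformulation. But the theorem lives or dies on the strict inequalities, and at exactly that point your argument stops: you assert that $I(Y;E\mid X)>0$ because $E$ ``is genuinely relevant for $Y$ beyond $X$,'' and that ``the posterior of $E$ given $(X,C_t)$ strictly sharpens with $t$,'' and you yourself flag this as the main obstacle without resolving it. Neither claim follows from the Markov condition alone; Markov-blanket membership plus $X\not\perp E$ does not by itself yield $Y\not\perp X_{t+1}\mid(X,C_t)$, and a sharpening posterior over $E$ need not translate into a strict decrease of $I(Y;E\mid X,C_t)$ unless the new sample is informative about the part of $E$ that still matters for $Y$ given the conditioning set.

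This missing step is essentially the entire content of the paper's proof. There, one first shows from the within-environment independence $(X,Y)\perp(\tilde X,\tilde Y)\mid E$ and symmetry that $E$ cannot be a child of any node in the DAG, and then runs a three-case d-separation analysis on how $E$ sits on the Markov blanket: (i) $E\rightarrow Y$ with some $X^i$ a child of $Y$, (ii) $E\rightarrow Y$ with all of $X$ parents of $Y$ (here $X\not\perp E$ is used to force an edge from $E$ into some $X^j$), and (iii) $E$ reaching $Y$ only through a collider $X^i$. In each case one exhibits an active path from the new context variable $\tilde X$ (or $X_k$) to $Y$ given $X$ — in case (iii) active precisely because the collider $X$ is conditioned on — giving $I(Y;\tilde X\mid X)>0$, and then checks that conditioning additionally on $C_{k-1}$ does not block these paths, giving $I(Y;X_k\mid X,C_{k-1})>0$ and hence strict monotonicity via the chain rule. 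Your proposal would need to carry out exactly this structural case analysis (and, implicitly, the faithfulness-type step that converts an active path into strict dependence) to close the gap; as written it presupposes the conclusion it is meant to prove.
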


Next, we move to the out-of-distribution setting where the test environments can be quite different from the train environments. 
To provide theory for a domain generalization result, we must place some assumptions on the data generation process.
In particular, and for all $e\in \Eall$, let
\begin{equation}
z \mid y,e \sim \mathcal{N}(\mu_e^y, \Sigma_e^y), \text{ and } x \leftarrow g(z),
\label{eqn: dgp_ood}
\end{equation}
where the latent variables $z$ are sampled conditional on the label $y$ and environment $e$ from a Gaussian distribution with mean and covariance depending on $(y,e)$, and are then mixed by a map $g$ to generate the observations $x$. We summarize the environment in terms of the parameter vector $\gamma_e = \big[(p_e^{y}, \mu_e^{y}, \Sigma_e^{y})_{y\in \{0,1\}}\big]$, where $p_e^{y}$ is the probability of label $y$ in environment $e$.
Our next result shows that ICL algorithms that learn $h(x;c)$ exhibit robust behavior under distribution shifts. 
In contrast, standard predictors $h(x)$ can fail on novel environments from~\cref{eqn: dgp_ood}.

 Define $\delta_e$ to be a permutation of $\gamma_e$ that swaps the two components.  We construct the Voronoi cells corresponding to the points in the union of sets $\{\gamma_e\}_{e\in \mathcal{E}_{tr}}$ and $\{\delta_e\}_{e\in \mathcal{E}_{tr}}$. The set of points in the Voronoi cell corresponding to  $\{\gamma_e\}_{e\in \mathcal{E}_{tr}}$ define the \emph{Voronoi cell of the training environments}. Next, we show that  ICL can perform in novel test environments sufficiently far away from the training environments, so long as they are in the Voronoi cells of training environments.

\begin{restatable}[Full ood zoom-in]{theorem}{oodzoomin}\label{thm: oodzoomin}
    Consider data triplets $(x, y, e)$ generated from $z \sim \mathcal{N}(\mu_{e}^{y}, \Sigma_{e}^{y})$ and $x \leftarrow g(z)$, for all environments $e \in \Eall$, where $g$ is the identity map (see \cref{sec: thms_proofs} for extensions to general diffeomorphisms $g$). There exists an ICL algorithm that produces Bayes optimal predictions for all the test environments that fall in the Voronoi cells of the training environments.
\end{restatable}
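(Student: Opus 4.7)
The plan is to construct an explicit ICL algorithm that (i) uses the unlabeled test context to identify the mixture components of $P^{e'}(X)$ up to a label-permutation ambiguity, and (ii) uses the labeled training environments to resolve that ambiguity whenever the test parameter $\gamma_{e'}$ lies in the Voronoi cell of training environments. The Bayes-optimal predictor under the generative model of~\cref{eqn: dgp_ood} (with $g = \mathrm{id}$) is a closed form in $\gamma_{e'}$, namely
\begin{equation*}
  P^{e'}(Y=1 \mid X=x) = \frac{p_{e'}^{1}\,\mathcal{N}(x;\mu_{e'}^{1},\Sigma_{e'}^{1})}{\sum_{y\in\{0,1\}} p_{e'}^{y}\,\mathcal{N}(x;\mu_{e'}^{y},\Sigma_{e'}^{y})},
\end{equation*}
so the whole task reduces to recovering $\gamma_{e'}$ from the context at test time.

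First, I would build a context-to-parameter estimator. Each $P^{e'}(X)$ is a two-component Gaussian mixture, and two-component Gaussian mixtures are identifiable from their distribution up to relabeling of the components. Hence, as the context length $t\to\infty$, a standard consistent estimator (e.g.\ EM or the method of moments on the first few moments of $X$) applied to $c = (x_1,\ldots,x_{t-1})$ recovers the \emph{unordered} pair of component parameters, which is exactly the set $\{\gamma_{e'}, \delta_{e'}\}$. The training phase, on the other hand, sees labels and therefore stores the \emph{oriented} parameters $\{\gamma_e\}_{e\in\Etr}$. The ICL algorithm I propose then performs a tie-breaking step: compute the two candidate orientations $\gamma_{e'}^{(1)}, \gamma_{e'}^{(2)}$ obtained from the mixture estimate, and output whichever candidate lies in the Voronoi cell of some $\gamma_e$, $e\in\Etr$ (rather than of some $\delta_{e''}$). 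Plug the resulting $\hat{\gamma}_{e'}$ into the Bayes formula above.

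To prove correctness, suppose $\gamma_{e'}$ lies in the Voronoi cell of $\gamma_e$ for some $e\in\Etr$. Then $\|\gamma_{e'}-\gamma_e\| < \|\gamma_{e'}-\gamma_{e''}\|$ for all $e''\neq e$ and $\|\gamma_{e'}-\gamma_e\|<\|\gamma_{e'}-\delta_{e''}\|$ for all $e''\in\Etr$. Applying the permutation $\delta$ to both arguments of the Euclidean distance leaves it invariant, so $\|\delta_{e'}-\delta_e\| = \|\gamma_{e'}-\gamma_e\|$ and $\|\delta_{e'}-\gamma_{e''}\| = \|\gamma_{e'}-\delta_{e''}\|$. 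Consequently $\delta_{e'}$ lies in the Voronoi cell of $\delta_e$, not of any $\gamma_{e''}$. The tie-breaking rule therefore picks exactly $\gamma_{e'}$, and the output is Bayes optimal. The extension to a diffeomorphic mixing map $g$ follows by noting that $x=g(z)$ preserves identifiability of the latent mixture (the same estimator can be composed with $g^{-1}$, whose learning is amortized across training environments), and the change-of-variables Jacobian cancels in the posterior ratio, so the Bayes classifier in $x$-space is obtained from the one above by the substitution $x\mapsto g^{-1}(x)$.

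The main obstacle I anticipate is not the Bayes step, which is mechanical, but making the estimator and the tie-breaker genuinely realizable by an ICL predictor $h(x; c)$ rather than by an external oracle: one needs to exhibit a family of context-conditional functions rich enough to implement method-of-moments on arbitrary-length contexts and a nearest-$\gamma_e$ lookup against the stored training parameters. Since the theorem only asserts \emph{existence} of such an algorithm, I would handle this by explicitly describing the mapping (moment statistics of $c$, followed by algebraic root-finding for the mixture components, followed by a minimum over the finite set $\{\|\cdot - \gamma_e\|, \|\cdot - \delta_e\|\}_{e\in\Etr}$) and observing that each step is a measurable, context-conditional function of $(x, c)$ — therefore a legitimate instantiation of $h(x; c)$.
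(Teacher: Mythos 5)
Your proposal is correct and takes essentially the same route as the paper's proof: estimate labeled per-class Gaussian parameters $\hat{\gamma}_e$ on the training environments, fit an unlabeled two-component Gaussian mixture to the test context (identifiable only up to a component swap), and resolve the swap by a nearest-neighbor/Voronoi comparison against $\{\gamma_e\}\cup\{\delta_e\}$, with the same permutation-invariance-of-the-norm argument guaranteeing the correct orientation whenever $\gamma_{e'}$ lies in the training Voronoi cells, after which the plugged-in Gaussian posterior is Bayes optimal. The only caveat is that your one-line dismissal of the diffeomorphic case understates it — $g^{-1}$ is unknown, and the paper's appendix extension needs iVAE-style identifiability (diagonal covariances, a sufficient-variability condition on $2d+1$ points) to recover the latents up to permutation and scaling before the same Voronoi argument can be applied — and, like the paper, you should also note the degenerate test environments whose two class-conditional components coincide, which the mixture fit cannot orient and the paper treats separately.
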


\section{\method{} under the lens of invariance}

Common advice in domain generalization recommends following the \emph{invariance principle} to learn robust predictors~\citep{peters2016causal, irm}.
One simple version of the invariance principle is to ``select those inputs leading to stable regression coefficients across training environments.''
At first sight, one could argue that the proposed \method{} does not adhere to such an invariance principle, as it is adapting to environment-specific information provided in the form of context.
However, as some examples can show, \method{}'s implementation of ERM on the extended input-context feature space reveals invariant predictors that a vanilla implementation of ERM on the standard feature space fails to find.
To see this, consider a linear least-squares regression problem mapping two inputs $(x_1, x_2)$ into a target $y$ under multiple training environments $e \in \Etr$ as:
\begin{equation}
 y  = \alpha \cdot x_1 + \beta \cdot \mu_2^e + \varepsilon,\label{eq:toy}
\end{equation}
where $\mu_i^e = \mathbb{E}[x_i \mid E=e]$, the pair $(\alpha, \beta)$ are invariant regression coefficients, and $\varepsilon$ is an independent noise term.
Algorithmically, we make one simplifying assumption for pedagogic purposes.
In particular, during training, we provide ICRM directly with the relevant extended feature space $(x_1, x_2, \mu^e_1, \mu^e_2)$, instead of requiring the algorithm to learn such representation from general-form sequential context.

In this setup, ICRM learns to predict using $\alpha \cdot x_1 + 0 \cdot x_2 + 0 \cdot \mu_1^e + \beta \cdot \mu_2^e$.
In contrast, ERM trains a linear model on $(x_1, x_2)$, learning to predict using $\tilde{\alpha} \cdot x_1 + \tilde{\beta} \cdot x_2$.
This is the main point:
if $\beta \neq 0$ and $\text{cov}(x_1, x_2) \neq 0$, ERM's estimate of the invariant coefficient is biased, $\tilde{\alpha} \not= \alpha$, and as a result the error of ERM in a new environment grows with the variance of $x_1$. On the other hand, ICRM estimates the invariant coefficient for $x_1$ perfectly and the error that it experiences is independent of variance of $x_1$ regardless of the context seen so far. As a result, the error of ERM is guaranteed to be worse than ICRM provided the variance of $x_1$ is sufficiently large. For a formal derivation and generalization of these claims, see~\Cref{sec: thms_proofs}. In our experiments too, we observe that ICRM is able to generalize \emph{zero-shot} to novel test environments.

We believe that ICRM, and more generally ICL, provide one interesting new viewpoint on invariance.
On the one hand, prior DG algorithms advocated to remove features as a guide to reveal invariance.
On the other hand, in-context learners suggest that extending features with context affords invariance otherwise unnoticed.
This needs further clarification: while the process of zooming-in to an environment risk minimizer does not provide us with an invariant predictor over the original feature space, the \emph{process of zooming-in} is in many cases an invariant mechanism over the extended feature space.
These points are reminiscent of the discussion about ``fragility'' in the philosophy of causation~\citep{sep-causation-counterfactual}.
Does smoking cause cancer?
Not invariably, at least not across all contexts or environments.
However, smoking does cause cancer invariably---across all contexts or environments---when extending the feature space as to include additional causes such as diet, genetic predispositions, and the number of smoked cigarettes.
The ever-growing collection of causes approaches what John Stuart Mill called the \emph{total cause}.
Then, learning across a diverse set of environments should allow the machine to pay attention to those that matter for robust prediction.
In short, we afford invariance at the expense of constraining the diameter of the environment.
In the extreme, when constraining the environment to contain only one smoker, we can always find an invariant predictor.

\section{Experiments}  \label{sec:experiments}

To evaluate the efficacy of our \method, the following subsections are empirical investigations to answer the following questions, respectively:
\begin{enumerate}
    \item How does \method{} fare against competitive DG algorithms, for different context sizes?
    \item How does the \method{} perform in the absence of domain labels?
    \item What is the impact of model architecture on \method{}'s gains?
    \item Can \method{} search for query relevant ``needle-in-the-haystack'' signals?
\end{enumerate}

\begin{table}[htb!]
    \caption{Average/worst ood test accuracy for various counts of context samples for Adaptive Risk Minimization (ARM), Empirical Risk Minimization (ERM), Test Entropy Minimization (TENT) and our \method{} on FEMNIST, Rotated MNIST, WILDS Camelyon17 and  Tiny-ImageNet-C.}
    \begin{center}
    \begin{tabular}{lccccc|cccccc}
        \toprule
        \textbf{Data / method} & \multicolumn{5}{c}{\textbf{Average test accuracy}} & \multicolumn{5}{c}{\textbf{Worst case test accuracy}} \\
        \coloredMidrule{white}{alternateRowColor}
        \rowcolor{alternateRowColor}
        FEMNIST & 0 & 25 & 50 & 75 & 100 & 0 & 25 & 50 & 75 & 100  \\
        \coloredBelowRuleSep{white}
        \quad ARM       &  49.5 &  83.9 &  84.4 & 84.7 & 84.6  & 23.6  &  59.5 & 60.7  & 57.0  & 58.8 \\
        \quad TENT  & 78.1 & 77.9 & 81.2 & 82.5 & 83.3 & 55.2 & 57.2 & 63.3 & 65.9 & 67.2 \\
        \quad ERM       &  \textbf{79.3}  & 79.3  & 79.3  & 79.3 & 79.3 &  59.0  & 59.0 & 59.0 & 59.0 &  59.0\\
        \quad ICRM       &  78.7  & \textbf{87.2} & \textbf{87.4}  & \textbf{87.5} &  \textbf{87.8} &  \textbf{59.8} & \textbf{69.3}  & \textbf{70.6}  & \textbf{70.6} & \textbf{70.6} \\
        \coloredMidrule{white}{alternateRowColor}
        \rowcolor{alternateRowColor}
        Rotated MNIST & 0 & 25 & 50 & 75 & 100 & 0 & 25 & 50 & 75 & 100  \\
        \coloredBelowRuleSep{white}
        \quad ARM       &  36.5 & 94.2  &  95.1 & 95.3 & 95.5 &  28.2 & 85.3  & 87.2  & 87.9 & 87.9 \\
        \quad TENT & 94.1 & 88.0 & 91.9 & 93.8 & 94.3 & 80.2 & 88.5 & 88.5 & 80.2 & 81.3\\
        \quad ERM       & \textbf{94.2} & 94.2 & 94.2  & 94.2 & 94.2 &  80.8 & 80.8  & 80.8  & 80.8 & 80.8  \\
        \quad ICRM       &  93.6 &   \textbf{96.1} &  \textbf{96.2} & \textbf{96.2}& \textbf{96.2} &  \textbf{82.5} &  \textbf{88.5} &  \textbf{88.5} & \textbf{88.8} & \textbf{88.8} \\
        \coloredMidrule{white}{alternateRowColor}
        \rowcolor{alternateRowColor}
        WILDS Camelyon17 & 0 & 25 & 50 & 75 & 100 & 0 & 25 & 50 & 75 & 100  \\
        \coloredBelowRuleSep{white}
        \quad ARM       &  61.2 &  59.5 & 59.7  & 59.7 & 59.7 & \multicolumn{5}{c}{\multirow{4}{*}{same as average accuracy}} \\
        \quad TENT       & 67.9 & 81.8 & 87.2 & 89.4 & 89.4 &   \\
        \quad ERM       &  68.6  & 68.6  &  68.6 & 68.6 & 68.6 &   \\
        \quad ICRM       &  \textbf{92.0} & \textbf{90.7}  & \textbf{90.8}  & \textbf{90.8} & \textbf{90.8} &   &   &   &  &  \\
        \coloredMidrule{white}{alternateRowColor}
        \rowcolor{alternateRowColor}
        Tiny ImageNet-C & 0 & 25 & 50 & 75 & 100 & 0 & 25 & 50 & 75 & 100  \\
        \coloredBelowRuleSep{white}
        \quad ARM       &  30.8 &  31.0 &  31.0 &  31.0 & 31.0 & 8.2  & 8.3  &  8.2 & 8.3 & 8.2 \\
        \quad TENT       & 31.7 & 1.6 & 1.7 & 2.0 & 2.1 & 9.4 & 1.2  & 1.4 & 1.6 & 1.6\\
        \quad ERM       &  31.8 &  31.8 & 31.8  & 31.8 & 31.8 & 9.5  &  9.5 &  9.5 &  9.5 & 9.5  \\
        \quad ICRM       &  \textbf{38.3} & \textbf{39.2}  & \textbf{39.2}  & \textbf{39.2} & \textbf{39.2} & \textbf{18.8}  &  \textbf{19.2} & \textbf{19.5}  & \textbf{19.5} & \textbf{19.4} \\
        \bottomrule
    \end{tabular}
    \end{center}
       \label{table:main_sota}
\end{table}

In the following experiments, we compare \method{} against several prior methods designed to address domain generalization. Key competitors to our approach are marginal transfer based algorithms, which summarize observed inputs in each environment as a coarse embedding, as described in \Cref{sec:dg}. Among these methods, we compare with Adaptive Risk Minimization~\citep[ARM]{arm} and TENT~\citep{wang2020tent}. As a strong baseline, we also include ERM in our experimental protocol. 
To ensure a fair comparison across different algorithms for each dataset, we use a standardized neural network backbone (ConvNet or ResNet-50 depending on the dataset) as described in ~\Cref{sec: experimental setup}. For \method{}, the same backbone is used to featurize the input, which is then processed by the decoder-only Transformer \citep{attention} architecture from the GPT-2 Transformer family \citep{radford2019language}. For fair comparisons, we adhere to DomainBed's protocols for training, hyperparameter tuning, and testing~\citep{domainbed}. We describe our experimental setup in detail in~\Cref{sec: experimental setup}

We assess these methods across four image classification benchmarks, each offering a unique problem setting. FEMNIST~\citep{cohen2017emnist} contains MNIST digits and handwritten letters from individual writers as environments. Rotated MNIST concerns varied rotational angles as environments. Tiny ImageNet-C~\citep{hendrycks2019benchmarking} introduces diverse image corruptions to create multiple environments. WILDS Camelyon17~\citep{koh2021wilds} studies tumor detection and sourcing data from multiple hospitals as distinct environments. More details are provided in ~\Cref{sec:datasets}.

\subsection{Adaptation to distribution shift}
To study the adaptation of various approaches to distribution shifts, for each dataset and algorithm, we report performance across varying counts of context samples from the test environment, specifically at 0, 25, 50, 75, and 100 samples. We report an average across three independent runs of the entire sweep and its corresponding standard
error, where we select the model with hyperparameters corresponding to the highest validation accuracy.
\Cref{table:main_sota} summarizes the results of our experiments.
\method{} consistently outperforms all methods across different numbers of in-context test samples except at 0 context on FEMNIST and Rotated MNIST, where ERM marginally exceeds by 1\%. Further, these gains persist over both the worst group and average accuracy across testing environments.
\Cref{fig: mainplots} zooms into the model's performance between no-context and 25 context samples, highlighting the consistent superiority of \method{} even with a few in-context samples. Additionally, \method{}  demonstrates gains in performance even in the absence of test context. Specifically for both WILDS Camelyon17 and Tiny ImageNet-C,
\method{} achieves significantly superior performance than other baselines during inference without leveraging any context from the test environment. 
The training regimen of \method{} enables the model to identify contextual images relevant to the current query, resulting in a better featurizer compared to standard ERM, which is limited to updating based on the current input alone.
In \Cref{sec: secattnmap}, we present instances of such selections identified by \method{} for a given query in the context.

\subsection{Robustness of \method{} in the absence of environment labels}
As outlined in \Cref{sec:main}, the training regimen of \method{} assumes a dataset $\mathcal{D} = \{(x_i, y_i, e_i)\}_{i=1}^n$ collected under multiple training environments $e_i \in \Etr$. However, in scenarios lacking such domain separation during training, does \method{} continue to show an edge over ERM baselines? To study this question, we modify the sampling strategy: rather than constructing context vectors containing examples from one environment, we construct context vectors containing iid samples from all of the environments pooled together. To continue to test for out-of-distribution generalization, however, we evaluate the performance on examples from a novel test environment. We term this modified approach ICRM-Mix.

\begin{table}[htb!]
    \caption{Average/worst ood test accuracies for \method{} and ICRM-Mix across FEMNIST, Rotated MNIST, WILDS Camelyon17 and  Tiny-ImageNet-C. ICRM-Mix trains on sequences with samples drawn i.i.d. from the unified dataset comprising various environments.}
    \begin{center}
    \begin{tabular}{lccccc|cccccc}
        \toprule
        \textbf{Data / method} & \multicolumn{5}{c}{\textbf{Average test accuracy}} & \multicolumn{5}{c}{\textbf{Worst case test accuracy}}\\
        \coloredMidrule{white}{alternateRowColor}
        \rowcolor{alternateRowColor}
        FEMNIST & 0 & 25 & 50 & 75 & 100 & 0 & 25 & 50 & 75 & 100  \\
        \coloredBelowRuleSep{white}
        
        \quad ICRM       &  78.7  & 87.2 & 87.4  & 87.5 &  87.8 &  59.8 & 69.3  & 70.6  & 70.6 & 70.6 \\
        \quad ICRM-Mix  & 77.6 & 81.1 & 81.1 & 80.9 &  80.9 & 57.5 & 62.7 & 65.0 & 64.1 & 62.9 \\
        \coloredMidrule{white}{alternateRowColor}
        \rowcolor{alternateRowColor}
        Rotated MNIST & 0 & 25 & 50 & 75 & 100 & 0 & 25 & 50 & 75 & 100  \\
        \coloredBelowRuleSep{white}
        \quad ICRM       &  93.6 &   96.1 &  96.2 & 96.2& 96.2 &  82.5 &  88.5 &  88.5 & 88.8 & 88.8 \\
        \quad ICRM-Mix   &  88.9 & 92.6 & 92.7 & 92.6 & 92.7 & 68.8 & 77.1 & 76.8 & 76.4 & 76.6  \\
        \coloredMidrule{white}{alternateRowColor}
        \rowcolor{alternateRowColor}
        WILDS Camelyon17 & 0 & 25 & 50 & 75 & 100 & 0 & 25 & 50 & 75 & 100  \\
        \coloredBelowRuleSep{white}
        \quad ICRM       &  92.0 & 90.7  & 90.8  & 90.8 & 90.8 &  \multicolumn{5}{c}{\multirow{2}{*}{same as average accuracy}} \\
        \quad ICRM-Mix   & 92.9 & 90.7 & 90.8  & 90.7 & 90.7 \\
        \coloredMidrule{white}{alternateRowColor}
        \rowcolor{alternateRowColor}
        Tiny ImageNet-C & 0 & 25 & 50 & 75 & 100 & 0 & 25 & 50 & 75 & 100  \\
        \coloredBelowRuleSep{white}
        \quad ICRM       & 38.3 & 39.2  & 39.2  & 39.2 & 39.2 & 18.8  &  19.2 & 19.5  & 19.5 & 19.4 \\
        \quad ICRM-Mix   & 38.4 & 39.3  & 39.3 & 39.3 & 39.3 & 18.7 & 19.2 & 19.4 & 19.5 & 19.4 \\
        \bottomrule
    \end{tabular}
    \end{center}
    \label{table:ablation_iid_icl}
\end{table}

\Cref{table:ablation_iid_icl} contrasts the performance of \method{} with ICRM-Mix.
\method{} consistently outperforms ICRM-Mix across varying counts of in-context samples on both FEMNIST and Rotated MNIST. Surprisingly, ICRM-Mix and \method{} perform similarly on WILDS Camelyon17 and Tiny ImageNet-C. Consider a setting where the model benefits the most attending to examples from the same class or related classes. If classes are distributed uniformly across domains, then ICRM and ICRM-mix are bound to perform similarly. Consider another setting where the model benefits the most by attending to environment specific examples such as characters drawn by the same user. In such a case, ICRM and ICRM-mix have very different performances.

\subsection{Understanding the impact of architecture}

To dissect the performance gains potentially arising from \method{}'s transformer architecture, we explore two additional competitors.
On the one hand, we train an ERM baseline, ERM$^+$ using an identical architecture to IRCM, but without context.
On the other hand, we train an ARM baseline, ARM$^+$, where the input-context pair is provided to the same transformer as the one used by ICRM.
This is in contrast to the original implementation of ARM, where input and context are concatenated together along the channel dimension, and sent to classification to a convolutional neural network.

\begin{table}[htb!]
    \caption{Average out-of-distribution test accuracies for ARM$^{+}$ and ERM$^{+}$ in contrast to their base algorithms, ARM and ERM across FEMNIST, Rotated MNIST, WILDS Camelyon17 and  Tiny-ImageNet-C. }
    \begin{center}
    \begin{tabular}{lccccc|cccccc}
        \toprule
        \textbf{Data / method} & \multicolumn{5}{c}{\textbf{Average test accuracy}} & \multicolumn{5}{c}{\textbf{Worst case test accuracy}}\\
        \coloredMidrule{white}{alternateRowColor}
        \rowcolor{alternateRowColor}
        FEMNIST & 0 & 25 & 50 & 75 & 100 & 0 & 25 & 50 & 75 & 100  \\
        \coloredBelowRuleSep{white}
        \quad ARM       &  49.5 &  83.9 &  84.4 & 84.7 & 84.6  & 23.6  &  59.5 & 60.7  & 57.0  & 58.8 \\
        \quad ARM$^{+}$ & 71.4 & 83.4 &  84.0 & 83.8 & 83.5 & 51.7 & 63.0 & 64.0 & 60.7 & 62.0 \\
        \midrule
        \quad ERM  &  79.3  & 79.3  & 79.3  & 79.3 & 79.3 &  59.0  & 59.0 & 59.0 & 59.0 &  59.0\\
        \quad ERM$^{+}$ & 77.4 & 77.4 & 77.4 & 77.4 & 77.4 & 53.3 & 53.3  & 53.3 & 53.3 & 53.3 \\
        \coloredMidrule{white}{alternateRowColor}
        \rowcolor{alternateRowColor}
        Rotated MNIST & 0 & 25 & 50 & 75 & 100 & 0 & 25 & 50 & 75 & 100  \\
        \coloredBelowRuleSep{white}
        \quad ARM       &  36.5 & 94.2  &  95.1 & 95.3 & 95.5 &  28.2 & 85.3  & 87.2  & 87.9 & 87.9 \\
        \quad ARM$^{+}$   & 86.9 & 92.6 & 92.7 & 92.8 & 92.8  & 71.4 &  80.9 & 81.0  & 81.2 & 81.1\\
        \midrule
        \quad ERM       & 94.2 & 94.2 & 94.2  & 94.2 & 94.2 &  80.8 & 80.8  & 80.8  & 80.8 & 80.8  \\
        \quad ERM$^{+}$ & 94.3 & 94.3 & 94.3  & 94.3 & 94.3 &  81.9 & 81.9  & 81.9  & 81.9 & 81.9  \\
        \coloredMidrule{white}{alternateRowColor}
        \rowcolor{alternateRowColor}
        WILDS Camelyon17 & 0 & 25 & 50 & 75 & 100 & 0 & 25 & 50 & 75 & 100  \\
        \coloredBelowRuleSep{white}
         \quad ARM       &  61.2 &  59.5 & 59.7  & 59.7 & 59.7 &   \multicolumn{5}{c}{\multirow{2}{*}{same as average accuracy}} \\
        \quad ARM$^{+}$ & 55.8 & 55.1 & 55.0 & 55.0 & 55.0 &  &  &  &  &  \\
        \midrule
        \quad ERM       &  68.6  & 68.6  &  68.6 & 68.6 & 68.6 &  \multicolumn{5}{c}{\multirow{2}{*}{same as average accuracy}} \\
        \quad ERM$^{+}$ & 50.1 & 50.1 & 50.1 & 50.1 & 50.1 &  &  &  &  & \\
        \coloredMidrule{white}{alternateRowColor}
        \rowcolor{alternateRowColor}
        Tiny ImageNet-C & 0 & 25 & 50 & 75 & 100 & 0 & 25 & 50 & 75 & 100  \\
        \coloredBelowRuleSep{white}
        \quad ARM       &  30.8 &  31.0 &  31.0 &  31.0 & 31.0 & 8.2  & 8.3  &  8.2 & 8.3 & 8.2 \\
        \quad ARM$^{+}$ & 5.5 &5.7  & 5.7 & 5.7 & 5.7 & 1.9 & 1.9 & 1.9 & 1.9 & 1.9  \\
        \midrule
        \quad ERM       &  31.8 &  31.8 & 31.8  & 31.8 & 31.8 & 9.5  &  9.5 &  9.5 &  9.5 & 9.5  \\
        \quad ERM$^{+}$ & 29.7 & 29.7 & 29.7 & 29.7 & 29.7 & 8.3 & 8.3 & 8.3 & 8.3 & 8.3 \\
        \bottomrule
    \end{tabular}
    \end{center}
    \label{table:ablation_pluses}
\end{table}

\Cref{table:ablation_pluses} presents the performance of both ERM$^{+}$ and ARM$^{+}$ relative to their base models, ERM and ARM, across four benchmark datasets. ARM$^{+}$ demonstrates superior zero-shot performance over ARM on both FEMNIST and Rotated MNIST. However, ARM maintains a performance advantage over ARM$^{+}$ across varying counts of in-context samples on WILDS Camelyon17 and Tiny ImageNet-C, with a notably pronounced difference on the latter.  Similarly, ERM either matches or outperforms ERM$^{+}$ on all four datasets.

\begin{figure}[!htb]
  \centering
  \includegraphics[width=\textwidth]{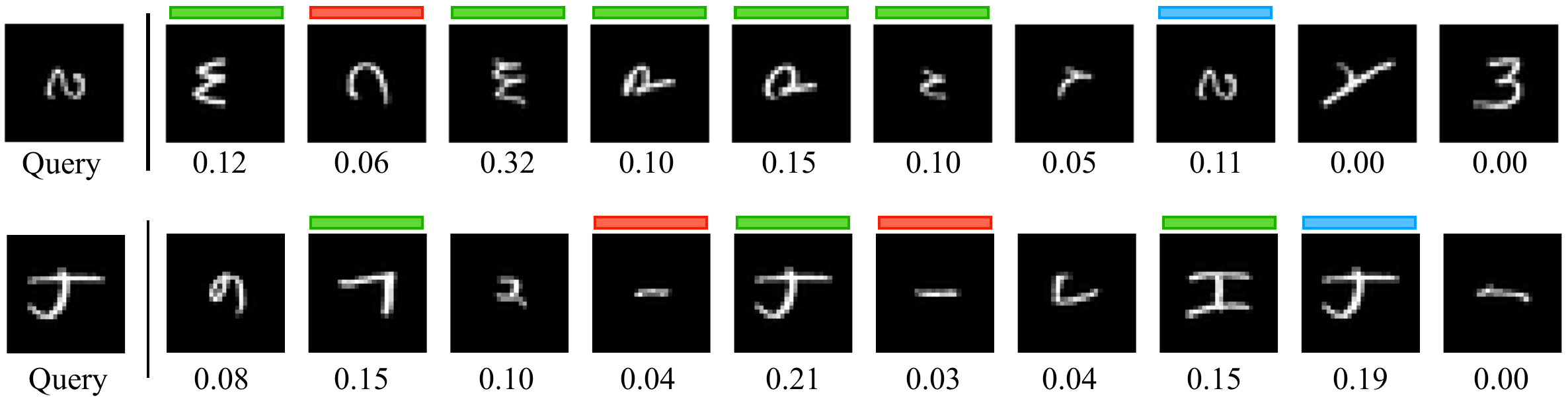}
  \includegraphics[width=\textwidth]{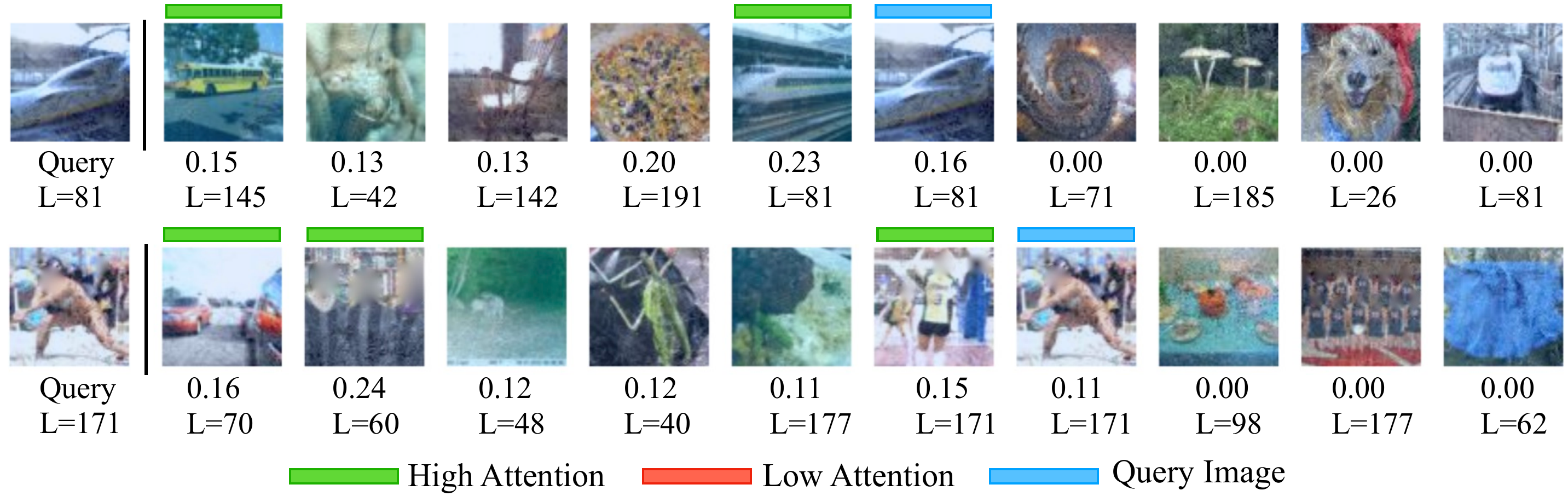}
  \caption{Attention scores for query images within randomized input sequences from test environments, as determined by \method{} on FEMNIST (top two rows) and Tiny ImageNet-C (bottom two rows). `L' denotes the label of a given image.}
  \label{fig:attn_maps}
\end{figure}

\subsection{Investigating attention in \method{}}\label{sec: secattnmap}
As discussed in \Cref{sec:dg}, one special feature of \method{} is its ability to learn an amortization function by paying attention to the input query and its context. To better understand this nuanced functionality, we turn our focus to visualizing attention maps of a trained \method{} model. Specifically, we construct a random sequence of data from the test environment and examine the attention scores between each example in this sequence and a novel input query across different heads.
\Cref{fig:attn_maps} illustrates attention scores from a single head for two query images (marked in blue) for FEMNIST and Tiny ImageNet-C.
The top row reveal that the model selectively attends to images featuring at least two curved arcs (marked in green) while paying little attention to a partial circle (highlighted in red). Additionally, when the query image is interpreted as a 90-degree clockwise rotated digit ``2'', the model extends its attention to other augmentations of ``2'' within the prompt.
Remarkably, such attention patterns emerge on unseen domains using only unlabeled examples from them, underscoring the potential of amortization!
Similarly, in the second row, attention is predominantly allocated to lines of length similar to that of the query (also in green), thereby largely disregarding shorter lines (shown in red).
The last two rows in~\Cref{fig:attn_maps} show that the model, when presented with a query image of a train, attends not only on other trains but also on a bus---indicating a semantic understanding of similarity. In the bottom panel, the model demonstrates a capability to discern individuals across samples within the prompt.

\section{Discussion}
\label{sec:discussion}

We have introduced \methodfull{} (\method{}), a framework to address domain generalization as next-token prediction.
\method{} learns in-context about environmental features by paying attention to unlabeled instances as they arrive.
In such a away, \method{} dynamically zooms-in on the test environment risk minimizer, achieving competitive out-of-distribution generalization.

\method{} provides a new perspective on invariance.
While prior work on DG focused on information removal as a guide to generalization, ICRM suggests that extending the feature space with the relevant environment information affords further invariance.
By addressing the very general problem of next-token prediction \emph{in}-distribution, we amortize the performance over many specific \emph{out-of}-distribution tasks.
This happens by virtue of moving beyond coarse environment indices, into rich, hierarchical, and partially-overlapping context vectors. 
More generally, by framing DG in terms of next-token prediciton, we enable learning machines to fully exploit data in \emph{natural order}, more closely mimicking the human learning experience.
As L\'eon Bottou once said, \emph{Nature does not shuffle data}.
As a word of caution, we must conduct research to guarantee that in-context learners do not ``zoom-in'' on toxic spurious correlations with high predictive power in certain environments. 

We would like to close with a quote from \citet{andersen2022predictive}, who claim that the central property of zooming-in on the relevant information
\begin{quote}
\small
refers to a cognitive agent’s ability to intelligently ignore irrelevant information and zero in on those aspects of the world that are relevant to their goals. The relevance realization framework suggests that the brain achieves this feat by attempting to balance the competing goals of remaining efficient in the current environment while also being resilient in the face of environmental perturbations.
\end{quote}
Paralleling the examples from \citet{andersen2022predictive}, we are excited to work to further understand how next-token prediction and in-context learning serves as a powerful mechanism to amortize and dynamically navigate trade-offs such as such as the efficiency-resiliency tradeoff, the exploration-exploitation tradeoff, specialization-generalization tradeoff, and focusing-diversifying tradeoff.

\clearpage
\newpage
\section*{Acknowledgements}
We are thankful to Martin Arjovsky, L\'eon Bottou, Elvis Dohmatob, Badr Youbi Idrissi, Maxime Oquab, and Ahmed Touati for their valuable feedback and help. 

\bibliographystyle{plainnat}
\bibliography{bibliography}

\clearpage
\newpage

\appendix

\startcontents[appendices]
\printcontents[appendices]{}{1}{\section*{Appendix}}

\section{Theorems and Proofs}
\label{sec: thms_proofs}

\subsection{Proof of Proposition \ref{prop:zoomout}}

\begin{lemma}\label{prop: bayes_opt} \textbf{\method{} is Bayes optimal at all context lengths.}  Suppose $\ell$ is the cross-entropy loss and the labels $Y$ are binary. The optimal in-context learner $\tilde{h}$ (\eqref{eqn: icl}) satisfies the following condition, i.e., for each $k \in [t] $ 
\begin{equation}
    \tilde{h}(x_k;c_k) = \mathbb{E}[Y|X_k=x_k, C_k=c_k],
\end{equation}
for almost all  $(c_k,x_k)$ in the support of training distribution except over a set of a measure zero, and where the expectation is over $Y$ conditional on $[c_k,x_k]$. In other words, the in-context learner is Bayes optimal at each context length.
\end{lemma}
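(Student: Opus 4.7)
The plan is to show that each term in the autoregressive loss sum can be minimized pointwise, and that the pointwise minimizer of the conditional cross-entropy loss for binary targets is the conditional expectation. Since the minimization over $h$ in \eqref{eqn: icl} is unconstrained (no tying across positions beyond the shared $h$, and distinct $(x_k,c_k)$ pairs across different $k$ can be treated as separate points in the input space), the positions decouple and we get a pointwise characterization.

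First, I would expand the objective using the tower property:
\begin{equation*}
 \sum_{k=1}^{t}\mathbb{E}[\ell(h(X_k;C_k),Y_k)] = \sum_{k=1}^{t}\mathbb{E}_{(X_k,C_k)}\big[\mathbb{E}[\ell(h(X_k;C_k),Y_k)\mid X_k,C_k]\big].
\end{equation*}
For each fixed $(x_k,c_k)$, the inner conditional expectation is a function of the scalar $h(x_k;c_k)\in[0,1]$ only. Writing $p := \mathbb{E}[Y_k\mid X_k=x_k,C_k=c_k] = P(Y_k=1\mid X_k=x_k,C_k=c_k)$ and $q := h(x_k;c_k)$, the binary cross-entropy conditional risk becomes $-p\log q - (1-p)\log(1-q)$, which is strictly convex in $q$ on $(0,1)$ with unique minimizer $q^\star = p$ (differentiate and set to zero: $-p/q + (1-p)/(1-q) = 0$).

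Next, since $h$ ranges over all measurable functions of $(x,c)$, I can choose $h$ to independently achieve the pointwise minimizer at almost every $(x_k,c_k)$ and at every $k\in[t]$, and thereby simultaneously minimize every term in the sum. Measurability of $(x,c)\mapsto \mathbb{E}[Y\mid X=x, C=c]$ follows from the standard existence of a regular conditional expectation. Hence $\tilde h(x_k;c_k) = \mathbb{E}[Y_k\mid X_k=x_k,C_k=c_k]$ for almost every $(x_k,c_k)$ in the support of the joint training distribution, with uniqueness up to a null set coming from strict convexity.

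I do not expect a real obstacle here; the only subtlety is being careful that the \emph{same} function $h$ is used at every position $k$, so one should clarify that the minimizer $h$ is defined on the union over $k$ of the supports of $(X_k,C_k)$ (these live in spaces of different dimension, so one can think of the inputs as tagged by their context length, or extend $h$ arbitrarily outside each support without affecting the objective). With that mild bookkeeping, the pointwise argument goes through and yields the claim.
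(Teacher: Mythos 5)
Your proposal is correct and takes essentially the same route as the paper: both decouple the autoregressive objective into independent pointwise minimizations over $(x_k, c_k)$ at each position $k$, and both rest on the fact that the binary cross-entropy conditional risk is uniquely minimized by the true conditional probability, yielding $\tilde{h}(x_k;c_k)=\mathbb{E}[Y\mid X_k=x_k, C_k=c_k]$ almost everywhere. The paper phrases the pointwise fact through the decomposition of expected cross-entropy into conditional entropy plus a nonnegative expected KL term (zero only when the predicted distribution equals $P(Y_k\mid X_k,C_k)$ a.e.), whereas you differentiate the conditional risk and invoke strict convexity; these are interchangeable, and your bookkeeping remark about tagging inputs by context length is a harmless clarification the paper leaves implicit.
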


\begin{proof}

In this result, we consider the problem of binary classification. Suppose $h(x_k; c_k)$ is the probability of class $Y=1$. Define $\bar{h}(x_k; c_k) = \big[h(x_k; c_k), 1-h(x_k;c_k)\big]$ describing the probability of both the classes.

From \eqref{eqn: icl}, recall that  the objective of \method{} is to minimize
\begin{equation}
    \begin{split}
     \sum_{j=1}^{t}\mathbb{E}_{(X, C, Y)}[\ell(h(X_j; C_j), Y_j)]
    \end{split}
    \label{eqn: icl_objective}
\end{equation}

Consider one of the terms in the sum above -  $\mathbb{E}\big[\ell(h(X_k; C_k),Y_k)\big]$.  Substituting $\ell$ as the cross-entropy in this term, we obtain 
$$\mathbb{E}\big[\ell(h(X_k; C_k),Y_k)\big] = H(Y_k |X_k, C_k) + \mathbb{E}\big[\mathsf{KL}\big(P(Y_k|X_k, C_k) \big\| \bar{h}(X_k; C_k)\big)\big]$$

If $\bar{h}(X_k; C_k) = P(Y_k|X_k, C_k)$, then the second term in the above is zero and $\mathbb{E}\big[\ell(h(X_k; C_k),Y_k)\big]$ equals $ H(Y_k |X_k; C_k)$. Since KL divergence is always non-negative,  $H(Y_k |X_k, C_k)$ corresponds to the lowest value that can be achieved by $\mathbb{E}\big[\ell(h(X_k; C_k),Y_k)\big]$.  
If $\bar{h}(X_k; C_k) = P(Y_k|X_k, C_k)$ for all $k \in [t]$, then each of the terms in the sum in \eqref{eqn: icl_objective} are minimized. As a result, $\bar{h}(X_k; C_k) = P(Y_k|X_k, C_k)$ for all $k \in [t]$ is a solution to \eqref{eqn: icl}.  

Consider another minimizer $h^{'}$ of \eqref{eqn: icl} and define the  corresponding distribution $\bar{h}^{'}$. For each $k \in [t]$,  the second term  $\mathbb{E}\big[\mathsf{KL}(P(Y_k|X_k, C_k) \| \bar{h}^{'}(X_k; C_k)\big]$ has to be zero for $\bar{h}^{'}$ to be a minimizer. 

 If $\mathbb{E}\big[\mathsf{KL}(P(Y_k|X_k, C_k) \| \bar{h}^{'}(X_k; C_k)\big]=0$, then we claim that $\bar{h}^{'}(x_k; c_k) = P(Y_k|X_k=x_k, C_k=c_k)$ for almost all $(x_k, c_k)$ in the support of training distribution except over a set of measure zero. If the probability measure associated with $X_k, C_k$ is absolutely continuous w.r.t Lebesgue measure, then this follows from Theorem 1.6.6 \citep{ash2000probability}. If the probability measure associated with $X_k, C_k$ is absolutely continuous w.r.t counting measure, then this trivially follows. 
\end{proof}

We proved the above result for classification and cross-entropy loss for measures over $X,C$ that are either absolutely continuous w.r.t Lebesgue measure or the counting measure. It is easy to extend the above result for regressions and least square loss; see Lemma 1 in \cite{ahuja2023closer}.

\zoomout*

\begin{proof}
From~\Cref{prop: bayes_opt}, it follows that $ \tilde{h}(x_k; c_k) = \mathbb{E}[Y|X_k=x_k, C_k=c_k]$. The solution to empirical risk minimization $h^{\dagger}(x) = \mathbb{E}[Y|X_1=x]$, where the expectation is  computed over the training distribution of $Y$ conditional on $x$. When the context is empty, then we have $ \tilde{h}(x; \emptyset) = \mathbb{E}[Y|X_1=x]=h^{\dagger}(x)$ for almost all $x$ in the support of training distribution except over a set of measure zero.  
\end{proof}

\subsection{Proof of Theorem \ref{thm: zoomin}}
\zoomin*

\begin{proof}
In this proof, we assume that all the concerned random variables $X,Y,C_t$, where $X$ is the current query and $Y$ is its label and $C_t$ is the context preceeding it, and $b(X,C_t)$ are discrete-valued for ease of exposition. Subsequently, we provide a proof for more general settings.

Since each $(X_j,Y_j)$ is sampled independently given a training environment $E$, we can conclude $I(Y; C_t | X,E)=0$. Therefore, 
$$I(Y; C_t | X,E)=0 \implies H(Y|X,E) = H(Y|X,E,C_t)$$

Observe that for all $t \in \mathbb{Z}_{+}$
\begin{equation}
   H(Y|X,E) =  H(Y|X,E,C_t) \leq H(Y|X,C_t) \leq H(Y|X, b(X, C_t)) 
    \label{eqn: proof_zoom_in_ineq1}
\end{equation} 
where $\mathbb{Z}_{+}$ is the set of all positive integers.  
The first inequality in the above follows from the fact that conditioning reduces entropy. For the second inequality, we use the following property. Consider $U,V$ as two random variables and define $W = a(V)$. 
Observe that $I(U; W | V) = 0 \implies H(U|V) = H(U|V,W) \leq H(U|W)$. 

Since the inequality above \eqref{eqn: proof_zoom_in_ineq1} holds for all $t$, we obtain

\begin{equation}
    H(Y|X,E) \leq \lim_{t\rightarrow \infty} H(Y|X,C_t) \leq \lim_{t\rightarrow \infty }H(Y|X, b(X, C_t))
    \label{eqn: proof_iid_zoom_in_ineq2_2}
\end{equation}

Next, we argue that $\lim_{t\rightarrow \infty }H(Y|X, b(X, C_t)) =  H(Y|X,E)$, which combined with \eqref{eqn: proof_iid_zoom_in_ineq2_2} yields what we intend to prove, i.e., $\lim_{t\rightarrow \infty} H(Y|X,C_t)= H(Y|X,E)$.

 For each $X=x$ and $E=e$ in the support, we argue that $b(X,C_t)\stackrel{a.s.}{\rightarrow} \theta_x^{e}$. Suppose this was not true. This implies that the probability that $P(\lim_{t\rightarrow \infty}b(X,C_t)\not= \theta_x^{e}|X=x, E=e)=\beta>0$. Since $X=x$, $E=e$ occurs with a finite probability (as $X$ and $E$ are discrete-valued and $x,e$ is in the support) say $\alpha$, then $\alpha\beta$ fraction of sequences of $b(X,C_t)$ do not converge to $\theta_x^{e}$, which contradicts the assumption that $b(X,C_t) \stackrel{a.s.}{\rightarrow} \theta_X^{E}$.

Consider a $(x,\theta)$ from the support of $(X, \theta_X^{E})$, where $X$ is the current query and $E$ is the environment from which $X$ and context preceeding it is sampled. Let us consider the distribution $P(Y|X, b(X, C_t))$

\begin{equation}
    \begin{split}
        P(Y=y|X=x, b(X, C_t)=\theta) = \frac{P(Y=y, X=x, b(X, C_t)=\theta)}{P(X=x, b(X, C_t)=\theta)}
    \end{split}
\end{equation}

We simplify $\lim_{t\rightarrow \infty }P(Y|X, b(X, C_t))$ below.

\begin{equation}
    \begin{split}
&      \lim_{t \rightarrow \infty}  P(Y=y|X=x, b(X, C_t)=\theta) = \frac{ \lim_{t\rightarrow \infty}P(Y=y, X=x, b(X, C_t)=\theta )}{ \lim_{t\rightarrow \infty}P(X=x, b(X, C_t)=\theta)} \\  
\end{split}
\end{equation}

We simplify the numerator and the denominator of the above separately below. 
\begin{equation}
\begin{split}
& \lim_{t\rightarrow \infty} P(Y=y, X=x, b(X, C_t)=\theta ) =  \lim_{t\rightarrow \infty} \sum_{e} P(Y=y, X=x, E=e, b(X, C_t)=\theta )  \\ 
&  \sum_{e} P(Y=y | X=x, E=e) \lim_{t\rightarrow \infty} P(X=x, E=e, b(X, C_t)=\theta )  \\
& =  \sum_{e} P(Y=y | X=x, E=e)  P(X=x, E=e) \lim_{t\rightarrow \infty}P(b(X, C_t)=\theta |X=x, E=e)  \\ 
    \end{split}
\end{equation} 

In the simplification above, we use the fact $Y \perp C_t | X, E$.  Since $b(X, C_t)$ converges to $\theta_x^{e}$ almost surely, the distribution  $\lim_{t\rightarrow \infty} P(b(X, C_t)=\theta |X=x, E=e) $ takes a value one if $\theta=\theta_x^e$ and zero otherwise. As a result, the above expression becomes.

\begin{equation}
    \lim_{t \rightarrow \infty}  P(Y=y, X=x, b(X, C_t)=\theta) = \sum_{e \in \mathcal{E}_{x,\theta}} P(Y=y | X=x, E=e)  P(X=x, E=e)
\end{equation}
where $\mathcal{E}_{x,\theta}$ is the set of all the environments observed conditional on $X=x$ with  $\theta_x^e =\theta$. Observe that all the environments in $\mathcal{E}_{x,\theta}$ have the same $P(Y=1|X=x,E=e)$ given by $h^{\star}(x,\theta)$.  We can write

  \begin{equation}\lim_{t \rightarrow \infty}  P(Y=1, X=x, b(X, C_t)=\theta)= h^{\star}(x, \theta) \sum_{e \in \mathcal{E}_{x,\theta}}P(X=x, E=e) 
  \label{eqn: proof_iid_zoomin_num}
  \end{equation}

We simplify $\lim_{t \rightarrow \infty} P (X=x, b(X, C_t)=\theta) $ in a similar manner to obtain
\begin{equation}
     \lim_{t \rightarrow \infty} P (X=x, b(X, C_t)=\theta) = \sum_{e \in \mathcal{E}_{x,\theta}}  P(X=x, E=e)
      \label{eqn: proof_iid_zoomin_den}
\end{equation}

We use \eqref{eqn: proof_iid_zoomin_num} and \eqref{eqn: proof_iid_zoomin_den} to obtain 

\begin{equation}
\begin{split}
     \lim_{t \rightarrow \infty}  P(Y=1|X=x, b(X, C_t)=\theta) &= \frac{\lim_{t \rightarrow \infty}  P(Y=1, X=x, b(X, C_t)=\theta)}{\lim_{t \rightarrow \infty} P(X=x, b(X, C_t)=\theta)} \\ 
     & = \frac{h^{\star}(x, \theta) \sum_{e \in \mathcal{E}_{x,\theta}}P(X=x, E=e)}{\sum_{e \in \mathcal{E}_{x,\theta}}  P(X=x, E=e)} = h^{\star}(x,\theta) 
\end{split}
\end{equation}

Therefore,  
\begin{equation} \lim_{t \rightarrow \infty}  P(Y=1|X=x, b(X, C_t)=\theta) = P(Y=1|X=x,E=e)
\label{eqn: proof_prob_eq_conv1}
\end{equation}

where $e$ is any environment in $\mathcal{E}_{x,\theta}$, i.e., it is in the support of data sampled with $X=x$ and that also satisfies $\theta_{x}^{e}=\theta$.

\begin{equation}
    \begin{split}
       & \lim_{t\rightarrow \infty} H(Y | X, b(X, C_t)) =  \sum_{x,\theta} \lim_{t\rightarrow \infty} P(X=x, b(X,C_t) = \theta) \lim_{t\rightarrow \infty} H(Y|X=x, b(X,C_t)=\theta) \\ 
       &  \sum_{x,\theta} \Big(\sum_{\tilde{e}\in \mathcal{E}_{x,\theta}} P(X=x, E=\tilde{e})\Big) \lim_{t\rightarrow \infty} H(Y|X=x, b(X,C_t)=\theta)
    \end{split}
\end{equation}

From  \eqref{eqn: proof_prob_eq_conv1}, it follows that $\lim_{t \rightarrow \infty} H(Y|X=x, b(X,C_t)=\theta) = H(Y|X=x,E=e) $, where $e$ is any environment in $\mathcal{E}_{x,\theta}$. We use this in the above to get
\begin{equation}
    \begin{split}
       & \lim_{t\rightarrow \infty} H(Y | X, b(X, C_t)) =  \sum_{x,\theta} \Big(\sum_{\tilde{e}\in \mathcal{E}_{x,\theta}} P(X=x, E=\tilde{e})\Big) H(Y|X=x, E=e) \\ 
       & =  \sum_{x,\theta} \Big(\sum_{\tilde{e}\in \mathcal{E}_{x,\theta}} P(X=x, E=\tilde{e})\Big) H(Y|X=x, E=\tilde{e})  \\
       & = \sum_{x,\tilde{e}}  P(X=x, E=\tilde{e}) H(Y|X=x, E=\tilde{e}) = H(Y|X,E)
       \end{split}
\end{equation}

\end{proof}

\begin{proof}
We now extend the previous result to setting beyond discrete random variables. In particular, we consider settings where $X, E, b(X,C_t)$ can be either discrete or continuous random variables.  In the notation to follow, we use $dP$ to denote the Radon-Nikodym derivatives. For discrete random variable, the Radon-Nikodym derivatives correspond to the standard probability mass function and for continuous random variables it would correspond to standard probability density functions. While much of the proof that follows is same as the previous proof, we repeat the arguments for completeness.

Since each $(X_j,Y_j)$ is sampled independently given a training environment $E$, we can conclude $I(Y; C_t | X,E)=0$. Therefore, 
$$I(Y; C_t | X,E)=0 \implies H(Y|X,E) = H(Y|X,E,C_t)$$

Observe that for all $t \in \mathbb{Z}_{+}$
\begin{equation}
   H(Y|X,E) =  H(Y|X,E,C_t) \leq H(Y|X,C_t) \leq H(Y|X, b(X, C_t)) 
    \label{eqn: proof_zoom_in_ineq1_2}
\end{equation} 
where $\mathbb{Z}_{+}$ is the set of all positive integers.  
The first inequality in the above follows from the fact that conditioning reduces entropy. For the second inequality, we use the following property. Consider $U,V$ as two random variables and define $W = a(V)$. 
Observe that $I(U; W | V) = 0 \implies H(U|V) = H(U|V,W) \leq H(U|W)$. 

Since the inequality above \eqref{eqn: proof_zoom_in_ineq1_2} holds for all $t$, we obtain

\begin{equation}
    H(Y|X,E) \leq \lim_{t\rightarrow \infty} H(Y|X,C_t) \leq \lim_{t\rightarrow \infty }H(Y|X, b(X, C_t))
    \label{eqn: proof_iid_zoom_in_ineq2}
\end{equation}

Next, we argue that $\lim_{t\rightarrow \infty }H(Y|X, b(X, C_t)) =  H(Y|X,E)$, which combined with \eqref{eqn: proof_iid_zoom_in_ineq2} yields what we intend to prove, i.e., $\lim_{t\rightarrow \infty} H(Y|X,C_t)= H(Y|X,E)$.

 For each $X=x$ and $E=e$ in the support except over a set of probability measure zero, we argue that $b(X,C_t)\stackrel{a.s.}{\rightarrow} \theta_x^{e}$. Suppose this was not true. Define $\Gamma$ to be the set of values of $x,e$ for which $b(X,C_t)\stackrel{a.s.}{\not\rightarrow} \theta_x^{e}$. Let $P((X,E)\in \Gamma)>0$ and the probability that $P(\lim_{t\rightarrow \infty}b(X,C_t)\not= \theta_X^{E}|(X,E) \in \Gamma)>0$. If this is true then $P(\lim_{t\rightarrow \infty}b(X,C_t)\not= \theta_x^{e})>0$ contradicts the fact that  $b(X,C_t) \stackrel{a.s.}{\rightarrow} \theta_X^{E}$. Therefore, $P((X,E)\in \Gamma)=0$.

Consider a $(x,\theta)$ from the support of $(X, \theta_X^{E})$ except from $\Gamma$, where $X$ is the current query and $E$ is the environment from which $X$ and context preceeding it is sampled. 
Let us consider the distribution $dP(Y|X, b(X, C_t))$

\begin{equation}
    \begin{split}
        dP(Y=y|X=x, b(X, C_t)=\theta) = \frac{dP(Y=y, X=x, b(X, C_t)=\theta)}{dP(X=x, b(X, C_t)=\theta)}
    \end{split}
\end{equation}

We simplify $\lim_{t \rightarrow \infty}  dP(Y=y|X=x, b(X, C_t)=\theta)$ below.

\begin{equation}
     \lim_{t \rightarrow \infty}  dP(Y=y|X=x, b(X, C_t)=\theta) = \frac{\lim_{t\rightarrow \infty} dP(Y=y, X=x, b(X, C_t)=\theta )}{\lim_{t\rightarrow \infty} dP(X=x, b(X, C_t)=\theta)}
\end{equation}

We simplify the numerator and the denominator of the above separately. 

\begin{equation}
    \begin{split}
& \lim_{t\rightarrow \infty} dP(Y=y, X=x, b(X, C_t)=\theta ) =  \lim_{t\rightarrow \infty} \int_{e} dP(Y=y, X=x, E=e, b(X, C_t)=\theta )   \\ 
&  \int_{e} dP(Y=y | X=x, E=e)  \lim_{t\rightarrow \infty} dP(X=x, E=e, b(X, C_t)=\theta ) \\
& =  \int_{e} dP(Y=y | X=x, E=e)  dP(X=x, E=e) \lim_{t\rightarrow \infty}dP(b(X, C_t)=\theta |X=x, E=e)   \\ 
    \end{split}
\end{equation} 
In the above, we use Monotone convergence theorem to swap limit and the integrals. Since $b(X, C_t)$ converges to $\theta_x^{e}$ almost surely, the distribution  $\lim_{t\rightarrow \infty} dP(b(X, C_t)=\theta |X=x, E=e) $ evaluates to probability one when $\theta= \theta_{x}^{e}$ and is zero otherwise. As a result, the above expressions become

\begin{equation}
    \lim_{t \rightarrow \infty} dP(Y=y, X=x, b(X, C_t)=\theta) = \int_{e\in \mathcal{E}_{x,\theta}} dP(Y=y | X=x, E=e)  dP(X=x, E=e) 
\end{equation}

where $\mathcal{E}_{x,\theta}$ is the set of all the environments observed conditional on $X=x$ with  $\theta_x^e =\theta$. Observe that all the environments in $\mathcal{E}_{x,\theta}$ have the same $dP(Y=1|X=x,E=e)$ given by $h^{\star}(x,\theta)$. Similarly, 

\begin{equation}
     \lim_{t \rightarrow \infty} dP (X=x, b(X, C_t)=\theta) = \int_{e \in \mathcal{E}_{x,\theta}}  dP(X=x, E=e)
\end{equation}

 As a result, we can write 

$$  \lim_{t \rightarrow \infty}  dP(Y=1, X=x, b(X, C_t)=\theta)= h^{\star}(x, \theta) \int_{e \in \mathcal{E}_{x,\theta}}dP(X=x, E=e)$$

We use this to obtain 

\begin{equation}
\begin{split}
     \lim_{t \rightarrow \infty}  dP(Y=1|X=x, b(X, C_t)=\theta) &= \frac{\lim_{t \rightarrow \infty}  dP(Y=1, X=x, b(X, C_t)=\theta)}{\lim_{t \rightarrow \infty} dP(X=x, b(X, C_t)=\theta)} \\ 
     & = \frac{h^{\star}(x, \theta) \int_{e \in \mathcal{E}_{x,\theta}}dP(X=x, E=e)}{\int_{e \in \mathcal{E}_{x,\theta}}  dP(X=x, E=e) } = h^{\star}(x,\theta) 
\end{split}
\end{equation}

Therefore,  
\begin{equation} \lim_{t \rightarrow \infty}  dP(Y=y|X=x, b(X, C_t)=\theta) = dP(Y=y|X=x,E=e)
\label{eqn: proof_prob_eq_conv}
\end{equation}

where $e$ is any environment that is in the support of data sampled with $X=x$ and that also satisfies $\theta_{x}^{e}=\theta$.

\begin{equation}
    \begin{split}
       & \lim_{t\rightarrow \infty} H(Y | X, b(X, C_t)) =  \int_{x,\theta} \lim_{t\rightarrow \infty} dP(X=x, b(X,C_t) = \theta) \lim_{t\rightarrow \infty} H(Y|X=x, b(X,C_t)=\theta) \\ 
       &  \int_{x,\theta} \Big(\int_{\tilde{e}\in \mathcal{E}_{x,\theta}} dP(X=x, E=\tilde{e})\Big) \lim_{t\rightarrow \infty} H(Y|X=x, b(X,C_t)=\theta)
    \end{split}
\end{equation}

From  \eqref{eqn: proof_prob_eq_conv}, it follows that $\lim_{t \rightarrow \infty} H(Y|X=x, b(X,C_t)=\theta) = H(Y|X=x,E=e) $, where $e$ is any environment in $\mathcal{E}_{x,\theta}$. We use this in the above to get
\begin{equation}
    \begin{split}
       & \lim_{t\rightarrow \infty} H(Y | X, b(X, C_t)) =  \int_{x,\theta} \Big(\int_{\tilde{e}\in \mathcal{E}_{x,\theta}} dP(X=x, E=\tilde{e})\Big) H(Y|X=x, E=e) \\ 
       & =  \int_{x,\theta} \Big(\int_{\tilde{e}\in \mathcal{E}_{x,\theta}} dP(X=x, E=\tilde{e})\Big) H(Y|X=x, E=\tilde{e})  \\
       & = \int_{x,\tilde{e}}  dP(X=x, E=\tilde{e}) H(Y|X=x, E=\tilde{e}) = H(Y|X,E)
       \end{split}
\end{equation}

\end{proof}

\subsection{Proof of Theorem \ref{thm: partialzoomin}}
\partialzoomin*

\begin{proof}

Let us consider the setting where the context is of length one. We denote the current query as $X$ with corresponding label $Y$ and environment $E$. The example in the context is $\tilde{X}$ which has corresponding label $\tilde{Y}$ and it shares the same environment $E$. Recall that as part of the context, the learner only sees $\tilde{X}$ and not $\tilde{Y}$. Both $Y$ and $E$ are real-valued scalars and $X$ is a $d$ dimensional vector.  

Following the assumption in the theorem, the distribution of $(\tilde{X},\tilde{Y},X,Y,E)$ is Markov with respect to a Bayesian network.  We first establish that $E$ cannot be a child of any variable in the directed acyclic graph (DAG). The assumption $(X,Y) \perp (\tilde{X}, \tilde{Y}) | E $  implies
$ X \perp \tilde{X} | E$ and $Y\perp \tilde{Y}|E$. Suppose $E$ is a child variable of $Y$. Due to the symmetry, $(X,Y,E)$ and $(\tilde{X},\tilde{Y}, E)$ follow the same distribution. As a result, $E$ is also a child variable of $\tilde{Y}$, which implies $Y\not\perp \tilde{Y}|E$ (since $E$ is a collider on the path from $Y$ to $\tilde{Y}$). This contradicts $Y\perp \tilde{Y}|E$. Suppose $E$ is a child variable of some component of $X$ say $X^i$. Due to symmetry, $E$ is also a child variable of $\tilde{X}^{i}$, which implies $X^{i} \not\perp \tilde{X}^{i} |E$. This contradicts $X \perp \tilde{X} | E$. Therefore, $E$ cannot be a child of any of the variables in the DAG.

Since both $X$ and $E$ form the Markov blanket of $Y$, there are two possible cases. Either $E$ is directly connected to $Y$ or $E$ is connected to $Y$ through some element of $X$. 

In the first case, $E$ can only have an arrow into $Y$ and not the other way around as $E$ is not a child of any other node. Let us consider the setting when $E$ is one of the parents of $Y$ and denote it as $E \rightarrow Y$.  Since $X$ ($\tilde{X}$) is on the  Markov Blanket  of $Y$ ($\tilde{Y}$), we claim that each component of $X$ is either a parent of $Y$ or a child of $Y$.  Suppose this was not the case. This implies that there exists a component of $X$ say $X^i$, which is on the Markov Blanket as a parent of $E$. But that would make $E$ a child of $Y$. However, $E$ cannot be a child variable as shown above. As a result, each component of $X$ is either a parent or a child of $Y$. We now consider two subcases.

Let us consider the setting when there exists a child $X^{i}$  of $Y$. Observe that $\tilde{X}^{i}$ is a child of $\tilde{Y}$ and it has a path to $E$ and as a result it has a path to $Y$. This path from elements of $\tilde{X}^{i}$ to $\tilde{Y}$ passes through $E$. This path has no colliders and does not contain any element of $X$ on it (We show this case in~\Cref{figure:proof_partial_a}). As a result, $Y \not\perp \tilde{X}^{i} | X$. Thus $I(Y;\tilde{X}|X)>0$ (use chain rule of mutual information). 

    Let us consider the other setting when each $X^i$ is a parent of $Y$ (shown in~\Cref{figure:proof_partial_b}). In this case, $E$ has to have a path to some element of $X$, say $X^j$ as otherwise $E\perp X$, which contradicts the assumption that $E\not\perp X$. Consider the path $\tilde{X}^{j}$ to $E$ to $Y$. Observe that this path is not blocked. As a result,  $I(Y;\tilde{X}|X)>0$. 
    
    Let us consider the other possibility when $Y$ is connected to $E$ through $X$. Here the only way this is possible is if some element  of $X$ say $X^{i}$ is a child of $Y$ and $E$ is a parent of that element (as shown in~\Cref{figure:proof_partial_c}). Therefore, we know that $\tilde{X}^{i}$ is connected to $Y$ through $E$ and $X^{i}$. 
    
    Observe that this path from $\tilde{X}^{i}$ to $Y$ is not blocked as $X^{i}$ is a collider. Therefore, $I(Y; \tilde{X}|X)>0$. 
    We showed the result so far assuming that the context length was one. Suppose that the context has $k-1$ examples denoted as $C_k=[X_{1}\cdots, X_{k-1}]$. The chain rule of mutual information tells us $I(Y; C_k | X) = I(Y; X_{k-1} | X) + I(Y; C_{k-1}|X, X_{k-1})$. The proof above already demonstrates that the first term $I(Y; X_{k-1} | X)$ is strictly positive. Since mutual information is non-negative, we can conclude that $I(Y; C_k | X)>0$. 

    Next, we want to argue that entropy strictly reduces as context length increases. In other words, $$H(Y|X,C_k) < H(Y|X,C_{k-1}) \iff I(Y; X_k | X, C_{k-1})>0$$

  We want to show $Y \not\perp X_k | (X, C_{k-1})$.  In the proof above, we had three cases shown in~\Cref{figure:proof_partial}. In each of these cases, we argued that the path from $X_k$ to $Y$ is not blocked. Even if we condition on contexts $C_{k-1}$ this continues to be the case. In the first two cases, the path from $X_k$ to $Y$ is direct and does not contain any element from the conditioning set. In the third case, the direct path involves a collider $X$ from the conditioning set and thus is also not blocked. As a result, $Y \not\perp X_k | (X, C_{k-1})$. This completes the proof.

    \begin{figure}
  \centering
    \begin{subfigure}[b]{0.3\textwidth}
    \centering
    \resizebox{\textwidth}{!}{%
    \begin{tikzpicture}[->,>=stealth',node distance=1.5cm,
                    thick,main node/.style={circle,draw,font=\footnotesize, minimum size=0.8cm}]
  \node[main node] (E) {$E$};
  \node[main node] (Y) [below left of=E] {$Y$};
   \node[main node] (Yp) [below right of=E] {$\tilde{Y}$};
  \node[main node] (X) [below right of=Y] {\(X^i\)};
   \node[main node] (Xp) [below right of=Yp] {\(\tilde{X}^{i}\)};
  \path[every node/.style={font=\footnotesize}]
    (E) edge node {} (Y)
    (E) edge node {} (Yp)
    (Y) edge node {} (X) 
    (Yp) edge node {} (Xp);
\end{tikzpicture}}
    \caption{Case 1.}  \label{figure:proof_partial_a}
  \end{subfigure}
  \hfill
  \begin{subfigure}[b]{0.3\textwidth}
    \centering
    \resizebox{\textwidth}{!}{%
    \begin{tikzpicture}[->,>=stealth',node distance=1.5cm,
                    thick,main node/.style={circle,draw,font=\footnotesize, minimum size=0.8cm}]
 \node[main node] (E) {$E$};
  \node[main node] (X) [left of=E] {$X$};
   \node[main node] (Xp) [right of=E] {$\tilde{X}$};
  \node[main node] (Y) [below left of=E] {\(Y\)};
   \node[main node] (Yp) [below right of=E] {\(\tilde{Y}\)};
  \path[every node/.style={font=\footnotesize}]
    (E) edge node {} (Y)
    (E) edge node {} (Yp)
    (E) edge node {} (X) 
    (E) edge node {} (Xp) 
    (X) edge node {} (Y) 
    (Xp) edge node {} (Yp);
\end{tikzpicture}}
    \caption{Case 2.}
   \label{figure:proof_partial_b}
  \end{subfigure}
  \hfill
  \begin{subfigure}[b]{0.3\textwidth}
    \centering
    \resizebox{\textwidth}{!}{%
        \begin{tikzpicture}[->,>=stealth',node distance=1.5cm,
                    thick,main node/.style={circle,draw,font=\footnotesize, minimum size=0.8cm}]
 \node[main node] (Y) {$Y$};
  \node[main node] (E) [right of=Y] {$E$};
   \node[main node] (X) [below right of=Y] {$X^{i}$};
  \node[main node] (Yp) [right of=E] {$\tilde{Y}$};
   \node[main node] (Xp) [below right of=E] {\(\tilde{X}^{i}\)};
  \path[every node/.style={font=\footnotesize}]
    (E) edge node {} (X)
    (Y) edge node {} (X) 
    (E) edge node {} (Xp) 
    (Yp) edge node {} (Xp);
\end{tikzpicture}
}
    \caption{Case 3.}
    \label{figure:proof_partial_c}
  \end{subfigure}
  \caption{Illustrating the different key cases for~\Cref{thm: partialzoomin}.
   }
  \label{figure:proof_partial}
\end{figure}
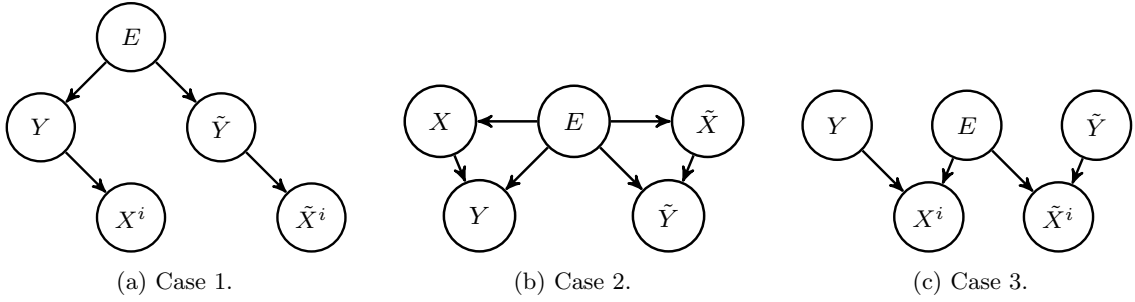
\end{proof}

\paragraph{Remark on the~\Cref{thm: partialzoomin}} It is possible to extend~\Cref{thm: partialzoomin} to the case when only a subset of $X$ and $E$ form the Markov blanket. Observe that the analysis of Case a) and Case c) in~\Cref{figure:proof_partial_a}, ~\Cref{figure:proof_partial_c} does not change. The analysis of Case b) is more nuanced now. In Case b), we used the fact that $E$ is connected to $X$ that is on the Markov blanket. This need not be the case if only a subset of $X$ is on the Markov blanket. Suppose $X_{\mathsf{MB}}$ denote the set of $X$ that are on the Markov Blanket. If $E$ is connected to any member of $X_{\mathsf{MB}}$, the same analysis as Case b) continues to hold. Consider the case when $E$ is connected to some other member of $X$ that is not in $X_{\mathsf{MB}}$. Denote this member as $X^{i}$. Observe that the same element $\tilde{X}^{i}$ from $\tilde{X}$ will have a direct path into $Y$ through $E$ that is not blocked. As a result, even in this case conditioning on $\tilde{X}$ helps.

\subsection{Proof of Theorem \ref{thm: oodzoomin}}

\oodzoomin*

\begin{proof} The learning algorithm works as follows.  
For each $e,y$ pair in the training data, define the set of $x's$ as $\mathcal{D}_{x}^{e,y}$. Maximize the likelihood of  $\mathcal{D}_{x}^{e,y}$ assuming that the underlying distribution is Gaussian.  This can be stated as 
    $$\hat{\mu}_e^y, \hat{\Sigma}_e^y = \argmin_{\mu_e^y, \Sigma_y^e} \Big( \sum_{x \in \mathcal{D}_{x}^{e,y}}\Big[\|x-\mu_{e}^{y}\|_{(\Sigma_{e}^{y})^{-1}}^2 \Big]-\log(\mathsf{det}(\Sigma_{y}^e))\Big)$$

The solution to the above are standard sample mean based estimators of means and covariance.  Also, use a sample mean based estimator to estimate the probability of each class in environment $e$ and denote it as $\hat{p}_e^y$.  Define $\hat{\gamma}_e = [(\hat{p}_e^{0}, \hat{\mu}_e^{0}, \hat{\Sigma}_e^{0}), (\hat{p}_e^{1}, \hat{\mu}_e^{1}, \hat{\Sigma}_e^{1})]$. The model at test time works as follows. 

\begin{itemize}
    \item We are given samples $\mathcal{D}_x^{e'}$ at test time from some environment $e' \in \mathcal{E}_{te}$. Estimate the parameters of Gaussian mixture model with two mixture components to maximize the likelihood of observing $\mathcal{D}_x^{e'}$. We denote the estimated parameters as  $\theta_{e'}=[p_{e'}, \mu_{e'}, \Sigma_{e'}, \tilde{\mu}_{e'}, \tilde{\Sigma}_{e'}]$. Define a permutation of $\theta_{e}^{'}$ as $\beta_{e'} = [\tilde{p}_{e'}, \tilde{\mu}_{e'}, \tilde{\Sigma}_{e'}, p_{e'}, \mu_{e'}, \Sigma_{e'}]$. 
    \item Find the closest environment to the estimated parameters in the training set. 
    \begin{equation}
     \min_{e \in \mathcal{E}_{tr}} \bigg(\min\{\|\theta_{e'} - \hat{\gamma}_e \|,\|\beta_{e'} -\hat{\gamma}_e\| \}\bigg)
     \label{eqn: algo_ood}
    \end{equation}
    Suppose $\tilde{e}$ is the closest training environment that solves the above. If $\theta_{e'}$ is closer to $\hat{\gamma}_{\tilde{e}}$ than $\beta_{e'}$, then $p_{e'},\mu_{e'}, \Sigma_{e'}$ correspond to the label $0$ and $\tilde{p}_{e'}, \tilde{\mu}_{e'}, \tilde{\Sigma}_{e'}$ correspond to the label $1$.  For the query $x$, the probability assigned to label $0$ is 

    $$c(x) = \frac{p_{e'}e^{-\|x-\mu_{e'}\|_{(\Sigma_{e'})^{-1}}^2}}{p_{e'} e^{-\|x-\mu_{e'}\|_{(\Sigma_{e'})^{-1}}^2} + \tilde{p}_{e'}e^{-\|x-\tilde{\mu}_{e'}\|_{(\tilde{\Sigma}_{e'})^{-1}}^2}}$$

    If $\beta_{e'}$ is closest to this environment, then $p_{e'}, \mu_{e'}, \Sigma_{e'}$ correspond to the label $1$ and $\tilde{p}_{e'}, \tilde{\mu}_{e'}, \tilde{\Sigma}_{e'}$ is the label $0$. For the query $x$, the probability assigned to label $0$ is $1-c(x)$.

\end{itemize}

For the training environments, in the limit of infinitely long contexts the estimated parameters take exact values, i.e., $\hat{\gamma}_e = \gamma_e$, for all $e\in \mathcal{E}_{tr}$.

For the test environment, the true set of parameters that generate the data are $\gamma_{e'}$, where $\gamma_{e'}=\big[(p_{e'}^{0}, \mu_{e'}^{0}, \Sigma_{e'}^{0}), (p_{e'}^{1}, \mu_{e'}^{1}, \Sigma_{e'}^{1})\big]$.  Define the permutation of $\gamma_{e'}$ as $\delta_{e'} =\big[ (p_{e'}^{1}, \mu_{e'}^{1}, \Sigma_{e'}^{1}), (p_{e'}^{0}, \mu_{e'}^{0}, \Sigma_{e'}^{0}) \big]$. 

There can be two types of test environments. One in which the mean and covariance for both classes are identical. The method above assigns a probability of $\frac{1}{2}$ to both the classes, which is the Bayes optimal prediction. Let us consider the latter environments, where the class conditional parameters for $x$ are not the same. In the limit of infinitely long contexts at test time, there are two possible values $\theta_{e'}$ can take, either $\theta_{e'}=\gamma_{e'}$ or $\theta_{e'}=\delta_{e'}$. This follows from identifiability of Gaussian mixtures, \cite{yakowitz1968identifiability}. 

Consider the first case, $\theta_{e'} = \gamma_{e'}$. In this case, the \eqref{eqn: algo_ood} becomes 
$$ \min_{e \in \mathcal{E}_{tr}} \bigg(\min\{\|\gamma_{e'} - \gamma_e \|,\|\delta_{e'} -\gamma_e\| \}\bigg).$$

Suppose some environment $\tilde{e}$ solves the above optimization.  Following the assumption in we know that  $\gamma_{e'}$ falls in the Voronoi region of some $\gamma_{\tilde{e}}$ and thus $\gamma_{e^{'}}$ is closer to $\gamma_e$ than $\delta_e$. As a result, $p_{e'}^{0}, \mu_{e'}^{0}, \Sigma_{e'}^{0}$ is associated with class $0$, which is actually correct and thus the final predictor would match the Bayes optimal predictor for the test environment. In the second case, $\theta_{e'}= \delta_{e'}$. Therefore, $\beta_{e'}=\gamma_{e'}$ and $p_{e'}^{1}, \mu_{e'}^{1}, \Sigma_{e'}^{1}$ would be correctly associated with class one thus leading to Bayes optimal predictions. This completes the argument we set out to prove. 

We now briefly explain how the method fails if test parameter is outside the Voronoi cell of training parameters. Suppose $\theta_{e'} = \gamma_{e'}$ but $\gamma_{e'}$ is in Voronoi region of some $\delta_{e}$. In this case, $\beta_{e'}$ would be closest to $\gamma_{e}$ and $p_{e'}^{0}, \mu_{e'}^{0}, \Sigma_{e'}^{0}$ would be incorrectly associated with class $1$. This  shows that beyond the Voronoi region the proposed algorithm fails.

\end{proof}

\subsection{Extension of Theorem \ref{thm: oodzoomin}}

In the previous theorem, we assumed that $g$ is identity. We now describe how the result can be extended to general non-linear mixing maps $g$. For this result, we leverage the theoretical results from identifiable variational autoencoders (i-VAE) \citep{khemakhem2020variational}.  

\paragraph{A short review of identifiable variational autoencoders}
We are provided with observations $x$'s that are generated from a latent variable $z$ using an injective map $g$, where $x\leftarrow g(z)$. The theory of i-VAE provides with a method and the conditions under which the underlying true latent variables $z$ can be identified up to permutation and scaling.  In i-VAEs, it is assumed that along with each sample $x$, we are provided with auxiliary information, which they term as $u$. For our results, auxiliary information is available to us in the form of the environment index and the label of the data point.   In the theory of i-VAE,  the distribution of the latent variables are assumed to follow a conditionally factorial exponential distribution stated as follows.

\begin{equation}
    p_{T, \lambda}(z | u) =   \prod_{i} \frac{Q_i(z_i)}{M_i(u)} \exp\bigg[\sum_{j=1}^{k}T_{i,j}(z_i)\lambda_{i,j}(u) \bigg]
\end{equation}

where $T_i = (T_{i,1}, \cdots, T_{i,k})$ are the sufficient statistics, $\lambda_i(u) = (\lambda_{i,1}(u), \cdots, \lambda_{i,k}(u))$ are the parameters of the distribution that vary with $u$, $Q_i$ is a base measure and $M_i$ is a normalizing constant. We concatenate $T_i$'s and $\lambda_{i}'s$ across $d$ latent dimensions to make construct $dk$ dimensional vectors denoted as $\lambda(u)$ and $T(z)$.  Thus the data generation process is summarized as 

\begin{equation}
\begin{split}
    & z\sim p_{T, \lambda}(\cdot |u) \\
    & x \leftarrow g(z)
\end{split}
    \label{eqn: dgp_ivae}
\end{equation}
where $g,T, \lambda$ are the parameters.  We now revisit the data generation process that we consider and explain how it falls under the umbrella of the data generation processes considered in i-VAE. For all $e \in \mathcal{E}$, 
\begin{equation}
\begin{split} 
  &  z|y,e \sim \mathcal{N}(\mu_e^y, \Sigma_e^y) \\ 
  &  x \leftarrow g(z)
\end{split}
\label{eqn_app: dgp_ood}
\end{equation}
where the latent variables $z$ are sampled conditional on the label $y$ and environment $e$ from a Normal distribution whose mean and covariance depend on both $y,e$. We further assume that the covariance matrix has a diagonal structure as stated below.

\begin{assumption}
    Each $\Sigma_{e}^y$ is a diagonal matrix. 
\end{assumption}

Since $\Sigma_{e}^y$ is a diagonal matrix, we denote the $i^{th}$ diagonal element as $(\sigma_e^y(i))^2$. Similarly, the $i^{th}$ component of $\mu_e^y$ is denoted as $\mu_e^y(i)$. Observe that the distribution of $z$ conditional on $y,e$ belongs to the family conditionally factorial exponential distributions studied in i-VAE. If we substitute $Q_{i}(z_i) = \frac{1}{\sqrt{2\pi}}$, $M_i(y,e) = e^{\big((\mu_e^{y}(i))^2/(\sigma_{e}^y(i))^2\big)}$, $\lambda_{i,1}(y,e) = \frac{2\mu_{e}^{y}(i)}{(\sigma_{e}^{y}(i))^2}$, $\lambda_{i,2}(y,e) = -\frac{1}{(\sigma_{e}^{y}(i))^2}$, $T_{i,1}(z)=z$ and $T_{i,2}(z) = z^2$, then we obtain the distribution of $z$ described by \eqref{eqn_app: dgp_ood}.

\begin{definition}
We define an equivalence relation between sets of parameters of the model as follows. 
\begin{equation}
    (g,T,\lambda) \sim (\tilde{g}, \tilde{T}, \tilde{\lambda}) \iff 
    \exists A,c \; |\; T(g^{-1}(x)) = A \tilde{T}(\tilde{g}^{-1}(x)) + c, \forall x \in \mathcal{X}
\end{equation}
If $A$ is invertible, then we denote the relation by $\sim_A$. If $A$ is a block permutation matrix, then we denote it by $\sim_P$. 
\end{definition}

\begin{theorem}
\label{thm: lin_id_ivae}
    Assume that the data is sampled from the data generation in equation \eqref{eqn: dgp_ivae} according to with parameters $(g,T,\lambda)$. Assume the following holds
    \begin{itemize}
        \item The mixing function $g$ is injective
        \item The sufficient statistics $T_{i,j}$ are differentiable almost everywhere, and $(T_{i,j})_{1\leq j \leq k}$ are linearly independent on any subset of $\mathcal{X}$ of measure greater than zero. 
        \item There exists $dk+1$ distinct points $u^{0}, \cdots, u^{dk}$ such that the matrix 
        $$L = (\lambda(u_1)-\lambda(u_0), \cdots, \lambda(u_{dk}) -\lambda(u_0))$$
        of size $dk\times dk$ is invertible. 
    \end{itemize}
    then the parameters $(g, T,\lambda)$ are $\sim_A$ identifiable. 
\end{theorem}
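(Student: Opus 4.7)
The strategy mirrors the standard identifiability argument for conditionally factorial exponential latent variable models: assume two parameter tuples yield identical observation distributions, pass to the latent side using injectivity of the mixing map, extract a linear relationship between $T \circ g^{-1}$ and $\tilde T \circ \tilde g^{-1}$ by evaluating at the $dk+1$ anchor points, and finally use the linear independence of the sufficient statistics to upgrade this into an invertible affine relation.

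\textbf{Step 1: reduction to the latent side.} Assume two parameter sets $(g, T, \lambda)$ and $(\tilde g, \tilde T, \tilde \lambda)$ induce the same conditional density $p(x \mid u)$ for every $u$ in the support. Using injectivity of $g$ (and $\tilde g$) together with the change-of-variables formula, the equality of densities transfers to the latent representation, and one can write, for almost every $x$ and every $u$,
\begin{equation*}
\log Q\bigl(g^{-1}(x)\bigr) - \log M(u) + \langle T(g^{-1}(x)), \lambda(u)\rangle + \log\lvert\det J_{g^{-1}}(x)\rvert = \log \tilde Q\bigl(\tilde g^{-1}(x)\bigr) - \log \tilde M(u) + \langle \tilde T(\tilde g^{-1}(x)), \tilde\lambda(u)\rangle + \log\lvert\det J_{\tilde g^{-1}}(x)\rvert.
\end{equation*}
The terms depending only on $x$ and those depending only on $u$ can be grouped separately.

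\textbf{Step 2: differencing out the $u$-only terms.} I would evaluate the identity above at the $dk+1$ distinguished points $u^{0}, u^{1}, \ldots, u^{dk}$ postulated by the third hypothesis, and subtract the equation at $u^{0}$ from each of the remaining $dk$ equations. This cancels all contributions that depend only on $x$ (the base measures, log-Jacobians, etc.), leaving, for each $i \in \{1, \ldots, dk\}$,
\begin{equation*}
\langle T(g^{-1}(x)),\, \lambda(u^{i}) - \lambda(u^{0})\rangle = \langle \tilde T(\tilde g^{-1}(x)),\, \tilde\lambda(u^{i}) - \tilde\lambda(u^{0})\rangle + c_i,
\end{equation*}
for some constants $c_i$ absorbing the $\log M$ terms. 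Stacking these $dk$ equations into a single matrix identity produces
\begin{equation*}
L^{\top} T(g^{-1}(x)) = \tilde L^{\top} \tilde T(\tilde g^{-1}(x)) + c,
\end{equation*}
where $L$ is the matrix whose columns are $\lambda(u^{i}) - \lambda(u^{0})$ and $\tilde L$ is the analogous matrix for $\tilde\lambda$. By the third hypothesis $L$ is invertible, so I can solve
\begin{equation*}
T(g^{-1}(x)) = A\, \tilde T(\tilde g^{-1}(x)) + b, \qquad A = L^{-\top} \tilde L^{\top},
\end{equation*}
which is already the $\sim$ relation with $b = L^{-\top} c$.

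\textbf{Step 3: invertibility of $A$ via linear independence.} The remaining work is to promote this into $\sim_A$ identifiability, that is, to show that $A$ is invertible. The argument proceeds by contradiction: if $A$ were rank-deficient, then some nontrivial linear combination of the coordinates of $T(g^{-1}(x))$ would equal a constant on a set of positive measure of $x$, hence a nontrivial linear combination of $T_{i,j}(z_i)$ over the relevant coordinates would be constant on a positive-measure subset of $\mathcal{X}$. This contradicts the second hypothesis, which asserts linear independence of the $T_{i,j}$ (together with differentiability almost everywhere) on any set of positive measure. This rank step is the delicate one: one has to argue carefully that the image of the positive-measure subset under $g^{-1}$ still has positive measure (using injectivity and a Lusin/Sard-type argument), so that the independence assumption can be applied. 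Once invertibility of $A$ is secured, the theorem follows.
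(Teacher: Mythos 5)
This theorem is not proved in the paper at all---it is imported verbatim from the i-VAE literature (Khemakhem et al., 2020) as background---and your argument reproduces that standard cited proof: change of variables through the injective mixing, differencing the log-densities at the $dk+1$ anchor points to get $L^{\top}T(g^{-1}(x)) = \tilde{L}^{\top}\tilde{T}(\tilde{g}^{-1}(x)) + c$, inverting $L$, and then deducing invertibility of $A$ from the linear-independence assumption. Your Step 3 contrapositive (a left null vector of $A$ would force a nontrivial combination of the $T_{i,j}$ to be constant on a positive-measure set) is the same point the original proof makes by exhibiting $dk+1$ latent points whose difference matrix is invertible, so the proposal is correct and essentially the same approach, sharing even the original's mild glossing of the noiseless change-of-variables and of ``linear independence'' versus ``no nontrivial combination is constant.''
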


\begin{theorem}
\label{thm: perm_id_ivae}
    Assume the hypotheses of the~\Cref{thm: lin_id_ivae} holds, and $k\geq 2$. Further assume: 
    \begin{itemize}
    \item The sufficient statistics $T_{i,j}$ are twice differentiable.
    \item The mixing function $g$ has all second order cross derivatives. 
    \end{itemize}
    then the parameters $(g,T, \lambda)$ are $\sim_P$ identifiable. 
\end{theorem}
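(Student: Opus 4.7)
The plan is to bootstrap from the $\sim_A$ identifiability already established in Theorem~\ref{thm: lin_id_ivae} and upgrade the invertible linear map $A$ to a block permutation matrix with $k \times k$ blocks. The starting point is the relation $T(g^{-1}(x)) = A \tilde{T}(\tilde{g}^{-1}(x)) + c$ for every $x$ in the observation support. Writing $v = \tilde{g}^{-1} \circ g$ for the induced change of latent coordinates, this becomes $T(z) = A \tilde{T}(v(z)) + c$. The map $v$ inherits enough regularity from the assumptions on $g$ and $\tilde{g}$ to be twice differentiable, and the factorial exponential-family structure means each $T_{i,j}$ depends only on the single latent coordinate $z_i$, so the Jacobian of $T$ has a block-row sparsity pattern: its $k$ rows indexed by $i$ are supported on column $i$ only.

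The key step is to differentiate the identity $T(z) = A \tilde{T}(v(z)) + c$ twice, once with respect to two \emph{distinct} latent coordinates $z_l, z_m$ with $l \neq m$. Because the left side has $\partial^2 T_{i,j}/\partial z_l \partial z_m = 0$ identically for $l \neq m$, the chain rule produces a collection of identities of the schematic form
\begin{equation*}
0 \;=\; A \Bigl( \text{diag}(\tilde{T}'' \circ v)\,\partial_l v \odot \partial_m v \;+\; \text{diag}(\tilde{T}' \circ v)\, \partial^2_{lm} v \Bigr),
\end{equation*}
valid for all $z$ in the support. Combined with the invertibility of $A$, each target block of $k$ coordinates indexed by $i$ yields a system of equations in the $k$ functions $\tilde{T}_{i,1}', \ldots, \tilde{T}_{i,k}'$ and $\tilde{T}_{i,1}'', \ldots, \tilde{T}_{i,k}''$ evaluated at $v(z)$. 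Exploiting the hypothesis $k \geq 2$ together with the linear independence of $(\tilde{T}_{i,j})_{1 \leq j \leq k}$ inherited from Theorem~\ref{thm: lin_id_ivae}, one argues that these equations can be simultaneously satisfied only if, for each $l$, the set of target indices $i$ with $\partial v_i / \partial z_l \not\equiv 0$ is a singleton. This means $v$ acts coordinate-wise up to a permutation: there exists a permutation $\pi$ and scalar reparametrizations $v_i$ with $v_i(z) = v_i(z_{\pi(i)})$.

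Once the coordinate-wise-with-permutation structure of $v$ is established, I plug it back into $T(z) = A \tilde{T}(v(z)) + c$ and compare block-row sparsity patterns on both sides: the left side's $i$-th block of $k$ rows is a function of $z_i$ only, while on the right the $i$-th block of $k$ rows pulls from those blocks $i'$ such that $v_{i'}$ uses $z_i$, i.e., $i' = \pi^{-1}(i)$ alone. Hence $A$ can only couple the $i$-th output block with the $\pi^{-1}(i)$-th input block, exhibiting the block permutation structure, and the within-block coupling is an invertible $k \times k$ matrix determined by matching the linearly independent statistics. This yields $\sim_P$ identifiability.

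The main obstacle I anticipate is the careful bookkeeping in the middle step: passing from ``mixed partials of $T$ vanish off-diagonal'' to ``$v$ has the single-coordinate structure'' is where the linear independence hypothesis must be used nontrivially, and where the assumption $k \geq 2$ is essential (if $k = 1$, one $\tilde{T}_{i,1}'$ alone cannot span enough dimensions to force the collapse). I expect this to require considering, for fixed $l \neq m$, the rank of the matrix built from $(\tilde{T}_{i,j}' \circ v, \tilde{T}_{i,j}'' \circ v)$ as $(i,j)$ varies, and showing that the invertibility of $A$ combined with the identity above forces $\partial_l v_i \cdot \partial_m v_i = 0$ for each $i$, i.e., no single coordinate $i$ of $v$ can depend on two different source coordinates. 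Everything after this point is essentially algebraic cleanup.
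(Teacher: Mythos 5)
The paper does not prove this statement: Theorem~\ref{thm: perm_id_ivae} is quoted as an imported background result from the i-VAE paper of \citet{khemakhem2020variational} (their permutation-identifiability theorem), so there is no in-paper proof to compare against. Your outline, however, faithfully reproduces the argument given in that source: start from the $\sim_A$ relation $T(z)=A\tilde{T}(v(z))+c$ with $v=\tilde{g}^{-1}\circ g$, take mixed second partials in distinct coordinates $z_l,z_m$ so the left side vanishes by the factorial structure, cancel the invertible $A$, and use linear independence of the $k\ge 2$ sufficient statistics per coordinate to force $\partial_l v_i\,\partial_m v_i=0$, whence $v$ is coordinate-wise up to permutation and $A$ must be block-permutation. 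The one load-bearing step you flag but do not supply is the lemma that linear independence of $(\tilde{T}_{i,j})_j$ implies the $k\times 2$ matrix with rows $(\tilde{T}_{i,j}''\circ v,\ \tilde{T}_{i,j}'\circ v)$ has rank two on a sufficiently large set (plus the continuity/connectedness argument needed to pass from the pointwise condition ``the gradient of $v_i$ has at most one nonzero entry'' to the global single-coordinate dependence); this is exactly the technical lemma the original proof devotes its effort to, so your sketch is on the right track but is a sketch rather than a complete proof at that point.
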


We can leverage the above two theorems (\Cref{thm: lin_id_ivae}, \Cref{thm: perm_id_ivae} and Theorem 4 from \cite{lachapelle2022disentanglement}) and arrive at the following corollary for the Gaussian data generation process from \eqref{eqn_app: dgp_ood}.

\begin{theorem}
\label{thm: perm_id_normal}
    If the data generation process follows \eqref{eqn_app: dgp_ood}, where $g$ is injective and has all second order cross derivatives. Suppose there exist $2d+1$ points $u^{0} = (y_0,e_0), \cdots, u^{2d} = (y_{2d},e_{2d})$ in the support of $(y,e)$ observed in training distribution such that 
   $$  (\lambda(u_1)-\lambda(u_0), \cdots, \lambda(u_{2d}) -\lambda(u_0))$$
   is invertible. If $p_{g,T,\lambda}(\cdot | y,e) = p_{\tilde{g},\tilde{T},\tilde{\lambda}}(\cdot | y,e)$ for all $y,e$ in the support of $(y,e)$ in the training distribution, then $\tilde{z} = \Lambda \Pi z + r$, where $\tilde{z} = \tilde{g}^{-1}(x)$ and $z = g^{-1}(x)$. 
\end{theorem}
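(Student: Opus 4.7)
My plan is to recognize this statement as a direct specialization of the iVAE identifiability results (\Cref{thm: lin_id_ivae} and \Cref{thm: perm_id_ivae}) to the Gaussian exponential family, and then upgrade the resulting block-permutation equivalence on sufficient statistics to the scalar scaling--permutation--shift relation on latents by exploiting the polynomial structure of the Gaussian sufficient statistics. Matching the setup to the iVAE template is routine; the upgrade step is where the real content sits, and it is exactly the content of Theorem~4 of \citet{lachapelle2022disentanglement} cited in the paragraph preceding the statement.

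First I would write the conditional density for \eqref{eqn_app: dgp_ood} in exponential-family form with auxiliary variable $u = (y,e)$, sufficient statistics $T_{i,1}(z_i) = z_i$ and $T_{i,2}(z_i) = z_i^2$, and natural parameters $\lambda_{i,1}(y,e) = \mu_e^y(i)/(\sigma_e^y(i))^2$ and $\lambda_{i,2}(y,e) = -1/(2(\sigma_e^y(i))^2)$, with base measure and normalizer chosen to produce the Gaussian density. This embeds \eqref{eqn_app: dgp_ood} verbatim in the template \eqref{eqn: dgp_ivae} with $k=2$. I would then verify the hypotheses of \Cref{thm: perm_id_ivae}: injectivity and existence of second-order cross derivatives of $g$ are assumed; the statistics $T_{i,1}, T_{i,2}$ are polynomials, hence twice differentiable and linearly independent on every set of positive Lebesgue measure; invertibility of $L$ is the given hypothesis on the $2d+1$ points $u^0, \ldots, u^{2d}$; and $k=2\ge 2$. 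Hence \Cref{thm: perm_id_ivae} applies and yields $(g,T,\lambda) \sim_P (\tilde g,\tilde T,\tilde\lambda)$: there exists a block permutation matrix $A$ with $2\times 2$ blocks and a shift $c$ such that $T(g^{-1}(x)) = A\tilde T(\tilde g^{-1}(x)) + c$ for every $x$ in the support.

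The last step converts this block-level statement into the claimed component-level relation $\tilde z = \Lambda \Pi z + r$. Writing the block equation coordinate-wise, there is a permutation $\pi$ and invertible $2\times 2$ blocks $A_i$ such that $(z_i, z_i^2)^\top = A_i (\tilde z_{\pi(i)}, \tilde z_{\pi(i)}^2)^\top + c_i$. The first row gives $z_i = a_i \tilde z_{\pi(i)} + b_i \tilde z_{\pi(i)}^2 + c_i^{(1)}$; squaring this and matching the second-row equation (which says $z_i^2$ is itself affine in $(\tilde z_{\pi(i)}, \tilde z_{\pi(i)}^2)$) forces $b_i = 0$, since otherwise cubic and quartic monomials in $\tilde z_{\pi(i)}$ would appear on one side but not the other. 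Thus $z_i$ depends linearly on $\tilde z_{\pi(i)}$ alone, and stacking across $i$ and inverting produces $\tilde z = \Lambda\Pi z + r$ as claimed.

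The main obstacle is precisely this last reduction. Block-permutation identifiability leaves room a priori for non-trivial within-block mixing of $\tilde z_{\pi(i)}$ with $\tilde z_{\pi(i)}^2$, and one must argue that such mixing is incompatible with the constraint that the two rows of the block relation reflect powers of the \emph{same} scalar $z_i$. The polynomial rigidity of the pair $(z, z^2)$ closes this gap, but the argument must track all four entries of every block $A_i$ simultaneously; invoking Theorem~4 of \citet{lachapelle2022disentanglement}, which abstracts exactly this computation for monomial-type sufficient statistics, is the cleanest route and is the reason that citation appears in the paragraph preceding the theorem.
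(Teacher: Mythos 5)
Your proposal is correct and follows essentially the same route as the paper: embed \eqref{eqn_app: dgp_ood} in the iVAE exponential-family template with $T_i(z_i)=(z_i,z_i^2)$, invoke the identifiability results (\Cref{thm: lin_id_ivae}, \Cref{thm: perm_id_ivae}, and the reasoning from Theorem~4 of \citet{lachapelle2022disentanglement}) to obtain the block-permutation relation between $T(z)$ and $T(\tilde z)$, and then use the polynomial-degree rigidity of the pair $(z,z^2)$ to rule out within-block mixing and conclude $\tilde z=\Lambda\Pi z+r$. The only cosmetic difference is that the paper re-derives the intermediate linear relation $T(z)=A\,T(\tilde z)+c$ explicitly via the log-density difference argument before applying \Cref{thm: perm_id_ivae}, whereas you apply the cited theorems directly after verifying their hypotheses; the key final step is identical.
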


\begin{proof}

We equate the probability of observations $x$ under two models $g, T, \lambda$ and $\tilde{g}, \tilde{T}, \tilde{\lambda}$ for each $y,e$. Consider a $z\sim p_{T,\lambda}(\cdot |y,e)$ and the corresponding $x = g(z)$.  These $x$'s follow $p_{\tilde{g},\tilde{T},\tilde{\lambda}}(\cdot |y,e)$ since $p_{g,T,\lambda}(\cdot |y,e) = p_{\tilde{g},\tilde{T},\tilde{\lambda}}(\cdot|y,e)$. 
Define $\tilde{z} = \tilde{g}^{-1}(x)$ and these $\tilde{z}$ follow $p_{\tilde{T}, \tilde{\lambda}}(\cdot|y,e)$. We can write $\tilde{z} = a(z)$, where 
$a = \tilde{g}^{-1}\circ g$.

Observe $ p_{z}(z|y,e) = p_{\tilde{z}}(a(z)|y,e) \mathsf{det}(Da(z))$

    \begin{equation}
    \begin{split}
        \log p_z\big(z|y_k,e_k\big) = \log\big(p_{\tilde{z}}(a(z)|y_k,e_k)\big) + \log\mathsf{det}(Da(z)) \\  
         \log  p_z\big(z|y_0,e_0\big) = \log\big(p_{\tilde{z}}(a(z)|y_0,e_0)\big) + \log\mathsf{det}(Da(z)) \\ 
         \log p_z\big(z|y_k,e_k\big) -  \log \big(p_z(z|y_0,e_0)\big) =  \log\big(p_{\tilde{z}}(a(z)|y_k,e_k)\big) - \log\big(p_{\hat{z}}(a(z)|y_0,e_0)\big)
    \end{split}
    \end{equation}

    Substituting the exponential form we obtain that $$T(z)^{\top}[\lambda(y_k,e_k) -\lambda(y_0,e_0))]  = T(\tilde{z})^{\top}[\tilde{\lambda}(y_k,e_k) -\tilde{\lambda}(y_0,e_0))] $$

If we use sufficient variability conditions, we obtain  $T(z) = A T(\tilde{z}) +c$. We now use the fact that sufficient statistics $T(z) = (z,z^2)$ are minimal to conclude that
$$T(z) = A T(\tilde{z}) + c$$
where $A$ is invertible. In the above, we use the line of reasoning used in in the proof of Theorem 4 in \citep{lachapelle2022disentanglement}.

After this point, we leverage ~\Cref{thm: perm_id_ivae} to conclude that 

$$T_i(z_i) =  AT_{j}(\tilde{z}_j) + c$$

We can expand the above to write 

$$\begin{bmatrix}
    \tilde{z}_j \\
    \tilde{z}_j^2
\end{bmatrix} = D\begin{bmatrix}
    z_i \\
    z_i^2
\end{bmatrix} + e$$

Note that the above relationship holds for all $z\in \mathcal{Z}$. If $\tilde{z}_j$ depends on $z_i^2$, then $\tilde{z}_j^2$ would be a degree four polynomial in $z_i$ and it would be equated to a degree $2$ polynomial $z_i$ stated in the RHS. This cannot be true for all $z_i$ in the support. As a result, $\tilde{z}_j$ is a scalar multiple of $z_i$. Since for every $i$ there is such a $j$, it follows that $\tilde{z} = \Lambda \Pi z +r$.

\end{proof}

\begin{theorem}(\textbf{Zoom-in [ood]})
  Consider the data generation process in \eqref{eqn_app: dgp_ood}. We make a few additional assumptions on the data generation stated below.
  \begin{itemize}
      \item Each $\Sigma_e^y$ is a diagonal matrix
      \item  There exist $2d+1$ points $u^{0} = (y_0,e_0), \cdots, u^{2d} = (y_{2d},e_{2d})$ in the support of $(y,e)$ observed in training distribution such that 
   $$  (\lambda(u_1)-\lambda(u_0), \cdots, \lambda(u_{2d}) -\lambda(u_0))$$
   is invertible. 
   \item $g$ is injective and has all second order cross derivatives. 
  \end{itemize}

  Under the above assumptions, we can guarantee that there exists an in-context learning algorithm that generates Bayes optimal predictions for all the test environments that fall in Voronoi cells of training parameters weighted by a certain vector.
\end{theorem}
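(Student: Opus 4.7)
The plan is to reduce the general injective mixing case to the identity case already proved in \Cref{thm: oodzoomin} by invoking the identifiability guarantee of \Cref{thm: perm_id_normal}. The whole strategy is: first, recover the latent representation up to a diagonal linear transformation; second, show that the latent-space data generation is still a Gaussian mixture in the same family; third, run the algorithm from the identity case in the recovered latent space.

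Concretely, on the training data $\{(x_i, y_i, e_i)\}$, fit an i-VAE using the pair $u=(y,e)$ as the auxiliary variable, producing an encoder $\tilde{g}^{-1}$. The three assumptions in the theorem statement (diagonal $\Sigma_e^y$, the $2d{+}1$-point sufficient variability of $\lambda(y,e)$, and injectivity plus second-order cross-differentiability of $g$) are exactly the hypotheses of \Cref{thm: perm_id_normal}, so the fitted latents satisfy $\tilde{z} := \tilde{g}^{-1}(x) = \Lambda \Pi z + r$ for some diagonal $\Lambda$, permutation $\Pi$, and vector $r$.

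Next, pushing the true conditional $z\mid y,e \sim \mathcal{N}(\mu_e^y, \Sigma_e^y)$ through this affine map gives
\begin{equation*}
\tilde{z} \mid y,e \sim \mathcal{N}\bigl(\Lambda \Pi \mu_e^y + r,\; \Lambda \Pi \Sigma_e^y \Pi^\top \Lambda^\top\bigr).
\end{equation*}
Because $\Pi$ is a permutation and $\Sigma_e^y$ is diagonal, the transformed covariance $\Lambda \Pi \Sigma_e^y \Pi^\top \Lambda^\top$ is again diagonal, so in the $\tilde{z}$-coordinates we are exactly in the setting of \Cref{thm: oodzoomin} with $g$ the identity. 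Define the transformed per-environment parameter vector $\tilde{\gamma}_e$ by applying $\Lambda \Pi$ (and $r$) coordinatewise to $\gamma_e$; this is the "weighting by a certain vector" alluded to in the statement. Now apply the algorithm of \Cref{thm: oodzoomin} verbatim on $\tilde{z}$-samples at both training and test time: use the training sample means and covariances of $\tilde{z}$ within each $(e,y)$ cell to estimate $\hat{\tilde{\gamma}}_e$, fit a two-component Gaussian mixture to the test context mapped through $\tilde{g}^{-1}$, match it (with the two-class permutation ambiguity) to the closest training $\hat{\tilde{\gamma}}_e$, and output the posterior over labels. Bayes optimality in $\tilde{z}$ transfers to Bayes optimality in $x$ because $\tilde{g}$ is injective, hence a bijection onto its image; the Voronoi condition is now stated in $\tilde{\gamma}$-space.

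The main obstacle I anticipate is not the algorithmic reduction but the bookkeeping around the affine indeterminacy $\Lambda\Pi\cdot+r$: one must carefully verify that the Voronoi cell structure is preserved under this transformation (which is fine because $\Lambda\Pi$ is invertible), and one must check that the sufficient variability hypothesis of \Cref{thm: perm_id_normal} can actually be realized by the $2d+1$ observed $(y,e)$ points in the training support. A secondary subtlety, as in \Cref{thm: oodzoomin}, is that $\tilde{g}^{-1}$ is only identified on the training support of $x$; test environments whose latent distribution drifts outside this support can cause the encoder to extrapolate, and the Voronoi-cell restriction on test parameters is exactly what rules this pathology out.
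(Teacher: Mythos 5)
Your proposal matches the paper's own proof in all essentials: the paper likewise trains an (i-VAE-style) constrained autoencoder whose encoder output is forced to be conditionally Gaussian with independent components given $(y,e)$, invokes the identifiability result to conclude $\tilde{z}=\Lambda\Pi z + r$, and then runs the identity-map algorithm of the ood zoom-in theorem verbatim on the encoded latents, with the Voronoi condition interpreted in the transformed (componentwise-scaled) parameter space. Your handling of the affine indeterminacy and the transfer of Bayes optimality through the injective encoder is the same bookkeeping the paper performs, so this is essentially the paper's argument.
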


\begin{proof}

The training proceeds as follows.  Train an autoencoder on training data under the constraint that the output of the encoder follow a Gaussian distribution with independent components conditional on each $y,e$. This is stated as the following minimization.

 \begin{equation}
       \hat{g}, \hat{f}, \hat{\mu}_e^y, \hat{\Sigma}_e^y =  \arg\min_{\tilde{g}, \tilde{f}, \{\mu_e^{y},\Sigma_e^y\} } \mathbb{E}[\|(\tilde{g}\circ \tilde{f}(x) - x)\|^2] + \alpha\sum_{y,e} \mathsf{KL}\Big(p_{\tilde{z}}(\cdot| y,e) \; \| \; \mathcal{N}(\mu_e^{y},\Sigma_e^y)\Big)
    \end{equation}
    where $\tilde{z} = \tilde{f}(x)$, $p_{\tilde{z}}(\cdot| y,e)$ is the distribution of $\tilde{z}$. The first term is standard reconstruction loss and the second term is the KL divergence between distribution of $\tilde{z}$ and a Normal distribution with independent components.  Also, estimate the class probabilities for each environment and denote them as $\hat{p}_e^y$.     Similar to the proof of~\Cref{thm: oodzoomin} define $\hat{\gamma}_e = [(\hat{p}_e^y,\hat{\mu}_e^y, \hat{\Sigma}_e^y )_{y\in \{0,1\}} ]$

The model at test time works as follows. We first use the  trained encoder $\hat{f}$ and generate $\tilde{z}$ for test time inputs. After this the model operates in exactly the same way on $\tilde{z}'s$ as in the proof of~\Cref{thm: oodzoomin}. Basically the output of encoder takes place of raw $x$'s in the procedure described in proof of~\Cref{thm: oodzoomin}. 

The assumptions in this theorem along with following i) $\tilde{z}$ follows a Gaussian distribution with independent components, ii) $g(\tilde{z})$ follows distribution of $x$ conditional on $y,e$ for each $y,e$, implies we can use the previous result in~\Cref{thm: perm_id_normal} to conclude that $\tilde{z} = \Lambda \Pi z + r$. 
Observe that $\tilde{z}$ also follows a Gaussian distribution with independent components conditional on each $y,e$.  In the limit of infinitely long contexts, $\hat{\gamma}_e$ is equal to scaled means of original training environments and covariances also scaled componentwise according to the transform $\Lambda \Pi$.  We can now apply the previous~\Cref{thm: oodzoomin} on $\tilde{z}'s$ as follows. If the parameters of the test environment are in the Voronoi cell of the train distribution of $\tilde{z}'s$, then the procedure described above continues to generate Bayes optimal predictions in those environments.

\end{proof}

\subsection{Comparing ICRM and ERM under the lens of invariance}

The label $y$ is related to $x_1$ and mean of $x_2$ in environment $e$ as follows.  
\begin{equation}
    y \leftarrow \alpha x_1 + \beta \mu_2^e + \varepsilon
\end{equation}

ERM learns a linear model on features $x = (x_1,x_2)$. The closed form solution for linear regression is $\Lambda_{xx}^{-1}\rho_{xy}$, where $\Lambda_{xx} = \mathbb{E}[xx^{\top}]$, which is assumed to be invertible, and $\rho_{xy}= \mathbb{E}[xy]$.  The covariance matrix of $x$ is defined as $\Sigma_{xx} = \begin{bmatrix}
\sigma_1^2 \;\; \sigma_{12} \\
\sigma_{12} \;\; \sigma_2^2
\end{bmatrix}.$

\begin{proposition}
Let $\mathbb{E}[x_1|e]=0$ for all $e$. If $\Sigma_{xx}$ is invertible, $\beta\not=0$, $\sigma_{12}\not=0$, $\mu_2^e\not=0$ for some $e \in \mathcal{E}_{tr}$, then the coefficient estimated by ERM for $x_1$ is not the same as the invariant coefficient $\alpha$. 
\end{proposition}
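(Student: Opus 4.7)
The plan is to evaluate the closed-form OLS solution $(\tilde\alpha,\tilde\beta)^{\top}=\Lambda_{xx}^{-1}\rho_{xy}$ explicitly and read off its first coordinate, showing that the hypotheses force a non-vanishing omitted-variable style bias on the coefficient of $x_1$.

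First I would compute the cross-moment vector $\rho_{xy}=(\mathbb{E}[x_1 y],\mathbb{E}[x_2 y])^{\top}$ using the structural equation $y=\alpha x_1+\beta \mu_2^{e}+\varepsilon$. Since $\varepsilon$ is independent of $(x_1,x_2)$, it contributes nothing to either inner product. The key simplification is the assumption $\mathbb{E}[x_1\mid e]=0$, which via the tower property yields $\mathbb{E}[x_1\mu_2^{e}]=\mathbb{E}_e\bigl[\mu_2^{e}\,\mathbb{E}[x_1\mid e]\bigr]=0$ and hence $\mathbb{E}[x_1 y]=\alpha\,\mathbb{E}[x_1^2]$. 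The same conditioning trick gives $\mathbb{E}[x_2\mu_2^{e}]=\mathbb{E}_e[(\mu_2^{e})^2]$, and so $\mathbb{E}[x_2 y]=\alpha\,\mathbb{E}[x_1 x_2]+\beta\,\mathbb{E}_e[(\mu_2^{e})^2]$.

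Second I would plug these two moments into the standard $2\times 2$ inverse formula. Because $\mathbb{E}[x_1]=0$, the uncentered moment matrix $\Lambda_{xx}$ shares its first row and column with the covariance $\Sigma_{xx}$; only its $(2,2)$ entry differs, by $(\mathbb{E}[x_2])^2$. A short cancellation of $\det(\Lambda_{xx})$ in the numerator of the first coordinate produces
\begin{equation*}
\tilde\alpha \;=\; \alpha \;-\; \frac{\sigma_{12}\,\beta\,\mathbb{E}_e[(\mu_2^{e})^2]}{\det(\Lambda_{xx})},
\end{equation*}
so the claim reduces to verifying that the correction term is strictly non-zero. Its numerator is a product of three factors: $\sigma_{12}\neq 0$ and $\beta\neq 0$ hold by assumption, while $\mathbb{E}_e[(\mu_2^{e})^2]>0$ follows because $(\mu_2^{E})^2$ is a non-negative random variable that is strictly positive on at least one training environment of positive mass. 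The denominator is non-zero since invertibility of $\Sigma_{xx}$ gives $\sigma_1^2\sigma_2^2-\sigma_{12}^2>0$, and adding the non-negative term $\sigma_1^2(\mathbb{E}[x_2])^2$ preserves strict positivity of $\det(\Lambda_{xx})$. Hence $\tilde\alpha\neq\alpha$.

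I do not anticipate a serious obstacle: this is essentially the classical omitted-variable bias formula applied to the environment-level feature $\mu_2^{e}$, which ERM cannot observe and which behaves as an unobserved confounder correlated with $x_2$ whenever $\mu_2^{e}$ varies across environments. The only mild bookkeeping point is keeping track of the gap between the hypothesized $\Sigma_{xx}$ and the uncentered $\Lambda_{xx}$ that actually appears in the closed form, but the zero-mean assumption on $x_1$ confines that gap to the $(2,2)$ entry, so it affects only the denominator of the bias and never its sign or form.
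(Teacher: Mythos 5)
Your proof is correct and takes essentially the same route as the paper: compute $\rho_{xy}$ using the tower property and $\mathbb{E}[x_1\mid e]=0$, form $\Lambda_{xx}$, apply the $2\times 2$ inverse, and read off the omitted-variable bias $\tilde\alpha=\alpha-\sigma_{12}\,\beta\,\mathbb{E}[(\mu_2^e)^2]/\det(\Lambda_{xx})$, which is nonzero under the stated hypotheses. The only cosmetic difference is bookkeeping of the $(2,2)$ entry of $\Lambda_{xx}$ (you write $\sigma_2^2+(\mathbb{E}[x_2])^2$ with $\sigma_2^2$ the pooled variance, the paper writes $\sigma_2^2+\mathbb{E}[(\mu_2^e)^2]$, implicitly reading $\sigma_2^2$ as the within-environment variance); this affects only the positive denominator, and your derivation of its positivity from invertibility of $\Sigma_{xx}$ is, if anything, slightly more careful than the paper's.
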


\begin{proof}
    We compute $\rho_{xy}$ first. 
    \begin{equation}
    \begin{split}
        \rho_{xy} & = \begin{bmatrix}
            \alpha \mathbb{E}[x_1^2] + \beta \mathbb{E}[\mu_2^ex_1] \\ 
            \alpha \mathbb{E}[x_1x_2] + \beta \mathbb{e}[\mu_2^e x_2] 
        \end{bmatrix}  \\
       & =\alpha \begin{bmatrix}
             \sigma_1^2  \\ 
            \sigma_{12} + \frac{\beta}{\alpha} \delta
        \end{bmatrix},
    \end{split}
    \end{equation}
  where $\delta =  \mathbb{E}[(\mu_2^e)^2].$

  Next, we compute $\Lambda_{xx}$. 

  \begin{equation}
      \Lambda_{xx} = \begin{bmatrix}
          \sigma_{1}^2 \;\; \sigma_{12} \\ 
          \sigma_{12} \;\;  \sigma_2^2 + \delta 
      \end{bmatrix}.
  \end{equation}

The solution to ERM is 

\begin{equation}
\begin{bmatrix}
    \alpha' \\ 
    \beta'  
\end{bmatrix} = 
\frac{\alpha}{(\sigma_2^2+\delta)\sigma_1^2-\sigma_{12}^2}\begin{bmatrix}
      \sigma_2^2 + \delta      \;\; -\sigma_{12} \\ 
          -\sigma_{12} \;\;  \sigma_{1}^2
\end{bmatrix}\begin{bmatrix}
             \sigma_1^2  \\ 
            \sigma_{12} + \frac{\beta}{\alpha} \delta
        \end{bmatrix}.
\end{equation}

Simplifying the above, we obtain the coefficient for $x_1$ to be 
\begin{equation}
  \alpha' =   \alpha -\frac{\sigma_{12}\beta\mathbb{E}[(\mu_2^e)^2]}{\sigma_{1}^2\big(\sigma_2^2+\mathbb{E}[(\mu_2^e)^2]\big)-\sigma_{12}^2}. 
\end{equation}

Owing to the assumptions, $\beta\not=0, \sigma_{12}\not=0$ and $\mu_2^e$ for some $e$ we obtain that the second term in the above is not zero. As a result, the estimate computed by ERM for $\alpha$ is biased. 
\end{proof}

\begin{proposition}
Let $\mathbb{E}[x_1|e]=0$ for all $e$. If $\Sigma_{xx}$ is invertible, $\beta\not=0$, $\sigma_{12}\not=0$, $\mu_2^e\not=0$. The error of ERM in test environment increases in $\sigma_1^2$
\end{proposition}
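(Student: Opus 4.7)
The plan is to compute the squared-error risk of the ERM predictor $\hat y = \alpha' x_1 + \beta' x_2$ in a test environment $e'$, isolate the contribution that scales with $\sigma_1^2$, and invoke the previous proposition to conclude that the coefficient of $\sigma_1^2$ in the risk is strictly positive.

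First I would write the residual using the true model $y = \alpha x_1 + \beta \mu_2^{e'} + \varepsilon$ as
\[
\hat y - y \;=\; (\alpha' - \alpha)\,x_1 \;+\; \beta'\,x_2 \;-\; \beta\,\mu_2^{e'} \;-\; \varepsilon.
\]
Squaring and taking the conditional expectation in the test environment $e'$, and using $\mathbb{E}[x_1 \mid e'] = 0$ together with the zero-mean independence of $\varepsilon$, all terms involving $\sigma_1^2 = \mathrm{Var}(x_1 \mid e')$ collapse into a single contribution $(\alpha' - \alpha)^2\,\sigma_1^2$. The remaining terms depend only on $\sigma_2^2$, $\sigma_{12}$, $\mu_2^{e'}$, the coefficients $(\alpha',\beta')$, and $\mathrm{Var}(\varepsilon)$. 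By the preceding proposition, the bias $\alpha' - \alpha$ is strictly nonzero under the stated hypotheses $\beta \neq 0$, $\sigma_{12} \neq 0$, $\mu_2^e \neq 0$ for some training $e$, and invertibility of $\Sigma_{xx}$. Hence differentiating the test risk with respect to $\sigma_1^2$ yields $(\alpha' - \alpha)^2 > 0$, giving the claimed strict monotonicity.

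The main subtlety, and the only real obstacle, lies in specifying what is held fixed as $\sigma_1^2$ varies. The natural reading---consistent with the surrounding discussion that ICRM's risk is independent of $\mathrm{Var}(x_1)$ while ERM's is ``worse when the variance of $x_1$ is sufficiently large''---treats $\sigma_1^2$ as an attribute of the test distribution with the already-trained coefficients $(\alpha',\beta')$ held constant, in which case the argument above is immediate. A more global analysis that also lets $\alpha'$ and $\beta'$ depend on $\sigma_1^2$ through the closed forms of the preceding proposition would require substituting those expressions into the risk and differentiating; I expect the monotonicity to then hold only for $\sigma_1^2$ sufficiently large, with the threshold expressible in terms of $\sigma_{12}^2$, $\sigma_2^2$, and $\mathbb{E}[(\mu_2^e)^2]$, using Cauchy--Schwarz via invertibility of $\Sigma_{xx}$ to control the cross term $2(\alpha'-\alpha)\beta'\sigma_{12}$ and the $(\beta' - \beta)^2 (\mu_2^{e'})^2$ contribution.
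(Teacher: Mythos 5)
Your proposal is correct and follows essentially the same route as the paper: expand the test risk of $\hat y = \alpha' x_1 + \beta' x_2$ under $\mathbb{E}[x_1\mid e]=0$, note that the only $\sigma_1^2$-dependent term is $(\alpha-\alpha')^2\sigma_1^2$ (with $(\alpha',\beta')$ held fixed and $\sigma_{12}$, $\sigma_2^2$, $\mu_2^{e'}$ treated as separate parameters), and invoke the preceding proposition to get $\alpha'\neq\alpha$, so the derivative $(\alpha-\alpha')^2$ is strictly positive. The paper makes the same implicit fixed-coefficient assumption you flag, so your added discussion of what varies is a clarification rather than a departure.
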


\begin{proof}
    The error of ERM is given as 

    \begin{equation}
    \begin{split}
    &    \mathbb{E}[(\alpha x_1 + \beta \mu_{2}^{e} - \alpha'x_1 - \beta'x_2)^2] + \sigma^2_{\varepsilon} \\ 
    & = (\alpha-\alpha^{'})^2\sigma_1^2 + \beta^2 \mathbb{E}[(\mu_2^e)^2]  + (\beta')^2\mathbb{E}[x_2^2] - 2\beta\beta^{'}\mathbb{E}[(\mu_2^e)^2] -2(\alpha-\alpha')\beta\sigma_{12} + \sigma^2_{\varepsilon}, 
    \end{split}
    \end{equation}
where $\sigma_{\varepsilon}^2$ is the variance of the noise variable $\varepsilon$. If we take the derivative of the above error w.r.t $\sigma_1^2$, we obtain $(\alpha-\alpha')^2$, which is positive. This completes the proof. 
  
\end{proof}

ICRM learns a linear model on $(x_1,x_2, \mu_1^e, \mu_2^e)$. We study two settings to analyze the error of ICRM at test time. If at test time, the model has seen sufficiently long contexts, then it knows the means corresponding to $x_1$ and $x_2$ and the model achieves the test error of $\sigma_{\varepsilon}^2$. On the other hand, if the context is empty, then also note that the expected error of the model is $\beta^2 \|\mu_2^{e'}\|^2$ (assuming the model uses a default value of zero for the mean in the absence of any context), where $\mu_{2}^{e'}$ is the mean of $x_2$ in environment $e'$. Since the error of ICRM in the absence of any context is independent of variance of $x_1$, the error of ERM can be much worse than that of ICRM in this setting as well.

Moving forward let us consider a more general setting. 

\begin{equation}
\begin{split}
   & y = p(x_1,\mu_2^e) + \varepsilon,  \\ 
   & x_2= q(\mu_2^e, \vartheta),
\end{split}
\label{eqn: dgp_inv_nonlinear}
\end{equation}

where $p(\cdot)$ and $q(\cdot)$ are maps (potentially non-linear), $\varepsilon$ and $\vartheta$ are independent zero mean noise variables. Following the same line of thought as the above example. ICRM learns a non-linear model on $(x_1,x_2, \mu_1^e, \mu_2^e)$ and learns $\mathbb{E}[y|x_1,x_2,\mu_1^e, \mu_2^e]$. From \eqref{eqn: dgp_inv_nonlinear}, it follows that $$y\perp (x_2,\mu_1^e) | (x_1,\mu_2^e) \implies \mathbb{E}[y|x_1,x_2,\mu_1^e, \mu_2^e] = \mathbb{E}[y|x_1,\mu_2^e]= p(x_1,\mu_2^e).$$ 
From the above it follows that ICRM learns $p(x_1,\mu_2^e)$. In comparison, consider standard ERM learns a non-linear model on $(x_1,x_2)$. Consider the DAG corresponding to setting \eqref{eqn: dgp_inv_nonlinear}.  We assume that the joint distribution described in \eqref{eqn: dgp_inv_nonlinear}  is Markov w.r.t to the following DAG $x_1\rightarrow y \leftarrow \mu_2^e \rightarrow x_2$. As a result, $y \not \perp x_2 |x_1$. This follows from the fact there is a path $y$ to $x_2$ through $\mu_2^e$ and is not blocked by $x_1$. From $y \not \perp x_2 |x_1$ it follows that ERM learns a predictor that relies on both $x_1$ and $x_2$.  Therefore, ICRM learns the right invariant model and does not rely on $x_2$ and ERM relies on spurious feature $x_2$.

\subsection{Illustration of failure of existing MTL methods}

In this section, we provide a simple example to show the failure mode of marginal transfer learning (MTL) methods that are based on averaging $\frac{1}{|C|}\sum_{x_i \in C}\Phi(\cdot)$ to summarize information about the environment. These methods can be summarized to take the following form: 

\begin{equation}
    f\bigg(\sum_{x_i\in C} \Phi(x_i), x\bigg)
\end{equation}
We are only going to consider maps $\Phi$ that are differentiable.

\paragraph{Example.} Suppose we want to learn the following function

\begin{equation}
   h(x, C) = \sum_{x_i \in C}I(x<x_{i}),
\end{equation}
where $x_i$ is the $i^{th}$ input in the context and $x$ is the current query. We claim that if $f\bigg(\sum_{x_i\in C} \Phi(x_i), x\bigg) = h(x,C) $ for all $x\in \mathbb{R},C\in \mathbb{R}^{|C|}$, then the output dimension of $\Phi$ grows in context length $|C|$. Suppose this was not the case. If $\Phi's$ output dimension is smaller than $|C|$, then $\Phi$ cannot be a differentiable bijection. As a result, there exists two contexts $C$ and $C'$ of same length for which $\sum_{x_i\in C} \Phi(x_i) = \sum_{x_i\in C'} \Phi(x_i)$. We argue that there exists an $x$ such that $h(x,C)\not=h(x,C')$. This would lead to a contradiction as $f\bigg(\sum_{x_i\in C} \Phi(x_i), x\bigg) = h(x,C) $ for all $x,C$. Without loss of generality, suppose that the smallest value of context $C$ is smaller than that in context $C'$. If $x$ is larger than smallest value of $C$ but lesser than smallest value of $C'$, then $h(x,C') = |C'|$ on the other hand $h(x,C)\leq |C|-1 = |C'|-1$.

\section{Related work}

\paragraph{A brief tour of domain generalization.} \cite{muandet2013domain} developed kernel methods to learn transformations such that the distance between the feature distributions across domains is minimized and the information between the features and the target labels is preserved.  The pioneering work of \cite{dann} proposes a method inspired from generative adversarial networks  to learn feature representations that are similar across domains. \cite{sun2016deep} developed a method based on a natural strategy to match the means and covariances of feature representations across domains. \cite{li2018domain} went a step further to enforce invariance on the distribution of representations conditional on the labels. In a parallel line of work, led by \cite{peters2016causal, rojas2018invariant, irm}, the proposals sought to learn representations such that the distribution of labels conditional on the representation are invariant across domains.
These works were followed by several interesting proposals to enforce invariance -- 
\citep{teney2020unshuffling, krueger2020out, ahuja2020invariant, jin2020enforcing, chang2020invariant,  mahajan2020domain,koyama2020out,muller2020learning,parascandolo2020learning, ahuja2021invariance,  robey2021model, wald2021calibration, chen2022iterative, wang2022provable, zhang2023missing, eastwood2022probable, rame2022fishr, veitch2021counterfactual, makar2022causally} -- which is an incomplete representative list. See \cite{shen2021towards} for a more comprehensive survey of these works. Most of the above works have focused on learning features that enable better generalization. Recently there been an intriguing line of work from \cite{kirichenko2022last,izmailov2022feature} that shifts the focus from feature learning to last layer retraining. These works show that under certain conditions (e.g., avaiability of some data that does not carry spurious correlations) one can carry out last layer retraining and achieve significant out-of-distribution performance improvements. 

In the main body of the paper, we already discussed the other prominent line of work in domain generalization on marginal transfer learning, where the focus is to leverage the distributional features and learn environment specific relationships. This line of work was started by the notable work of \cite{marginal} and has been followed up by several important proposals such as \cite{arm, contextvit}.

\section{Supplementary experimental details and assets disclosure}
\subsection{Assets}
We do not introduce new data in the course of this work. Instead, we use publicly available widely used image datasets for the purposes of benchmarking and comparison.

\subsection{Hardware and setup}
Each experiment was performed on 8 NVIDIA Tesla V100 GPUs with 32GB accelerator RAM for a single training run. The CPUs used were Intel Xeon E5-2698 v4 processors with 20 cores and 384GB RAM. All experiments use the PyTorch deep-learning framework 

\subsection{Datasets}\label{sec:datasets}

\subsubsection{Federated Extended MNIST (FEMNIST)} Building on the Extended MNIST (EMNIST) dataset, which includes images of handwritten uppercase and lowercase alphabets along with digits, FEMNIST enriches this data by attributing each data point to its originating writer. This extension associates each 28$\times$28-sized image in the dataset to one of the 62 classes. In our setup, each writer serves as a distinct environment. We evaluate the performance of each method based on both worst-case and average accuracy across a set of 35 test users, who are distinct from the 262 training users and 50 validation users. Unlabelled data from an environment in this dataset could provide cues about the writing style of the user and disambiguate data points. \\

\subsubsection{Rotated MNIST} We employ a customized version of the MNIST dataset as in ~\cite{arm}. The dataset contains images rotated in increments of 10 degrees, ranging from 0 to 130 degrees. Each degree of rotation constitutes a separate environment, effectively acting as a distinct value. The training set for the two most extreme rotations, 120 and 130 degrees, contains only 108 data points each. For rotations between 90 and 110 degrees, each environment includes 324 data points. The total training set comprises 32,292 points. For evaluation, test images are generated from the MNIST test set, and are duplicated for each environment. Performance metrics include both worst-case and average accuracy across these testing domains. Analogous to FEMNIST, unlabeled samples from an environment within this dataset can assist in distinguishing images that may seem similar due to their rotated orientations.\\

\subsubsection{WILDS Camelyon17} We use the Camelyon17 dataset, part of the WILDS benchmark \citep{koh2021wilds}, which features image patches derived from whole-slide lymph node sections of patients with potential metastatic breast cancer. Each patch is labeled to indicate the presence or absence of a tumor. In our experimental design, each participating hospital is treated as a distinct environment. The dataset is partitioned in alignment with the official WILDS configuration: three hospitals contribute to the training set, a fourth is designated for validation, and the remaining hospital's data is used for testing. \\

\subsubsection{Tiny ImageNet-C} Adapting the methodology from \cite{hendrycks2019benchmarking}, we introduce 56 distinct distortions to the training set, treating each as a separate environment. For evaluation, we use a non-overlapping set of 22 test distortions, largely differing in nature from those used in training. Each 64X64-sized distorted image is associated with one of the 200 classes in the dataset. This setup permits an investigation into whether exposure to distortions during training equips the model to better manage novel distortions during testing. We assess performance through both worst-case and average accuracies across these test distortions.

\subsection{Experimental protocols}\label{sec: experimental setup}
To ensure a fair comparison across different algorithms for each dataset, we use a standardized neural network backbone. The details for these architectures are provided in~\Cref{table:architectures} and \Cref{table:mnist_convnet}. We use the ConvNet architecture as outlined in \cite{arm}. 
\par For \method{}, the same backbone is used to featurize the input, which is then processed by the decoder-only Transformer \citep{attention} architecture from the GPT-2 Transformer family \citep{radford2019language}. Our model is standardized to have 12 layers, 4 attention heads, and a 128-dimensional embedding space across all datasets. Linear layers are employed to map both the input sequence to the transformer's latent embedding and the model's predicted output vector to the output label. For training \method{} on larger datasets like WILDS Camelyon17 and Tiny ImageNet-C, we start with a ResNet50 model pre-trained on ImageNet (as shown in~\Cref{table:architectures}) and freeze all batch normalization layers before fine-tuning. 

\par We adopt the same Context Network as used in ARM, specifically retaining their choice of output channels -- one for smaller datasets like FEMNIST and Rotated MNIST, and three for the others.

\par For TENT, all reported metrics are based on its episodic version, where the model is reset to its trained state after processing each batch. This ensures a fair comparison with other methods. Additionally, during testing, the model's parameters are updated for 10 steps using stochastic gradient descent by minimization test entropy across all datasets.

\vskip 0.5cm
\noindent
\begin{minipage}{0.55\textwidth}
    \captionof{table}{Network architectures for each dataset.}
    \begin{tabular}{ll|l}
    \toprule
    \multirow{2}{*}{\textbf{Dataset}} & \multicolumn{2}{c}{Architecture} \\
    \cmidrule(l){2-3}
     & \method{} & Others\\
     \midrule
    FEMNIST & \multirow{2}{*}{\parbox{3cm}{ConvNet + GPT2 Transformer}} & \multirow{2}{*}{ConvNet} \\
    Rotated MNIST & \\
    \midrule
    Camelyon17 &  \multirow{2}{*}{\parbox{3cm}{ResNet-50 + GPT2 Transformer}} & \multirow{2}{*}{ResNet-50} \\ 
    Tiny ImageNet-C & \\
    \bottomrule
    \end{tabular}
\label{table:architectures}
\end{minipage}
\hfill
\begin{minipage}{0.35\textwidth}
    \captionof{table}{ConvNet architecture for \citep{arm}. We use 2\(\times\)2 kernels and ``same'' padding.}
    \begin{tabular}{ll}
    \toprule
    \textbf{\#} & \textbf{Layer}\\
    \midrule
        1  & Conv2D (in=\(d\), out=128)\\
        2  & BatchNorm2d (dim=129)\\
        3  & ReLU \\
        4  & Max Pooling (2) \\
        5  & Conv2D (in=128, out=128)\\
        6  & BatchNorm2d (dim=128)\\
        7  & ReLU \\
        8  & Max Pooling (2) \\
        9  & Global average-pooling\\
    \bottomrule
    \end{tabular}
\label{table:mnist_convnet}
\end{minipage}

\vskip 0.5cm

We list all hyperparameters, their default settings, and search boundaries for random sweeps in~\Cref{table:hyperparameters}. The maximum context length, or support, is fixed at 100 for all algorithms. All models are optimized using the Adam optimizer \citep{kingma2014adam}. To ensure a fair comparison, we perform a random search of 5 trials across the hyperparameter range (refer to~\Cref{table:hyperparameters}) for each algorithm. The model with the highest validation set accuracy is selected for each run. We then report the average of this number across three independent runs of the entire sweep, and its corresponding standard error.

\begin{table}[!htb]
    \caption{Hyperparameters, their default values and distributions for random search.} 
    \begin{center}
    { 
    \begin{tabular}{llll}
        \toprule
        \textbf{Condition} & \textbf{Parameter} & \textbf{Default value} & \textbf{Random distribution}\\
        \midrule
        \multirow{2}{*}{ResNet}       & learning rate & 0.0001 & $10^{\text{Uniform}(-5, -3.5)}$\\
                                      & weight decay & 0    & $10^{\text{Uniform}(-6, -2)}$\\
        \midrule
        \multirow{2}{*}{not ResNet}   & learning rate & 0.0001 & $10^{\text{Uniform}(-4.5, -2.5)}$\\
                                      & weight decay & 0    & $10^{\text{Uniform}(-6, -2)}$\\
        \bottomrule
    \end{tabular}
    }
    \end{center}
    \label{table:hyperparameters}
\end{table}

\clearpage
\newpage
\section{Additional experiments}
\subsection{Adaptation curves of various algorithms}
\begin{figure}[!htb]
\centering

\begin{minipage}{\textwidth}
\centering
\includegraphics[width=.39\textwidth]{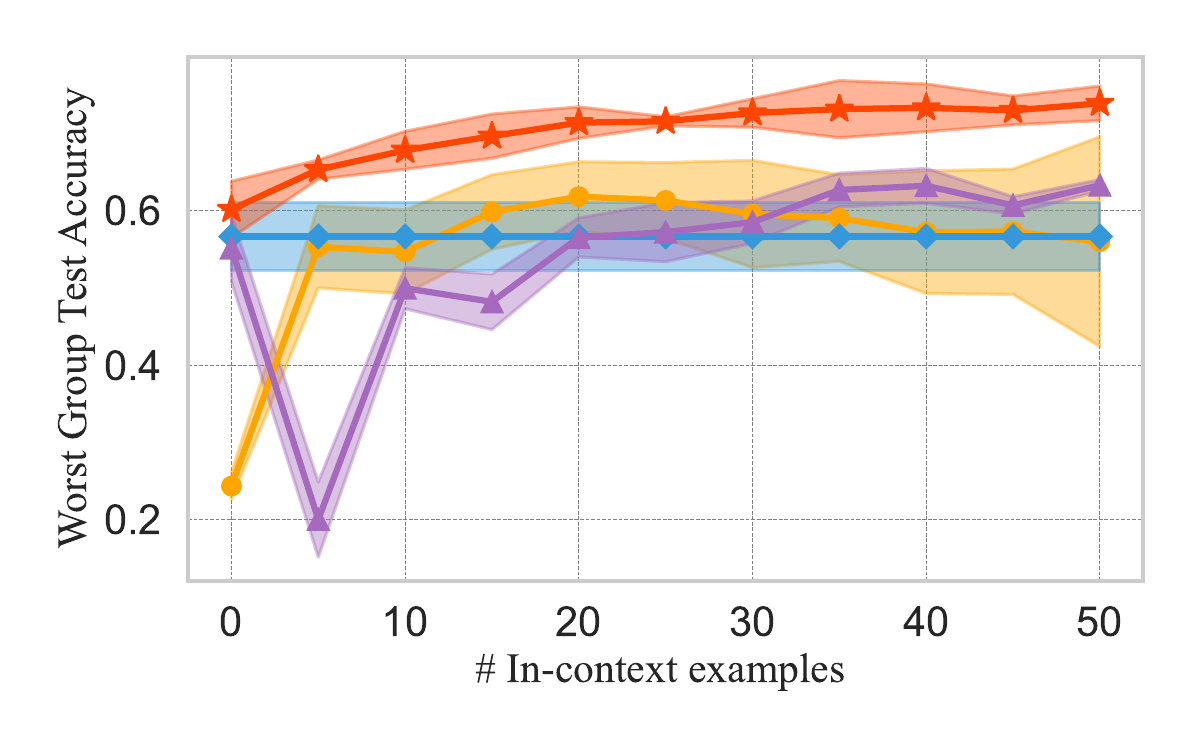}
\includegraphics[width=.39\textwidth]{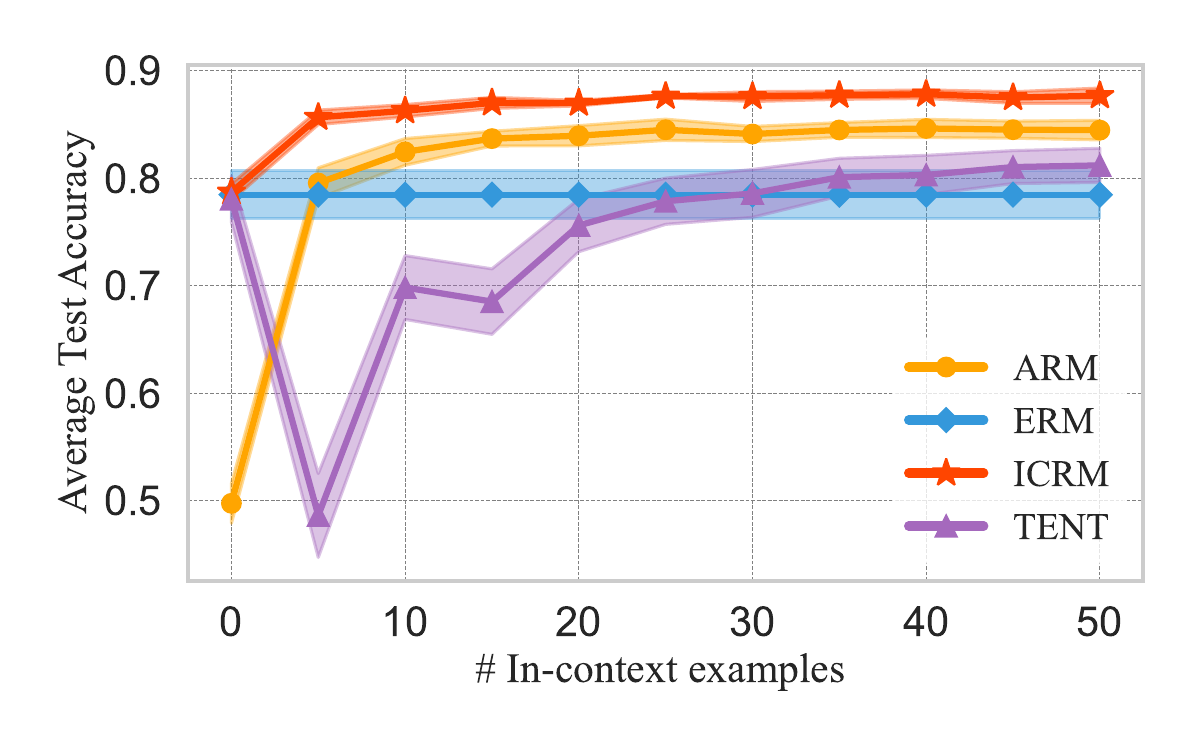}
\end{minipage}
\hfill

\begin{minipage}{\textwidth}
\centering
\includegraphics[width=.39\textwidth]{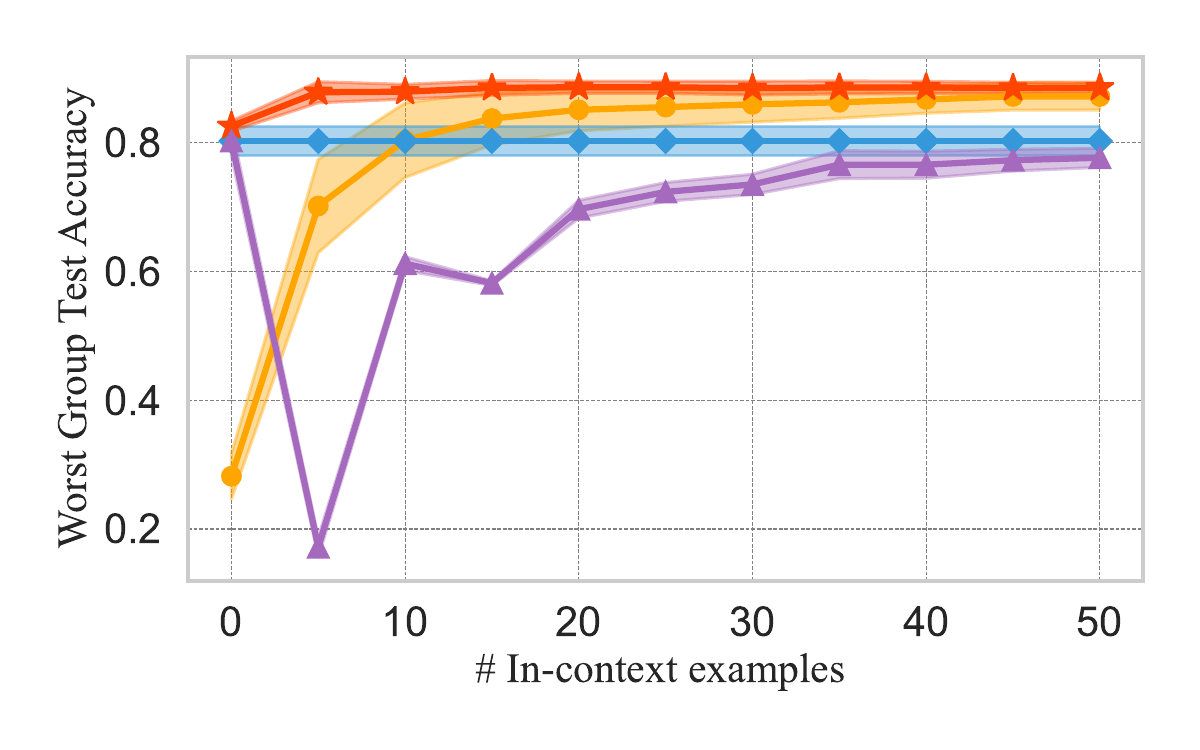}
\includegraphics[width=.39\textwidth]{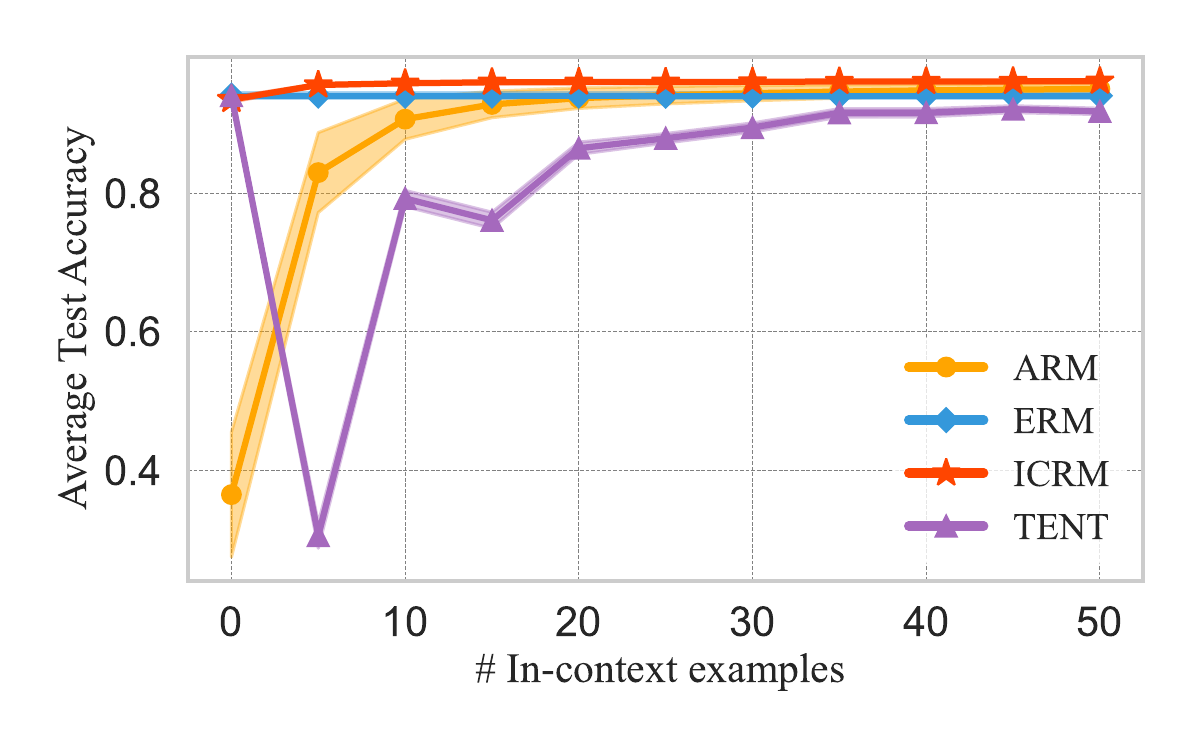}
\end{minipage}
\hfill

\begin{minipage}{\textwidth}
\centering
\includegraphics[width=.39\textwidth]{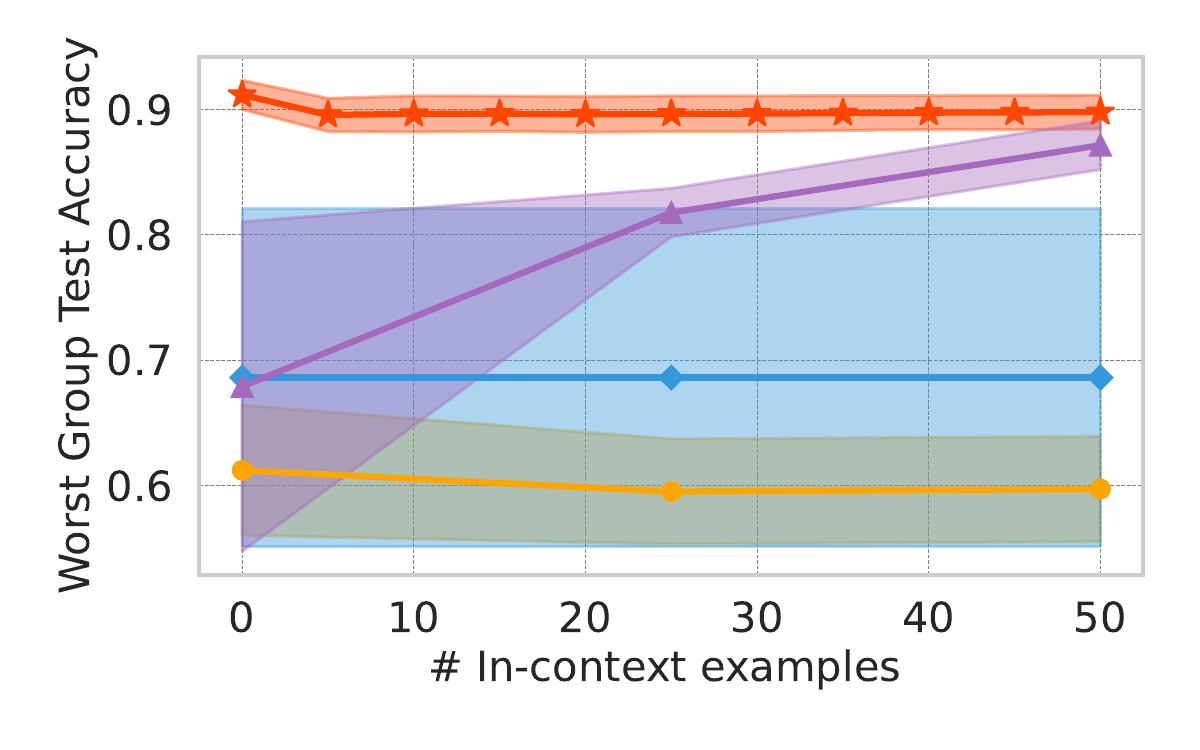}
\includegraphics[width=.39\textwidth]{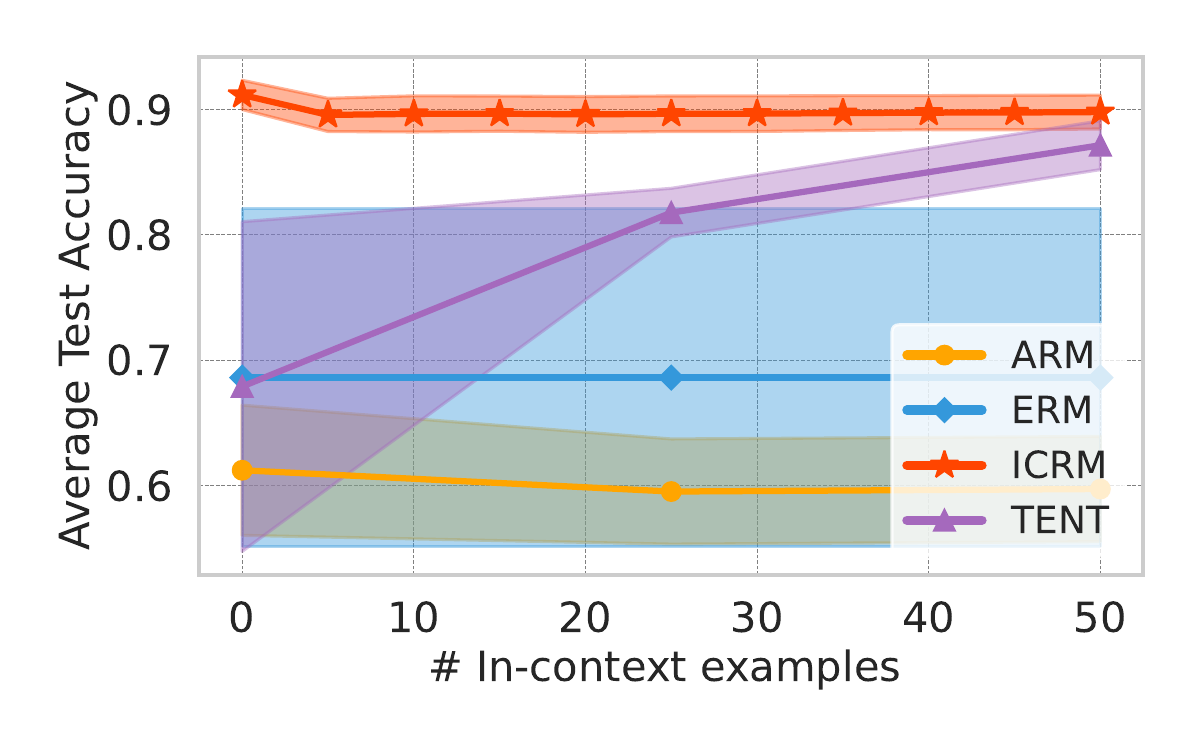}
\end{minipage}
\hfill

\begin{minipage}{\textwidth}
\centering
\includegraphics[width=.39\textwidth]{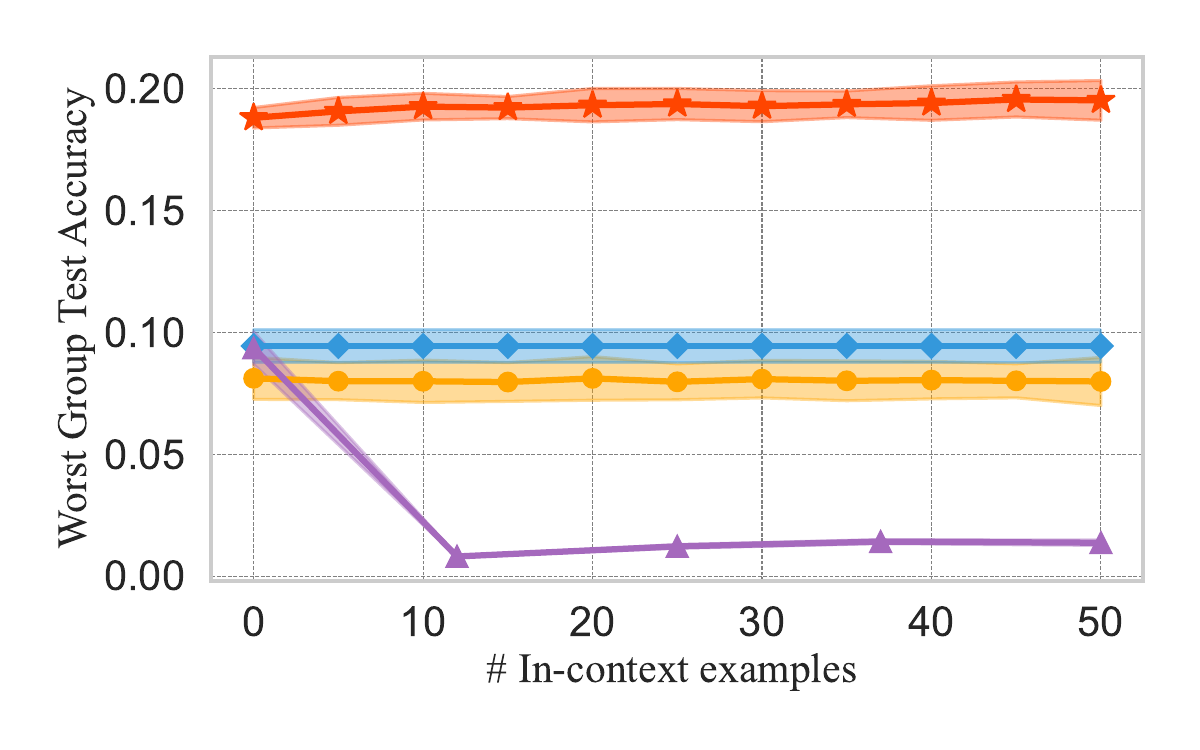}
\includegraphics[width=.39\textwidth]{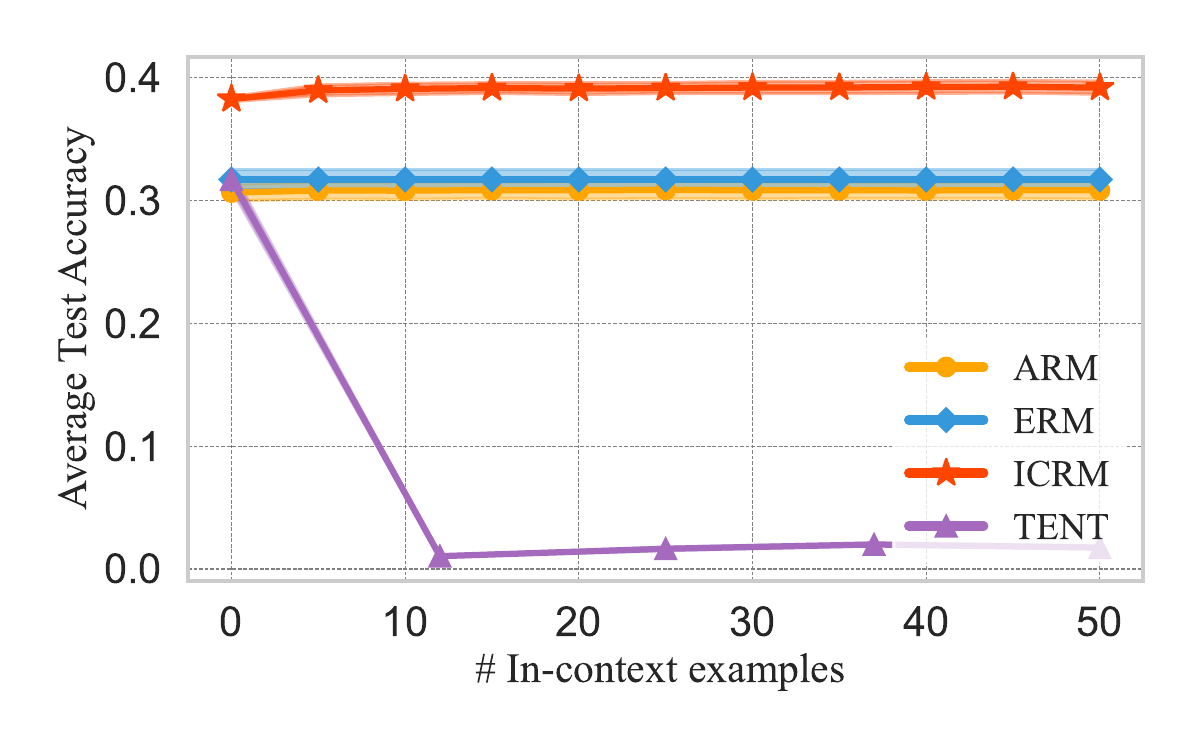}
\end{minipage}
\hfill
\caption{
  Accuracy adaptation curves for worst accuracy (left) and average accuracy (right) across the test environment as a function of increasing count of context samples.
  Showing results in order for FEMNIST(top), RotatedMNIST, WILDS Camelyon17 and Tiny ImageNet-C(bottom).
  The average and worst-case accuracy plots for WILDS Camelyon17 are identical since the dataset contains only a single test environment.
  }\label{fig: mainplots}
\end{figure}

\clearpage
\newpage
\subsection{Domain generalization accuracies per algorithm and dataset}
\begin{table}[htb!]
    \caption{Average out-of-distribution test accuracies along with their corresponding standard errors for various counts of context samples. The methods compared include Adaptive Risk Minimization (ARM), Empirical Risk Minimization (ERM), Test Entropy Minimization (TENT), and our method \method{} on FEMNIST, Rotated MNIST, WILDS Camelyon17 and  Tiny-ImageNet-C.}
    \begin{center}
    \resizebox{0.9\textwidth}{!}{%
    \begin{tabular}{lccccc}
        \toprule
        \textbf{Dataset / algorithm} & \multicolumn{5}{c}{\textbf{Average test accuracy (by \# in-context examples)}} \\
        \coloredMidrule{white}{alternateRowColor}
        \rowcolor{alternateRowColor}
        FEMNIST & 0 & 25 & 50 & 75 & 100  \\
        \coloredBelowRuleSep{white}
        \quad ARM       &  49.5 $\pm$ 1.0 &  83.9 $\pm$ 0.5 &  84.4 $\pm$ 0.5 & 84.7 $\pm$ 0.6& 84.6 $\pm$ 0.3  \\
        \quad TENT  & 78.1 $\pm$ 1.2 & 77.9$\pm$ 1.2 & 81.2 $\pm$ 0.9 & 82.5$\pm$ 0.9 & 83.3 $\pm$ 0.8 \\
        \quad ERM       &  79.3  $\pm$ 0.4  & 79.3 $\pm$ 0.4  & 79.3 $\pm$ 0.4  & 79.3 $\pm$ 0.4 & 79.3 $\pm$ 0.4 \\
        \quad ICRM       &  78.7  $\pm$ 0.5 & 87.2 $\pm$ 0.4 & 87.4 $\pm$ 0.5  & 87.5 $\pm$ 0.2 &  87.8 $\pm$ 0.2 \\
        \coloredMidrule{white}{alternateRowColor}
        \rowcolor{alternateRowColor}
        Rotated MNIST & 0 & 25 & 50 & 75 & 100 \\
        \coloredBelowRuleSep{white}
        \quad ARM       &  36.5 $\pm$ 5.2 & 94.2  $\pm$ 0.7 &  95.1 $\pm$ 0.4 & 95.3  $\pm$ 0.4& 95.5  $\pm$  0.3\\
        \quad TENT & 94.1 $\pm$ 0.3 & 88.0 $\pm$ 0.4 & 91.9 $\pm$ 0.3 & 93.8 $\pm$ 0.2 & 94.3 $\pm$ 0.2 \\
        \quad ERM       & 94.2 $\pm$ 0.3 & 94.2  $\pm$ 0.3& 94.2 $\pm$ 0.3  & 94.2 $\pm$ 0.3 & 94.2 $\pm$ 0.3 \\
        \quad ICRM       &  93.6 $\pm$ 0.2 &   96.1 $\pm$ 0.1 &  96.2 $\pm$ 0.1 & 96.2 $\pm$ 0.1&96.2 $\pm$ 0.1\\
        \coloredMidrule{white}{alternateRowColor}
        \rowcolor{alternateRowColor}
        WILDS Camelyon17 & 0 & 25 & 50 & 75 & 100 \\
        \coloredBelowRuleSep{white}
        \quad ARM       &  61.2 $\pm$ 5.2 &  59.5 $\pm$ 4.2 & 59.7 $\pm$ 4.2  & 59.7 $\pm$ 4.3 & 59.7 $\pm$ 4.2 \\
        \quad TENT       & 67.9 $\pm$ 7.6 & 81.8 $\pm$ 1.1 & 87.2 $\pm$ 1.1 & 89.4  $\pm$ 1.1& 89.4 $\pm$ 1.0   \\
        \quad ERM       &  68.6  $\pm$ 7.8 & 68.6 $\pm$ 7.8  &  68.6  $\pm$ 7.8& 68.6  $\pm$ 7.8& 68.6  $\pm$ 7.8  \\
        \quad ICRM       &  92.0 $\pm$ 0.6 & 90.7  $\pm$ 0.8 & 90.8 $\pm$ 0.8  & 90.8 $\pm$ 0.8 & 90.8  $\pm$ 0.8\\
        \coloredMidrule{white}{alternateRowColor}
        \rowcolor{alternateRowColor}
        Tiny ImageNet-C & 0 & 25 & 50 & 75 & 100 \\
        \coloredBelowRuleSep{white}
        \quad ARM       &  30.8 $\pm$ 0.2 &  31.0 $\pm$ 0.2 &  31.0 $\pm$ 0.2 &  31.0 $\pm$ 0.2 & 31.0 $\pm$ 0.2  \\
        \quad TENT       & 31.7 $\pm$ 0.5 & 1.6 $\pm$ 0.1 & 1.7 $\pm$ 0.1 & 2.0 $\pm$ 0.1 & 2.1 $\pm$ 0.1 \\
        \quad ERM       &  31.8 $\pm$ 0.6 &  31.8 $\pm$ 0.6 & 31.8  $\pm$ 0.6 & 31.8 $\pm$ 0.6 & 31.8 $\pm$ 0.6\\
        \quad ICRM       &  38.3 $\pm$ 0.1 & 39.2  $\pm$ 0.3 & 39.2 $\pm$ 0.3  & 39.2 $\pm$ 0.3 & 39.2 $\pm$ 0.3\\
        \bottomrule
    \end{tabular}
    }
    \end{center}
    \label{table:supp_maintable_avg}
\end{table}

\begin{table}[htb!]
    \caption{Worst environment out-of-distribution test accuracies along with their corresponding standard errors for various counts of context samples. The methods compared include Adaptive Risk Minimization (ARM), Empirical Risk Minimization (ERM), Test Entropy Minimization (TENT), and our method \method{} on FEMNIST, Rotated MNIST, WILDS Camelyon17 and  Tiny-ImageNet-C.}
    \begin{center}
    \resizebox{0.9\textwidth}{!}{%
    \begin{tabular}{lccccc}
        \toprule
        \textbf{Dataset / algorithm} & \multicolumn{5}{c}{\textbf{Worst case test accuracy (by \# in-context examples)}} \\
        \coloredMidrule{white}{alternateRowColor}
        \rowcolor{alternateRowColor}
        FEMNIST &  0 & 25 & 50 & 75 & 100  \\
        \coloredBelowRuleSep{white}
        \quad ARM  & 23.6 $\pm$ 1.7  &  59.5 $\pm$ 3.5 & 60.7 $\pm$ 3.8  & 57.0 $\pm$ 7.3 & 58.8 $\pm$ 4.0 \\
        \quad TENT &55.2 $\pm$ 2.5 & 57.2 $\pm$ 2.2 & 63.3 $\pm$ 0.4 & 65.9  $\pm$ 0.6& 67.2 $\pm$ 1.0\\
        \quad ERM  &59.0  $\pm$ 0.2 & 59.0  $\pm$  0.2& 59.0  $\pm$ 0.2 & 59.0 $\pm$ 0.2  &  59.0 $\pm$ 0.2 \\
        \quad ICRM &  59.8 $\pm$ 0.7 & 69.3 $\pm$ 0.0  & 70.6  $\pm$ 2.3 & 70.6 $\pm$ 1.5 & 70.6 $\pm$ 0.7 \\
        \coloredMidrule{white}{alternateRowColor}
        \rowcolor{alternateRowColor}
        Rotated MNIST & 0 & 25 & 50 & 75 & 100 \\
        \coloredBelowRuleSep{white}
        \quad ARM & 28.2  $\pm$ 2.1 & 85.3  $\pm$ 1.6 & 87.2  $\pm$ 1.0 & 87.9 $\pm$ 1.0 & 87.9 $\pm$ 0.9 \\
        \quad TENT & 80.2 $\pm$ 1.3 & 88.5 $\pm$ 0.8 & 88.5 $\pm$ 0.9 & 80.2 $\pm$ 1.0 & 81.3 $\pm$ 1.0\\
        \quad ERM  &  80.8 $\pm$ 1.1 & 80.8 $\pm$ 1.1  & 80.8 $\pm$ 1.1  & 80.8 $\pm$ 1.1 & 80.8 $\pm$ 1.1  \\
        \quad ICRM &  82.5  $\pm$ 0.5 &  88.5 $\pm$ 0.5 &  88.5 $\pm$ 0.5 & 88.8 $\pm$ 0.5 & 88.8 $\pm$ 0.4 \\
        \coloredMidrule{white}{alternateRowColor}
        \rowcolor{alternateRowColor}
        WILDS Camelyon17 & 0 & 25 & 50 & 75 & 100 \\
        \coloredBelowRuleSep{white}
        \quad ARM       &  61.2 $\pm$ 5.2 &  59.5 $\pm$ 4.2 & 59.7 $\pm$ 4.2  & 59.7 $\pm$ 4.3 & 59.7 $\pm$ 4.2 \\
        \quad TENT       & 67.9 $\pm$ 7.6 & 81.8 $\pm$ 1.1 & 87.2 $\pm$ 1.1 & 89.4  $\pm$ 1.1& 89.4 $\pm$ 1.0   \\
        \quad ERM       &  68.6  $\pm$ 7.8 & 68.6 $\pm$ 7.8  &  68.6  $\pm$ 7.8& 68.6  $\pm$ 7.8& 68.6  $\pm$ 7.8  \\
        \quad ICRM       &  92.0 $\pm$ 0.6 & 90.7  $\pm$ 0.8 & 90.8 $\pm$ 0.8  & 90.8 $\pm$ 0.8 & 90.8  $\pm$ 0.8\\
        \coloredMidrule{white}{alternateRowColor}
        \rowcolor{alternateRowColor}
        Tiny ImageNet-C & 0 & 25 & 50 & 75 & 100 \\
        \coloredBelowRuleSep{white}
        \quad ARM  &  8.2 $\pm$ 0,3  & 8.3 $\pm$ 0.3  &  8.2 $\pm$ 0.3 & 8.3 $\pm$ 0.3 & 8.2 $\pm$ 0.3 \\
        \quad TENT & 1.2 $\pm$0.4   & 1.4 $\pm$ 0.0 & 1.6 $\pm$ 0.1 & 1.6  $\pm$  0.0 & 1.6  $\pm$ 0.0\\
        \quad ERM  & 9.5  $\pm$ 0.4 &  9.5 $\pm$ 0.4 &  9.5 $\pm$ 0.4 &  9.5 $\pm$ 0.4 & 9.5 $\pm$ 0.4  \\
        \quad ICRM  & 18.8 $\pm$ 0.2  &  19.2 $\pm$ 0.1 & 19.5  $\pm$ 0.2 & 19.5 $\pm$ 0.1 & 19.4  $\pm$ 0.2\\
        \bottomrule
    \end{tabular}
    }
    \end{center}
    \label{table:supp_maintable_wo}
\end{table}

\begin{table}[htb!]
    \caption{Average out-of-distribution test accuracies along with their corresponding standard errors for \method{} and ICRM-Mix across FEMNIST, Rotated MNIST, WILDS Camelyon17 and  Tiny-ImageNet-C. ICRM-Mix trains on sequences with samples drawn i.i.d. from the unified dataset comprising various environments.}    
    \begin{center}
    \resizebox{0.9\textwidth}{!}{%
    \begin{tabular}{lccccc}
        \toprule
        \textbf{Dataset / algorithm} & \multicolumn{5}{c}{\textbf{Average test accuracy (by \# in-context examples)}} \\
        \coloredMidrule{white}{alternateRowColor}
        \rowcolor{alternateRowColor}
        FEMNIST & 0 & 25 & 50 & 75 & 100  \\
        \coloredBelowRuleSep{white}
        
        \quad ICRM       &  78.7  $\pm$ 0.5  & 87.2  $\pm$ 0.4& 87.4  $\pm$ 0.5 & 87.5 $\pm$ 0.2 &  87.8 $\pm$ 0.2  \\
        \quad ICRM-Mix  & 77.6  $\pm$ 0.8 & 81.1 $\pm$ 0.2 & 81.1 $\pm$ 0.2 & 80.9 $\pm$ 0.3 &  80.9 $\pm$ 0.1 \\
        \coloredMidrule{white}{alternateRowColor}
        \rowcolor{alternateRowColor}
        Rotated MNIST & 0 & 25 & 50 & 75 & 100 \\
        \coloredBelowRuleSep{white}
        \quad ICRM       &  93.6 $\pm$ 0.2  &   96.1 $\pm$ 0.1 &  96.2 $\pm$ 0.1 & 96.2 $\pm$ 0.1& 96.2 $\pm$ 0.1 \\
        \quad ICRM-Mix   &  88.9  $\pm$ 1.4& 92.6 $\pm$ 0.3 & 92.7 $\pm$ 0.2 & 92.6  $\pm$ 0.3& 92.7 $\pm$ 0.2 \\
        \coloredMidrule{white}{alternateRowColor}
        \rowcolor{alternateRowColor}
        WILDS Camelyon17 & 0 & 25 & 50 & 75 & 100 \\
        \coloredBelowRuleSep{white}
        \quad ICRM       &  92.0  $\pm$ 0.6& 90.7  $\pm$ 0.8 & 90.8  $\pm$ 0.8 & 90.8  $\pm$ 0.8& 90.8 $\pm$ 0.8 \\
        \quad ICRM-Mix   & 92.9  $\pm$ 0.3 & 90.7 $\pm$ 0.6 & 90.8  $\pm$ 0.5 & 90.7  $\pm$ 0.5& 90.7 $\pm$ 0.5 \\
        \coloredMidrule{white}{alternateRowColor}
        \rowcolor{alternateRowColor}
        Tiny ImageNet-C & 0 & 25 & 50 & 75 & 100 \\
        \coloredBelowRuleSep{white}
        \quad ICRM       & 38.3 $\pm$ 0.1 & 39.2 $\pm$ 0.3  & 39.2 $\pm$ 0.3  & 39.2  $\pm$ 0.3& 39.2 $\pm$ 0.3 \\
        \quad ICRM-Mix   & 38.4  $\pm$ 0.2& 39.3 $\pm$ 0.2  & 39.3 $\pm$ 0.2 & 39.3  $\pm$ 0.2& 39.3  $\pm$ 0.2 \\
        \bottomrule
    \end{tabular}
    }
    \end{center}
    \label{table:sup_ablation_iid_icl_avg}
\end{table}

\begin{table}[htb!]
    \caption{Worst environment out-of-distribution test accuracies along with their corresponding standard errors for \method{} and ICRM-Mix across FEMNIST, Rotated MNIST, WILDS Camelyon17 and  Tiny-ImageNet-C. ICRM-Mix trains on sequences with samples drawn i.i.d. from the unified dataset comprising various environments.}    \begin{center}
    \resizebox{0.9\textwidth}{!}{%
    \begin{tabular}{lccccc}
        \toprule
        \textbf{Dataset / algorithm} & \multicolumn{5}{c}{\textbf{Worst case test accuracy (by \# in-context examples)}} \\
        \coloredMidrule{white}{alternateRowColor}
        \rowcolor{alternateRowColor}
        FEMNIST & 0 & 25 & 50 & 75 & 100  \\
        \coloredBelowRuleSep{white}
        
        \quad ICRM  & 59.8 $\pm$ 0.7 & 69.3  $\pm$ 0.0 & 70.6  $\pm$ 2.3 & 70.6 $\pm$ 1.5 & 70.6 $\pm$ 0.7 \\
        \quad ICRM-Mix & 57.5  $\pm$ 1.4& 62.7 $\pm$ 1.1 & 65.0 $\pm$ 0.3 & 64.1 $\pm$ 1.5 & 62.9 $\pm$ 2.3 \\
        \coloredMidrule{white}{alternateRowColor}
        \rowcolor{alternateRowColor}
        Rotated MNIST & 0 & 25 & 50 & 75 & 100  \\
        \coloredBelowRuleSep{white}
        \quad ICRM    &  82.5 $\pm$ 0.5 &  88.5  $\pm$ 0.5&  88.5 $\pm$ 0.5 & 88.8  $\pm$ 0.5& 88.8 $\pm$ 0.4 \\
        \quad ICRM-Mix   & 68.8 $\pm$ 3.8 & 77.1 $\pm$ 0.7 & 76.8 $\pm$ 0.9 & 76.4 $\pm$ 0.9 & 76.6 $\pm$ 0.9  \\
        \coloredMidrule{white}{alternateRowColor}
        \rowcolor{alternateRowColor}
        WILDS Camelyon17 & 0 & 25 & 50 & 75 & 100 \\
        \coloredBelowRuleSep{white}
        \quad ICRM       &  92.0  $\pm$ 0.6& 90.7  $\pm$ 0.8 & 90.8  $\pm$ 0.8 & 90.8  $\pm$ 0.8& 90.8 $\pm$ 0.8 \\
        \quad ICRM-Mix   & 92.9  $\pm$ 0.3 & 90.7 $\pm$ 0.6 & 90.8  $\pm$ 0.5 & 90.7  $\pm$ 0.5& 90.7 $\pm$ 0.5 \\
        \coloredMidrule{white}{alternateRowColor}
        \rowcolor{alternateRowColor}
        Tiny ImageNet-C & 0 & 25 & 50 & 75 & 100 \\
        \coloredBelowRuleSep{white}
        \quad ICRM   & 18.8  $\pm$ 0.2 &  19.2 $\pm$ 0.1 & 19.5 $\pm$ 0.2  & 19.5 $\pm$ 0.1 & 19.4 $\pm$ 0.2 \\
        \quad ICRM-Mix & 18.7 $\pm$ 0.2 & 19.2 $\pm$ 0.2 & 19.4 $\pm$ 0.1 & 19.5 $\pm$ 0.1 & 19.4  $\pm$ 0.1\\
        \bottomrule
    \end{tabular}
    }
    \end{center}
    \label{table:sup_ablation_iid_icl_avg}
\end{table}

\begin{table}[htb!]
    \caption{Average out-of-distribution test accuracies along with their corresponding standard errors for ARM$^{+}$ and ERM$^{+}$ in contrast to their base algorithms, ARM and ERM across FEMNIST, Rotated MNIST, WILDS Camelyon17 and  Tiny-ImageNet-C. }    
    \begin{center}
    \resizebox{0.9\textwidth}{!}{%
    \begin{tabular}{lccccc}
        \toprule
        \textbf{Dataset / algorithm} & \multicolumn{5}{c}{\textbf{Average test accuracy (by \# in-context examples)}} \\ \midrule
        \rowcolor{alternateRowColor}
        FEMNIST & 0 & 25 & 50 & 75 & 100 \\
        \coloredBelowRuleSep{white}
        \quad ARM       &  49.5 $\pm$ 1.0 &  83.9 $\pm$ 0.5 &  84.4 $\pm$ 0.5 & 84.7  $\pm$ 0.6& 84.6 $\pm$ 0.3  \\
        \quad ARM$^{+}$ & 71.4 $\pm$ 1.2 & 83.4 $\pm$ 0.2 &  84.0 $\pm$ 0.2 & 83.8 $\pm$ 0.2 & 83.5 $\pm$ 0.1 \\
        \midrule
        \quad ERM  &  79.3  $\pm$ 0.4 & 79.3 $\pm$ 0.4  & 79.3 $\pm$ 0.4  & 79.3 $\pm$ 0.4 & 79.3 $\pm$ 0.4 \\
        \quad ERM$^{+}$ & 77.4  $\pm$ 1.3& 77.4 $\pm$ 1.3 & 77.4 $\pm$ 1.3 & 77.4 $\pm$ 1.3 & 77.4 $\pm$ 1.3 \\
        \coloredMidrule{white}{alternateRowColor}
        \rowcolor{alternateRowColor}
        Rotated MNIST & 0 & 25 & 50 & 75 & 100 \\
        \coloredBelowRuleSep{white}
        \quad ARM       &  36.5 $\pm$ 5.2 & 94.2 $\pm$ 0.7  &  95.1 $\pm$ 0.4 & 95.3 $\pm$ 0.4 & 95.5 $\pm$ 0.3\\
        \quad ARM$^{+}$   & 86.9  $\pm$ 2.0& 92.6 $\pm$ 0.7 & 92.7  $\pm$ 0.6& 92.8  $\pm$ 0.6& 92.8 $\pm$ 0.6 \\
        \midrule
        \quad ERM       & 94.2 $\pm$ 0.3 & 94.2 $\pm$ 0.3 & 94.2 $\pm$ 0.3  & 94.2 $\pm$ 0.3 & 94.2 $\pm$ 0.3\\
        \quad ERM$^{+}$ & 94.3 $\pm$ 0.4 & 94.3 $\pm$ 0.4 & 94.3 $\pm$ 0.4  & 94.3 $\pm$ 0.4 & 94.3 $\pm$ 0.4   \\
        \coloredMidrule{white}{alternateRowColor}
        \rowcolor{alternateRowColor}
        WILDS Camelyon17 & 0 & 25 & 50 & 75 & 100 \\
        \coloredBelowRuleSep{white}
         \quad ARM       &  61.2 $\pm$ 5.2 &  59.5 $\pm$ 4.2 & 59.7 $\pm$ 4.2  & 59.7 $\pm$ 4.3 & 59.7 $\pm$ 4.2 \\
        \quad ARM$^{+}$ & 55.8  $\pm$ 0.8& 55.1 $\pm$ 1.7 & 55.0  $\pm$ 1.7& 55.0 $\pm$ 1.8 & 55.0 $\pm$ 1.8 \\
        \midrule
        \quad ERM       &  68.6   $\pm$ 7.8& 68.6 $\pm$ 7.8  &  68.6 $\pm$ 7.8 & 68.6 $\pm$ 7.8 & 68.6 $\pm$ 7.8 \\
        \quad ERM$^{+}$ & 50.1 $\pm$ 0.1 & 50.1 $\pm$ 0.1 & 50.1 $\pm$ 0.1 & 50.1  $\pm$ 0.1& 50.1 $\pm$ 0.1 \\
        \coloredMidrule{white}{alternateRowColor}
        \rowcolor{alternateRowColor}
        Tiny ImageNet-C & 0 & 25 & 50 & 75 & 100 \\
        \coloredBelowRuleSep{white}
        \quad ARM       &  30.8  $\pm$ 0.2&  31.0 $\pm$ 0.2 &  31.0 $\pm$ 0.2 &  31.0 $\pm$ 0.2 & 31.0 $\pm$ 0.2 \\
        \quad ARM$^{+}$ & 5.5 $\pm$ 0.2 &5.7  $\pm$ 0.2 & 5.7 $\pm$ 0.2 & 5.7  $\pm$ 0.2& 5.7 $\pm$ 0.2 \\
        \midrule
        \quad ERM       &  31.8 $\pm$ 0.6 &  31.8 $\pm$ 0.6 & 31.8  $\pm$ 0.6 & 31.8  $\pm$ 0.6& 31.8 $\pm$ 0.6\\
        \quad ERM$^{+}$ & 29.7  $\pm$ 0.3& 29.7 $\pm$ 0.3 & 29.7 $\pm$ 0.3 & 29.7 $\pm$ 0.3 & 29.7 $\pm$ 0.3\\
        \bottomrule
    \end{tabular}
    }
    \end{center}
    \label{table:sup_ablation_pluses_avg}
\end{table}

\begin{table}[htb!]
    \caption{Worst environment out-of-distribution test accuracies along with their corresponding standard errors for ARM$^{+}$ and ERM$^{+}$ in contrast to their base algorithms, ARM and ERM across FEMNIST, Rotated MNIST, WILDS Camelyon17 and  Tiny-ImageNet-C. }  
    \begin{center}
    \resizebox{0.9\textwidth}{!}{%
    \begin{tabular}{lccccc}
        \toprule
        \textbf{Dataset / algorithm} & \multicolumn{5}{c}{\textbf{Worst case test accuracy (by \# in-context examples)}} \\
        \coloredMidrule{white}{alternateRowColor}
        \rowcolor{alternateRowColor}
        FEMNIST & 0 & 25 & 50 & 75 & 100 \\
        \coloredBelowRuleSep{white}
        \quad ARM & 23.6  $\pm$ 1.7 &  59.5 $\pm$ 3.5 & 60.7 $\pm$ 3.8  & 57.0 $\pm$ 7.3  & 58.8 $\pm$ 4.0 \\
        \quad ARM$^{+}$ & 51.7 $\pm$ 2.2 & 63.0 $\pm$ 2.1 & 64.0 $\pm$ 0.8 & 60.7 $\pm$ 1.6 & 62.0 $\pm$ 0.8 \\
        \midrule
        \quad ERM  &  59.0 $\pm$ 0.2  & 59.0 $\pm$ 0.2 & 59.0 $\pm$ 0.2 & 59.0 $\pm$ 0.2 &  59.0 $\pm$ 0.2\\
        \quad ERM$^{+}$ & 53.3 $\pm$ 2.7 & 53.3 $\pm$ 2.7  & 53.3 $\pm$ 2.7 & 53.3 $\pm$ 2.7 & 53.3 $\pm$ 2.7 \\
        \coloredMidrule{white}{alternateRowColor}
        \rowcolor{alternateRowColor}
        Rotated MNIST & 0 & 25 & 50 & 75 & 100 \\
        \coloredBelowRuleSep{white}
        \quad ARM  &  28.2 $\pm$ 2.1 & 85.3 $\pm$ 1.6  & 87.2 $\pm$ 1.0  & 87.9 $\pm$ 1.0 & 87.9 $\pm$ 0.9 \\
        \quad ARM$^{+}$ & 71.4  $\pm$ 2.6&  80.9 $\pm$ 1.8 & 81.0 $\pm$ 1.8  & 81.2 $\pm$ 1.9 & 81.1 $\pm$ 1.8\\
        \midrule
        \quad ERM  &  80.8  $\pm$ 1.1& 80.8 $\pm$ 1.1  & 80.8 $\pm$ 1.1  & 80.8 $\pm$ 1.1 & 80.8 $\pm$ 1.1  \\
        \quad ERM$^{+}$ &  81.9 $\pm$ 0.7 & 81.9  $\pm$ 0.7 & 81.9  $\pm$ 0.7 & 81.9 $\pm$ 0.7 & 81.9 $\pm$ 0.7  \\
        \coloredMidrule{white}{alternateRowColor}
        \rowcolor{alternateRowColor}
        WILDS Camelyon17 & 0 & 25 & 50 & 75 & 100 \\
        \coloredBelowRuleSep{white}
         \quad ARM       &  61.2 $\pm$ 5.2 &  59.5 $\pm$ 4.2 & 59.7 $\pm$ 4.2  & 59.7 $\pm$ 4.3 & 59.7 $\pm$ 4.2 \\
        \quad ARM$^{+}$ & 55.8  $\pm$ 0.8& 55.1 $\pm$ 1.7 & 55.0  $\pm$ 1.7& 55.0 $\pm$ 1.8 & 55.0 $\pm$ 1.8 \\
        \midrule
        \quad ERM       &  68.6   $\pm$ 7.8& 68.6 $\pm$ 7.8  &  68.6 $\pm$ 7.8 & 68.6 $\pm$ 7.8 & 68.6 $\pm$ 7.8 \\
        \quad ERM$^{+}$ & 50.1 $\pm$ 0.1 & 50.1 $\pm$ 0.1 & 50.1 $\pm$ 0.1 & 50.1  $\pm$ 0.1& 50.1 $\pm$ 0.1 \\
        \coloredMidrule{white}{alternateRowColor}
        \rowcolor{alternateRowColor}
        Tiny ImageNet-C & 0 & 25 & 50 & 75 & 100  \\
        \coloredBelowRuleSep{white}
        \quad ARM   & 8.2  $\pm$ 0.3 & 8.3 $\pm$ 0.3  &  8.2  $\pm$ 0.3& 8.3  $\pm$ 0.3& 8.2 $\pm$ 0.3 \\
        \quad ARM$^{+}$  & 1.9 $\pm$ 0.1 & 1.9 $\pm$ 0.1 & 1.9 $\pm$ 0.1 & 1.9 $\pm$ 0.1 & 1.9 $\pm$ 0.1  \\
        \midrule
        \quad ERM   & 9.5  $\pm$ 0.4 &  9.5  $\pm$ 0.4&  9.5  $\pm$ 0.4&  9.5 $\pm$ 0.4 & 9.5 $\pm$ 0.4  \\
        \quad ERM$^{+}$  & 8.3 $\pm$ 0.3 & 8.3 $\pm$ 0.3 & 8.3  $\pm$ 0.3& 8.3 $\pm$ 0.3 & 8.3 $\pm$ 0.3 \\
        \bottomrule
    \end{tabular}
    }
    \end{center}
    \label{table:sup_ablation_pluses_wo}
\end{table}

\end{document}